\newtheorem{theorem}{Theorem}
\newtheorem{definition}[theorem]{Definition}
\newtheorem{lemma}[theorem]{Lemma}
\newtheorem{proposition}[theorem]{Proposition}
\newtheorem{assumption}{Assumption}
\newcommand{\argmax}{\operatornamewithlimits{argmax}}
\newcommand{\eps}{\epsilon}
\def\E{\mathbb{E}}
\def\R{\mathbb{R}}
\def\Z{\mathbb{Z}}
\def\P{\mathbb{P}}
\def\eps{\epsilon}
\def\Bin{\mathrm{Bin}}
  \providecommand\BibTeX{{%
    \normalfont B\kern-0.5em{\scshape i\kern-0.25em b}\kern-0.8em\TeX}}}
\begin{document}

\title[RL with Latent Low-Rank Structure]{Overcoming the Long Horizon Barrier for Sample-Efficient Reinforcement Learning with Latent Low-Rank Structure}

\author{Tyler Sam}
\affiliation{%
  \institution{School of Operations Research and Information Engineering, Cornell University}
\country{USA}} 

\author{Yudong Chen}
\affiliation{%
  \institution{Department of Computer Sciences, University of Wisconsin-Madison} \country{USA}
  }

\author{Christina Lee Yu}
\affiliation{%
   \institution{School of Operations Research and Information Engineering, Cornell University} \country{USA}
}


\begin{abstract}
  The practicality of reinforcement learning algorithms has been limited due to poor scaling with respect to the problem size, as the sample complexity of learning an $\epsilon$-optimal policy is $\Tilde{\Omega}\left(|S||A|H^3 / \epsilon^2\right)$ over worst case instances of an MDP with state space $S$, action space $A$, and horizon $H$. We consider a class of MDPs for which the associated optimal $Q^*$ function is low rank, where the latent features are unknown. While one would hope to achieve linear sample complexity in $|S|$ and $|A|$ due to the low rank structure, we show that without imposing further assumptions beyond low rank of $Q^*$, if one is constrained to estimate the $Q$ function using only observations from a subset of entries, there is a worst case instance in which one must incur a sample complexity exponential in the horizon $H$ to learn a near optimal policy. We subsequently show that under stronger low rank structural assumptions, given access to a generative model, Low Rank Monte Carlo Policy Iteration (LR-MCPI) and Low Rank Empirical Value Iteration (LR-EVI) achieve the desired sample complexity of $\Tilde{O}\left((|S|+|A|)\mathrm{poly}(d,H)/\epsilon^2\right)$ for a rank $d$ setting, which is minimax optimal with respect to the scaling of $|S|, |A|$, and $\eps$. In contrast to literature on linear and low-rank MDPs, we do not require a known feature mapping, our algorithm is computationally simple, and our results hold for long time horizons. Our results provide insights on the minimal low-rank structural assumptions required on the MDP with respect to the transition kernel versus the optimal action-value function.
\end{abstract}

\begin{CCSXML}
<ccs2012>
   <concept>
       <concept_id>10010147</concept_id>
       <concept_desc>Computing methodologies</concept_desc>
       <concept_significance>500</concept_significance>
       </concept>
   <concept>
       <concept_id>10010147.10010257.10010258.10010261.10010272</concept_id>
       <concept_desc>Computing methodologies~Sequential decision making</concept_desc>
       <concept_significance>500</concept_significance>
       </concept>
 </ccs2012>
\end{CCSXML}

\ccsdesc[500]{Computing methodologies}
\ccsdesc[500]{Computing methodologies~Sequential decision making}

\keywords{  Reinforcement Learning, Low-Rank Matrix Estimation}

\setcopyright{acmlicensed}
\acmJournal{POMACS}
\acmYear{2023} \acmVolume{7} \acmNumber{2} \acmArticle{29} \acmMonth{6} \acmPrice{15.00}\acmDOI{10.1145/3589973}

\maketitle

\clearpage

\section{Introduction}
\label{sec:introduction}

Reinforcement learning (RL) methods have been increasingly popular in sequential decision making tasks due to their empirical success, e.g., Atari Games \citep{Mnih2015HumanlevelCT}, StarCraft II \citep{Vinyals2019GrandmasterLI}, and robotics \citep{li2019reinforcement}. RL algorithms can be applied to any sequential decision making problem which can be modeled by a Markov decision process (MDP) defined over a state space $S$ and an action space $A$. The agent interacts with the environment across a horizon of length $H$. In each step of the horizon, the agent observes the current state of the environment and takes an action. In response the environment returns an instantaneous reward and transitions to the next state. The key Markov property that the dynamics of an MDP must satisfy is that the distribution of the instantaneous reward and the next state is only a function of the current state and action. As a result it is sufficient for the agent to only consider policies that define a distribution over the actions given the current state of the environment. The goal of the agent is to find an optimal policy which maximizes its cumulative expected reward over the horizon. When the dynamics and reward function of the MDP are known in advance, that can be solved directly using dynamic programming. Reinforcement learning considers the setting in which the MDP dynamics are unknown and thus the algorithm must query from the MDP to both learn the model as well as find an optimal policy.

Despite the empirical success and popularity of RL, its usage in practical applications is limited by the high data sampling costs in the training process, resulting from poor scaling of RL algorithms with respect to the size of the state and action spaces. Given a finite-horizon homogeneous MDP with state space $S$, action space $A$, and horizon $H$,  one needs $\Tilde{\Omega}\left(|S||A|H^3/\eps^2\right)$ samples given a generative model to learn an optimal policy \citep{sidford2019nearoptimal}. The required number of samples is often too large as many real-world problems when modeled as a Markov decision process (MDP) have very large state and action spaces. For example, the $n$-city Capacitated Vehicle Routing Problem (CVRP), a  classical combinatorial problem from operations research, involves a state space $\{0,1\}^n$ and an action space being all partial permutations of $n-1$ cities \citep{DBLP:conf/nips/DelarueAT20}. 

A key function that is used in the course of solving for an optimal policy is the $Q^{\pi}$ function, which is also referred to as the action-value function of policy $\pi$. It is defined over steps $h \in [H]$, states $s \in S$, and actions $a \in A$. $Q^{\pi}_h(s,a)$ represents the expected cumulative reward that an agent would collect if it were at state $s$ at step $h$, took action $a$, and subsequently followed the policy $\pi$ for all future steps until the end of the horizon. When the state and action space are finite, the $Q_h^{\pi}$ function can be represented as a $|S|\times|A|$ matrix. The $Q$ function associated to the optimal policy is denoted by $Q^*$. Given $Q_h^*$, the optimal policy at step $h$ is trivial to find as it would follow from simply choosing the action that optimizes $Q_h^*$ for each state. Many RL algorithms rely on estimating the $Q_h^*$ functions across all state action pairs in order to find a near optimal policy, resulting in the $|S||A|$ sample complexity dependence. Furthermore, the tight lower bound also suggests one may need to estimate the full $Q_h^*$ function to find the optimal policy in worst case MDPs.

\medskip \noindent
{\bf MDPs with Low Rank Structures.}
A glaring limitation of general purpose RL algorithms is that they do not exploit application dependent structure that may be known in advance. Many real-world systems in fact have additional structure that if exploited should improve computational and statistical efficiency. The critical question becomes what structure is reasonable to assume, and how to design new algorithms and analyses to efficiently exploit it. In this work, we focus on the subclass of MDPs that exhibit latent low-dimensional structure with respect to the relationship between states and actions, e.g., $Q_h^*$ is low rank when viewed as a $|S|$-by-$|A|$ matrix. A sufficient but not necessary condition that would result in such a property is that  the transition kernel has low Tucker rank when viewed as a $|S|$-by-$|S|$-by-$|A|$ tensor of state transition probabilities, and the expected instantaneous reward function can be represented as a $|S|$-by-$|A|$ low rank matrix.

While low rank structure has been extensively used in the matrix and tensor estimation literature, it has not been widely studied in the RL literature, \tsedit{except for the theoretical results from \cite{serl} and empirical results from} \cite{Yang2020Harnessing, lrVfa, mlrvfa}. However, we will give examples at the end of Section \ref{sec:asm} to illustrate that this property is in fact quite widespread and common in many real world systems.
%
While the sample complexity under the fully general model scales as $|S||A|$, we would expect that the sample complexity under a rank-$d$ model would scale as $d (|S| + |A|)$, as the low rank assumption on a matrix reduces the degrees of freedom of $Q_h^*$ from $|S||A|$ to $d (|S| + |A|)$. Even though this intuition holds true in the classical low rank matrix estimation setting, the additional dynamics of the MDP introduce complex dependencies that may amplify the error for long horizons. The work in~\cite{serl} proposes an algorithm that learns an $\eps$-optimal $Q$ function under a low rank assumption on $Q^*_h$, resulting in a sample complexity of $\Tilde{O}(\mathrm{poly}(d)(|S|+|A|)\mathrm{exp}(H)/\eps^2)$ in the general finite-horizon MDP setting. While they do achieve the reduction from $|S||A|$ to $\mathrm{poly}(d)(|S|+|A|)$, they have an exponential dependence on the horizon that arises from an amplification of the estimation error due to the MDP dynamics and nonlinearity of low rank matrix estimation. A key contribution of this work is to characterize conditions under which we are able to achieve both linear sample complexity on $|S|$ and $|A|$ along with polynomial dependence on $H$.

The term ``low rank'' has been used to describe other types of low dimensional models in the MDP/RL literature, especially in the context of linear function approximation, and we would like to clarify up front that these models are significantly different. In particular, the typical use of ``low rank MDPs'' refers to an assumption that the transition kernel when viewed as a tensor is low rank with respect to the relationship between the originating state action pair $(s,a)$ and the destination state $s'$. This implies that the relationship across time for a given trajectory exhibits a latent low dimensional structure in that the relationship between the future state and the previous state and action pair is mediated through low dimensional dynamics. However, this assumption does not imply that the $Q$ function is low rank when viewed as a matrix, which would imply a low dimensional relationship between the current state and the action taken at that state. Another assumption which is easily confused with ours is the assumption that $Q^*$ is linearly-realizable. This implies that $Q^*$ can be written as a linear combination of $d$ matrices $\{\phi_{\ell}\}_{\ell \in [d]}$ (each of size $|S|$ by $|A|$). While this implies that the set of plausible $Q^*$ lives in a low dimensional space parameterized by $\{\phi_{\ell}\}_{\ell \in [d]}$, this does not imply that $Q^*$ is low rank with respect to the relationship between $S$ and $A$. The guarantees for RL algorithms under low rank MDP and linearly-realizable $Q^*$ structure either require prior knowledge of the feature representation as given by $\{\phi_{\ell}\}_{\ell \in [d]}$, or otherwise do not admit polynomial time algorithms. While assuming a priori knowledge of the feature representation is often restrictive and unlikely in real applications, this assumption enables a reduction to supervised learning such that the sample complexity no longer depends on the size of the state and action space, but only on the dimension of the representation. The low rank structure we assume in this work does not require any knowledge of the latent low dimensional representation, but as a result the optimal sample complexity necessarily must still scale linearly with the size of the state and action space.

\medskip
\noindent
{\bf Our Contributions.} 
We identify sufficient low-rank structural assumptions that allow for computationally and statistically efficient learning, reducing the sample complexity bounds to scale only linearly in $|S|, |A|$ and \emph{polynomially} in $H$ (as opposed to exponential in $H$ in \cite{serl} or $|S||A|$ in the general tabular MDP setting). 
First, we show that there are additional complexities that arise from MDPs with \emph{long horizons}; we provide an example where the optimal action-value function $Q^*$ is low rank, yet the learner must observe an exponential (in $H$) number of samples to learn a near optimal policy when exploiting the low-rank structure of $Q^*$. This lower bound illustrates that exploiting low rank structure in RL is significantly more involved than classical matrix estimation. We propose a new computationally simple model-free algorithm, referred to as Low Rank Monte Carlo Policy Iteration (LR-MCPI). Under the assumption that $Q^*$ is low rank, by additionally assuming a constant suboptimality gap, we prove that LR-MCPI achieves the desired sample complexity, avoiding the exponential error amplification in the horizon. Additionally we prove that LR-MCPI also achieves the desired sample complexity when all $\eps$-optimal policies $\pi$ have low rank $Q^{\pi}$ functions. Under the stronger assumption that the transition kernel and reward function have low rank, we show that the model-free algorithm in \cite{serl}, which we refer to as Low Rank Empirical Value Iteration (LR-EVI), also achieves the desired sample complexity. Table \ref{tbl:results} summarizes our sample complexity bounds in their corresponding settings, and compares them with existing results from literature in the tabular finite-horizon MDP setting; here $d$ refers to the rank parameter.\tablefootnote{The sample complexity bounds of Theorems \ref{thm:gap}, \ref{thm:qnolr}, and \ref{thm:tklr} presented in the table hide terms that are properties of the matrix, which are constant under common regularity assumptions (and will be discussed in later sections) and terms independent of $|S|$ or $|A|$.} 
\begin{table}[h]
    \centering
    \begin{tabular}{|c|c|}
    \hline
        MDP Assumptions & Sample Complexity \\
         \hline
         Low-rank $Q^*_h$ \& suboptimality gap $\Delta_{\min} > 0$ (Theorem~\ref{thm:gap})& $\Tilde{O}\left(\frac{d^3(|S|+|A|)H^4}{\Delta_{\min}^2} \right)$\\
         $\eps$-optimal policies have low-rank $Q^\pi_h$ (Theorem~\ref{thm:qnolr}) & $\Tilde{O}\left(\frac{d^3(|S|+|A|)H^6}{\eps^2}\right)$ \\
         Transition kernels and rewards are low-rank (Theorem~\ref{thm:tklr}) & $\Tilde{O}\left(\frac{d^3(|S|+|A|)H^5}{\eps^2}\right)$\\
         \hline
         Low-rank $Q^*_h$ \& constant horizon  \citep{serl} &$\Tilde{O}\left(\frac{d^5(|S|+|A|)}{\eps^2}\right)$ \\
         Tabular MDP with homogeneous rewards \citep{sidford2019nearoptimal} & $\Tilde{\Theta}\left(\frac{|S||A|H^3}{\eps^2}\right)$\\
         \hline
    \end{tabular}
    \caption{Our sample complexity bounds alongside results from the literature, where $d$ denotes the rank. }
    \label{tbl:results}
\end{table}

We extend our results to approximately low-rank MDPs, for which we show that our algorithm learns action-value functions with error $\eps + O(H^2\xi)$, where $\xi$ is the rank-$d$ approximation error, with an efficient number of samples. Furthermore, we empirically validate the improved efficiency of our low-rank algorithms. In the appendix, we show that our algorithm learns near-optimal action-value functions in a sample-efficient manner in the continuous setting, similar to the results in the table above. Finally, we prove that using existing convex program based matrix estimation methods instead of the one in \cite{serl} also achieves the desired reduction in sample complexity.


\section{Related Work}\label{sec:relatedwork}

\medskip \noindent
{\bf Tabular Reinforcement Learning.}
Sample complexity bounds for reinforcement learning algorithms in the tabular MDP setting have been studied extensively, e.g., \cite{DBLP:conf/colt/AgarwalKY20, Yang2021QlearningWL, NIPS2015_309fee4e, li2021qlearning}. Even with a generative model, $\Omega\left(|S||A| / \eps^{2}(1-\gamma)^{3}\right)$ samples are necessary to estimate an $\eps$-optimal action-value function \cite{azarLowerBound}. The work \cite{sidford2019nearoptimal} presents an algorithm and associated analysis that achieves a matching upper bound on the sample complexity (up to logarithmic factors), proving that the lower bound is tight.
Our work focuses on decreasing the sample complexity's dependence on $|S|$ and $|A|$ from $|S||A|$ to $|S|+|A|$ under models with a low-rank structure. 

\medskip \noindent
{\bf Complexity Measures for RL with General Function Approximation.} 
The search for the most general types of structure that allow for sample-efficient reinforcement learning has resulted in many different complexity measures, including Bellman rank \citep{pmlr-v70-jiang17c, pmlr-v125-dong20a}, witness rank \citep{Sun2019ModelbasedRI}, Bellman Eluder dimension \citep{jin2021bellman}, and Bilinear Class \citep{ pmlr-v139-du21a}.
For these classes of MDPs with rank $d$, finding an $\eps$-optimal policy requires $\Tilde{O}\left(\mathrm{poly}(d, H)/\eps^2 \right)$ samples. Unfortunately, these complexity measures are so broad that the resulting algorithms that achieve sample efficiency are often not polynomial time computable, and they rely on strong optimization oracles in general, e.g., assuming that we can solve a high dimensional non-convex optimization problem. We remark that our settings, including those under our strongest assumptions, cannot be easily incorporated into those frameworks.

\medskip \noindent
{\bf  Linear Function Approximation - Linear Realizability and Low Rank MDPs.} 
To combat the curse of dimensionality, there is an active literature that combines linear function approximation with RL algorithms. As mentioned in the introduction, although these models are referred to as ``low rank'', they are significantly different than the type of low rank structure that we consider in our model. Most notably, the resulting $Q^*$ matrix may not be low rank. As a result we only provide a brief overview of the results in this literature, largely to illustrate the types of properties that one would hope to study for our type of low rank model.
One model class in this literature assumes that $Q^*$ is linearly-realizable with respect to a known low dimensional feature representation, given by a known feature extractor $\phi: S\times A \to \R^d$ for $d \ll |S|, |A|$. \cite{wang2021exponential,weisz2021exponential} show that an exponential number of samples in the minimum of the dimension $d$ or the time horizon $H$ may still be required under linear realizability, implying that additionally assumptions are required. These results highlight an interesting phenomenon that the dynamics of the MDP introduce additional complexities for linear function approximation in RL settings that are not present in supervised learning.

A more restrictive model class, sometimes referred to as \emph{Linear/Low-rank MDPs}, imposes linearity on the dynamics of the MDP itself, i.e. the transition kernels and reward functions are linear with respect to a known low dimensional feature extractor $\phi$ \citep{Jin2020ProvablyER, yang2019sampleoptimal, pmlr-v119-yang20h, wang2021sampleefficient, Hao2021SparseFS}. As this does not impose structure on the relationship between $s$ and $a$, the resulting $Q$ functions may not be low rank. When the feature extractor is known, there are algorithms that achieve sample complexity or regret bounds that are polynomial in $d$ with no dependence on $|S|$ or $|A|$.
There have been attempts to extend these results to a setting where the feature mapping is not known \cite{flambe, moffle, uehara2021representation},
however the resulting algorithms are not polynomial time, as they require access to a strong nonconvex optimization oracle. Furthermore they restrict to a finite class of latent representation functions.

\medskip \noindent
{\bf Low Rank Structure with respect to States and Actions.}
There is a limited set of works which consider a model class similar to ours, in which there is low rank structure with respect to the interaction between the states and actions and hence their interaction decomposes. This structure could be imposed on either the transition kernel, or only on the optimal $Q^*$ function. 
\cite{Yang2020Harnessing, lrVfa, mlrvfa} \tsedit{provide empirical results showing that $Q^*$ and near-optimal $Q$ functions for common stochastic control tasks have low rank. Their numerical experiments demonstrate that the performance of standard RL algorithms, e.g., value iteration and TD learning, can be significantly improved in combination with low-rank matrix/tensor estimation methods.} The theoretical work \cite{serl} considers the weakest assumption that only imposes low rankness on $Q^*$. They develop an algorithm that combines a novel matrix estimation method with value iteration to find an $\eps$-optimal action-value function with $\Tilde{O}\left(d^5(|S|+|A|)/\eps^2 \right)$ samples for infinite-horizon $\gamma$-discounted MDPs assuming that $Q^*$ has rank $d$. While this is a significant improvement over the tabular lower bound $\Tilde{\Omega}\left(|S||A|/((1-\gamma)^3\eps^2)\right)$ \citep{azarLowerBound}, their results require strict assumptions. The primary limitation is that they require the discount factor $\gamma$ to be bounded from above by a small constant, which effectively limits their results to short, \emph{constant} horizons. Lifting this limitation is left as an open question in their paper. In this work, we provide a concrete example that illustrates why long horizons may pose a challenge for using matrix estimation in RL. Subsequently we show that this long horizon barrier can be overcome by imposing additional structural assumptions. The algorithm in \cite{serl} also relies on prior knowledge of special anchor states and actions that span the entire space. We will show that under standard regularity conditions, randomly sampling states and actions will suffice.


\medskip \noindent
{\bf  Matrix Estimation.}
Low-rank matrix estimation methods focus on recovering the missing entries of a partially observed low-rank matrix with noise. The field has been studied extensively with provable recovery guarantees; see the surveys \cite{8399563, 7426724}. However, the majority of recovery guarantees of matrix estimation are in the Frobenius norm instead of an entry-wise/$\ell_\infty$ error bound, whereas a majority of common analyses for reinforcement learning algorithms rely upon constructing entrywise confidence sets for the estimated values. Matrix estimation methods with entry-wise error bounds are given in \cite{chen2019noisy, 9087910, abbe2020entrywise}, but all require strict distributional assumptions on the noise, e.g., independent, mean-zero sub-Gaussian/Gaussian error. 
\tsdelete{For an MDP with bounded rewards, to estimate $Q^*_h$, the noise on the observed entries is sub-Gaussian \citep{wainwright2019high} but typically biased as one often does not know the optimal policy for each step after step $h$. To the best of our knowledge, in this setting, the existing matrix estimation methods with entry-wise error bounds have no guarantees on the mean of their estimators.} The matrix estimation method proposed in \cite{serl} provides entry-wise error guarantees for arbitrary bounded noise settings and is the method we use in our algorithm in order to aid our analysis.  \tsdelete{With additional assumptions, the reference matrix comprised of the estimates of our algorithm is low-rank. Hence, our input to the matrix estimation method has mean-zero and bounded noise with respect to the above reference matrix, therefore enabling us to relax the small discount factor requirement.} \cycomment{these last two sentences need some work to clarify, unsure if we want to just move them later because it's hard to explain what "reference matrix" means, and why is this relevant to "relaxing small discount factor requirement? might m=ake more sense in a later discussion?} \tscomment{Yeah I'll move it to where we introduce the ME method}

\section{Preliminaries}
\label{sec:prelim}

We consider a standard finite-horizon MDP given by $(S, A, P, R, H)$~\citep{Sutton1998}. Here $S$ and $A$ are the finite state and action spaces, respectively. $H \in \Z_+$ is the time horizon. $P = \{P_h\}_{h\in [H]}$ is the transition kernel, where $P_h(s'|s, a)$ is the probability of transitioning to state $s'$ when taking action $a$ in state $s$ at step $h$. $R = \{R_h\}_{h\in [H]}$ is the reward function, where  $R_h: S \times A \rightarrow \Delta([0, 1])$ is the distribution of the reward for taking action $a$ in state $s$ at step $h$. We use $r_h(s,a) := \E_{r \sim R_h(s,a)}[r]$ as the mean reward. A stochastic, time-dependent policy of an agent has the form $\pi = \{\pi_h\}_{h\in[H]}$ with $\pi_h: S \to \Delta(A)$, where the agent selects an action according to the distribution $\pi_h(s)$ at time step $h$ when at state $s$.

For each policy $\pi$, the value function and action-value function of $\pi$ represent the expected total future reward obtained from following policy $\pi$ given a starting state or state-action pair at step $h$,
\begin{align}
V_h^\pi(s) &:= \E\left[\left. \textstyle \sum_{t = h}^H r_t(s_t, a_t) ~\right|~ s_h = s\right], \label{eq:V_fn_def} \\
Q_h^\pi(s, a) &:= \E\left[ \left. \textstyle  \sum_{t = h}^H r_t(s_t, a_t) ~\right|~ s_h = s, a_h = a \right], \label{eq:Q_fn_def}
\end{align}
where $a_t \sim \pi_t(s_t)$ and $s_{t+1} \sim P_{t}( \cdot | s_{t},a_{t})$.
The optimal value and action-value functions are given by $V^*_h(s) := \sup_{\pi} V^\pi_h(s)$ and $Q^*_h(s,a) := \sup_{\pi} Q^\pi_h(s,a)$, respectively, for all $s\in S, h\in[H]$. These functions satisfy the Bellman equations
\begin{align}
V_{h}^*(s) = \max_{a \in A} Q^*_h(s,a), 
\quad
Q_h^*(s,a) = r_h(s,a) + \E_{s' \sim P_h(\cdot|s,a)}[V^*_{h+1}(s')],
\quad 
\forall s,a,h
\label{eq:Bellman} 
\end{align}
with $V_{H+1}^*(s) = 0$.
For an MDP with finite spaces and horizon, there always exists an optimal policy $\pi^*$ that satisfies $V^{\pi^*}_h(s) = V^*_h(s)$ for all $s,h$.

The primary goal in this work is to find a near-optimal policy or action-value function. For $\eps > 0$, $\pi$ is an $\eps$-optimal policy if 
$
|V^*_h(s) - V^\pi_h(s)| \leq \eps, \forall (s, h) \in S \times [H].
$
Similarly, $Q = \{Q_h\}_{h\in [H]}$ is called an $\eps$-optimal action-value function if 
$
|Q^*_h(s, a) - Q_h(s, a)| \leq \eps, \forall (s, a, h) \in S \times A \times [H].
$
\tsdelete{
\tsedit{
In contrast, we define a policy to be $c$-suboptimal if $\max_{s_0 \in S} (V^{*}_1(s_0) - V^{\pi}_1(s_0)) \geq c$. This notation is used when we prove our lower bound on the sample complexity; we show that to avoid learning a $c$-suboptimal policy, for a positive constant $c$, one must use an exponential number of samples in $H$.} \cycomment{I think $c$-suboptimal is incorrectly defined? as stated it is equivalent to $c$-optimal.} \tscomment{Yes, I made a typo and have corrected it.} \yccomment{If this definition is used only locally in the proof of Theorem~\ref{thm:explb}, maybe we can move it there (or unpack it) instead of cluttering the preliminaries.}} We will view $Q_h^*$, $Q_h^\pi$ and $r_h$ as $|S|$-by-$|A|$ matrices and $P_h(\cdot|\cdot,\cdot)$ as an $|S|$-by-$|S|$-by-$|A|$ tensor, for which various low-rank assumptions are considered. For a given function $V:S\to\R$, we sometimes use the shorthand $[P_h V](s,a) := \E_{s' \sim P_h(\cdot|s,a)}[V(s')]$ for the conditional expectation under $P_h$. 

Throughout this paper, we assume access to a simulator (a.k.a.\ the generative model framework, introduced by \cite{NIPS1998_99adff45}), which takes as input a tuple $(s,a,h)\in S\times A\times[H]$ and outputs independent samples $s'\sim P_h(\cdot|s,a)$ and $r \sim R_h(s,a)$ .
This assumption is one of the stronger assumptions in reinforcement learning literature, but common in the line of work that studies sample complexity without directly addressing the issue of exploration, e.g.,  \cite{sidford2019nearoptimal, DBLP:conf/colt/AgarwalKY20}.

\medskip \noindent
{\bf Notation.} Let $a \wedge b := \min(a,b),$ $a \vee b := \max(a, b),$  $\delta_a$ denote the distribution over $A$ that puts probability 1 on action $a$, $ \sigma_i(M)$ denote the $i$-th largest singular value of a matrix $M$, and $M_{i}$ denote the $i$-th row. The $n$-by-$n$ identity matrix is denoted by $I_{n\times n}$, and $[H] := \{1, \ldots, H\}$.  We use several vector and matrix norms: Euclidean/$\ell_2$ norm $\|\cdot \|_2$, spectral norm $\|M\|_{op} = \sigma_1(M)$, nuclear norm $\|M\|_*$, entrywise $\ell_\infty$ norm $\|M\|_\infty$ (largest absolute value of entries), and Frobenius norm $\|M\|_F$. We define the condition number of a rank-$d$ matrix $M$ as $\kappa_{M} \coloneqq \frac{\sigma_1(M)}{\sigma_d(M)}$. \cydelete{\tsedit{We define the $i$-mode product, $\times_i$,  between tensor $X \in \R^{n_1 \times \ldots \times n_m}$ and matrix $A \in \R^{n_i \times d}$ as $X \times_i A = Y$ where $Y_{i_1, i_2, \ldots, j, \ldots i_m} = \sum_{i = 1}^{n_i} X_{i_1, i_2, \ldots, i, \ldots i_m}A_{i, j}$. }}

\section{Information Theoretic Lower Bound}
\label{sec:lowerbound}

While one may hope to learn the optimal action-value functions when only assuming that $Q^*_h$ is low rank, we argue that the problem is more nuanced. Specifically, we present two similar MDPs with rank-one $Q^*$, where the learner has complete knowledge of the MDP except for one state-action pair at each time step. \tsedit{As the learner is restricted from querying that specified state-action pair, in order to distinguish between the two MDPs and learn the optimal policy, the learner must use the low-rank structure to estimate the unknown entry. We then show that doing so requires a exponential number of observations in the horizon $H$.}

Consider MDPs $M^{\theta}=(S,A,P,R^{\theta}, H)$ indexed by a real number $\theta$, where $S=A=\{1,2\}$. At $h=1$, $r_{1}^{\theta}(s_{1},a)=0$, $P_{1}(\cdot|s_{0},a)=\delta_{a}$ for all $a \in A$, and the starting state $s_1$ is deterministic. For $h > 1$,
\[
r_{H}^{\theta}=\left(\begin{array}{c}
\frac{1}{2}\\
1+2\theta
\end{array}\right),\qquad r_{h}^{\theta}=\left(\begin{array}{cc}
-\frac{1}{4} & 0\\
-\frac{1}{2} & 2^{H-h}\theta
\end{array}\right),\qquad\text{and }P_{h}(\cdot|s,a)=\delta_{s},\qquad\forall s,a,\forall h\in \{2, \ldots, H-1\},
\]
where $\delta_s$ denotes the Dirac delta distribution at $s$. 
\tsedit{The rewards are deterministic except for the terminal reward at state $2$, where the reward distribution $R^{\theta}_H(2)$ is such that the reward takes value $2$ with probability $\frac{1}{2}+ \theta$, and takes value 0 otherwise.}

\tsedit{If action $a=1$ (resp., $2$) is
taken at the initial step $h=1$, the MDP will transition to state $1$ (resp.,
$2$) and then stay at this state in all subsequent steps.} 
\cyedit{Thus learning the optimal policy only depends on determining the optimal action in step 1.}
Let $\theta$ take one of two possible values: $\theta_1=-\frac{3}{4\cdot2^{H-1}}$ and $\theta_2 = \frac{3}{4\cdot2^{H-1}}.$ \tsedit{To determine the correct action at the initial step, one must correctly identify $\theta$. We will show that identifying $\theta$ takes an exponential number of samples in the horizon $H$.}

\begin{lemma}\label{lem:Qlb}
The optimal policy for the above MDP (for both values of $\theta$) for steps $h \geq 1$ is $\pi^*_h(1) = \pi^*_h(2) = 2$ for all $h \in \{2, \ldots, H-1\}$. Furthermore,  
\[
Q_{h}^{*,\theta}=\left(\begin{array}{cc}
\frac{1}{4} & \frac{1}{2}\\
\frac{1}{2}+2^{H-h}\theta, & 1+2^{H-h+1}\theta
\end{array}\right),\qquad V_{h}^{*,\theta}=\left(\begin{array}{c}
\frac{1}{2}\\
1+2^{H-h+1}\theta
\end{array}\right),\qquad \forall h \in \{2, \ldots, H-1\}.
\]
\end{lemma}
Lemma~\ref{lem:Qlb}, proved in Appendix~\ref{sec:lbProofs},
shows that $Q^*_h$ is rank one.
We will calculate the optimal Q function and policy at step $h=1$ \cyedit{in the proof of Theorem \ref{thm:explb} after introducing the observation model}. 

\textit{Observation model:} 
The learner has exact knowledge of $r_{H}^{\theta}(1)$, $r_{h}^{\theta}(s,a)$, $P_{h}, r_1^{\theta}, P_1$
for all $(s,a)\in\Omega:=\{(1,1),(1,2),(2,1)\}$ and $h\in[H-1]$.
Note that these known rewards and transitions are independent of $\theta$. In addition, the learner is given $n$ iid samples
from $R_H^{\theta}(2)$.

One interpretation of this observation model is that the learner has
infinitely many samples of the form $(s,a,s'), s'\sim P_h(\cdot|s,a)$ for each $(s,a)$,
so $P_{h}$ can be estimated with zero error. Similarly, the learner
has infinitely many samples from $r_{H}^{\theta}(1)$ and $r_{h}^{\theta}(s,a)$ for
$(s,a)\in\Omega$. However, the learner cannot observe $r_{h}^{\theta}(2,2)$
and hence must estimate $Q_{h}^{\theta}(2,2)$ using the low-rank structure.
Finally, $n$ noisy observations of the terminal reward $r_{H}^{\theta}(2)$ at state $2$ are given. 

\begin{theorem}\label{thm:explb}
Consider the above class of MDPs and observation model. To learn a $1/8$-optimal policy with probability at least 0.9, the learner must observe $n=\Omega(4^H)$ samples from $R_H^{\theta}(2)$.
\end{theorem}
\begin{proof}
From Lemma \ref{lem:Qlb}, we have $ V_{2}^{*,\theta_1}=\left(\begin{array}{c}
1/2\\
1/4
\end{array}\right)$ and $ V_{2}^{*,\theta_2}=\left(\begin{array}{c}
1/2\\
7/4
\end{array}\right)$. Hence, at $h=1$ the optimal action is $\pi_1^*(s_0)=1$ for $\theta=\theta_1$ and $\pi_1^*(s_0)=2$ for $\theta=\theta_2$. If $\theta = \theta_1$ and action 2 is taken instead, $\pi_1(s_0)=2$, then this action \tsedit{incurs a $1/4$ penalty in value relative to the optimal action, i.e., $Q_1^{*,\theta_1}(s_0, 2) \le Q_1^{*,\theta_1}(s_0, 1) - \frac{1}{4}$. If $\theta = \theta_2$ and action $1$ is taken, $\pi_1(s_0) = 1$, then this action incurs a $5/4$ penalty relative to the optimal action.} Therefore, to learn an $\eps$-optimal policy for $\eps < 1/4$, e.g., $\eps = 1/8$ as stated in the theorem, the learner must correctly determine whether $\theta = \theta_1$ or $\theta = \theta_2$. It is well known from existing literature, see e.g., \cite{statlimit, neuralnet, scMAB}, that one needs $\Omega \left(1/(2\theta_1 - 2\theta_2)^2\right)$ samples to distinguish two (scaled) Bernoulli distributions with mean $1/2+\theta,\theta\in\{\theta_1,\theta_2\}$ with probability at least 0.9. Substituting $(2\theta_1 - 2\theta_2)^2 > \frac{4^{H-1}}{9}$ proves the result.
\end{proof}

Consider the following operational interpretation of the above example. The learner can use the rank-one structure to estimate $Q^*_h(2,2)$ given $Q^*_h(s,a),(s,a)\in\Omega$ as follows:
$Q^*_h(2,2) = Q^*_h(1,2)Q^*_h(2,1)/Q^*_h(1,1)$, coinciding with the matrix estimation algorithm in~\cite{serl}.
Lemma~\ref{lem:Qlb} shows that an $\varepsilon=2^{H-h}\theta$ error in $Q_h^*(2,1)$ leads to a $2\cdot\varepsilon=2^{H-h+1}\theta$ error in $Q_h^*(2,2)$ and $Q_{h-1}^*(2,1)$. As such, the error is amplified exponentially when propagating backwards through the horizon, showing that this low-rank based procedure is inherently unstable.

This example illustrates that reinforcement learning with low-rank structure is more nuanced than low-rank estimation without dynamics, and that the constant horizon assumption in~\cite{serl} is not merely an artifact of their analysis. \tsedit{Furthermore, as the entries of $Q^*_h$ are similar in magnitude, the blow up in error is not due to the missing entry containing most of the signal.} This motivates us to consider additional assumptions beyond $Q^*_h$ being low rank. In the above example, the optimal state-action pair $(2,2)$ is not observed, and the reward $r_h$ and transition kernel $P_h$ are not low-rank. To achieve stable and  sample-efficient learning with long horizons, we will consider when additional structures in the MDP dynamics can be exploited to identify and sample from the optimal action.
\section{Assumptions}\label{sec:asm}

In this section, we present three low rank settings that enable sample-efficient reinforcement learning, with each setting increasing in the strength of the low rank structural assumption. 
%

\begin{assumption}[Low-rank  $Q_h^*$]\label{asm:lrqt}
For all $h \in [H]$, the rank of the matrix $Q_h^*$ is $d$. Consequently, $Q_h^*$  can be represented via its singular value decomposition $Q^*_h = U^{(h)} \Sigma^{(h)} (V^{(h)})^\top$, for a $|S| \times d$ orthonormal matrix $U^{(h)}$, a $|A| \times d$ orthonormal matrix $V^{(h)}$, and a $d \times d$ diagonal matrix $\Sigma^{(h)}$.
\end{assumption}


Assumption \ref{asm:lrqt} imposes that the action-value function of the optimal policy is low rank. This assumption can be contrasted with another common structural assumption in the literature, namely linearly-realizable $Q^*$, meaning that  $Q^*_h(s,a)=w_h^\top \phi(s,a)$  for some weight vector $w_h\in \R^d$ and a \emph{known} feature mapping $\phi: S\times A \to \R^d$ \citep{wang2021exponential, weisz2021exponential}. In comparison, Assumption~\ref{asm:lrqt} decomposes $\phi$ into the product of separate feature mappings on the state space $U^{(h)}$ and the action space $V^{(h)}$. Hence, linearly-realizable $Q^*$ does not imply low-rank $Q^*_h$. Furthermore, we assume the latent factors $U^{(h)}$ and $V^{(h)}$ are completely unknown, whereas the linear function approximation literature typically assumes $\phi$ is known or approximately known.

Assumption \ref{asm:lrqt} only imposes low-rankness on $Q^*_h$, allowing for the $Q^{\pi}$ function associated to non-optimal policies $\pi$ to be full rank. Assumption~\ref{asm:lrqt} is likely too weak, as Theorem \ref{thm:explb} illustrates a doubly exponential growth in policy evaluation error under only this assumption. Below we present three additional assumptions. \emph{Each} of these assumptions enable our algorithms to achieve the desired sample complexity when coupled with Assumption \ref{asm:lrqt}.

\begin{assumption}[Suboptimality Gap]\label{asm:sog}
For each $(s, a) \in S \times A$, the suboptimality gap is defined as $\Delta_h(s, a) := V^*_h(s) - Q^*_h(s, a)$. Assume that there exists an $\Delta_{\min} > 0$ such that
\[
\min_{h \in [H], s \in S, a \in A} \{\Delta_h(s, a): \Delta_h(s, a) > 0 \} \geq \Delta_{\min}.
\]
\end{assumption}

Assumption~\ref{asm:sog} stipulates the existence of a suboptimality gap bounded away from zero.
In the finite setting with $|S|, |A|, H < \infty$, there always exists a $\Delta_{\min} > 0$ for any non-trivial MDP in which there is at least one suboptimal action. This is an assumption commonly used in bandit and reinforcement learning literature.

\begin{assumption}[$\eps$-optimal Policies have Low-rank $Q$ Functions]\label{asm:lrqe}
For all $\eps$-optimal policies $\pi$, the associated $Q^{\pi}_h$ matrices are rank-$d$ for all $h \in [H]$, i.e., $Q^{\pi}_h$ can be represented via $Q^\pi_h = U^{(h)} \Sigma^{(h)} (V^{(h)})^\top$ for some $|S| \times d$ matrix $U^{(h)}$, $|A| \times d$ matrix $V^{(h)}$, and $d \times d$ diagonal matrix $\Sigma^{(h)}$. 
\end{assumption}

Assumption~\ref{asm:lrqe} imposes that all $\eps$-optimal policies $\pi$ have low-rank $Q^\pi_h$. 
We have not seen this assumption in existing literature. It is implied by the stronger assumption that \emph{all policies} have low-rank $Q^\pi_h$; see Appendix \ref{app:qnolrex} for an MDP that satisfies Assumption~\ref{asm:lrqe} but fails the stronger assumption. 
The stronger assumption is analogous to the property that $Q^\pi$ is linear in the feature map $\phi$ for \emph{all} policies, which is commonly used in work on linear function approximation and linear MDPs.

\tsedit{To state our strongest low-rank assumption, we first recall the definition of tensor Tucker rank.}

\begin{definition}[Tucker Rank \cite{tucker}]
The Tucker rank of a tensor $X \in \R^{n_1\times n_2 \times n3}$ is the smallest $(d_1, d_2, d_3)$ such that there exists a core tensor $G \in \R^{d_1 \times d_2 \times d_3}$ and orthonormal latent factor matrices $A_i \in \R^{n_i \times d_i}$ for $i \in [3]$ such that for all $(a,b,c) \in [n_1]\times[n_2]\times[n_3]$,
\[X(a,b,c) = \sum_{\ell_1 \in [d_1]} \sum_{\ell_2 \in [d_2]} \sum_{\ell_3 \in [d_3]} G(\ell_1,\ell_2,\ell_3) A_1(a,\ell_1) A_2(b,\ell_2) A_3(c,\ell_3).\]
\end{definition}
\tsdelete{
\begin{definition}[Tucker Rank \cite{tucker}]
The Tucker decomposition of a tensor $X\in\R^{n_1 \times n_2 \times \ldots \times n_m}$ is
\[
X = G \times_1 A_1 \times_2 A_2 \ldots \times_m A_m
\]
where $G \in \R^{d_1 \times d_2 \times \ldots \times d_m}$, $\times_i$ denotes the $i$-mode product (defined in the notation paragraph in section \ref{sec:prelim}), and $A_i \in \R^{n_i \times d_i}$ are matrices with orthonormal columns for all $i \in [m]$. Then, we define the Tucker rank of $X$ as $(d_1, d_2, \ldots, d_m)$.
\end{definition}}
\tsedit{Our strongest low-rank assumption imposes that the expected reward functions are low rank, and the transition kernels have low Tucker rank \emph{along one dimension}. }
\begin{assumption}[Low-rank Transition Kernels and Reward Functions] \label{asm:lrtk}
The expected reward function has rank $d$, and the transition kernel $P_h$ has Tucker rank $(|S|,|S|,d)$ or $(|S|, d, |A|)$, with shared latent factors. For the Tucker rank $(|S|, |S|, d)$ case, this means that for each $h \in [H]$, there exists a  $|S| \times |S| \times d$ tensor $U^{(h)}$, an $|A| \times d$ matrix $V^{(h)}$, and an $|S| \times d$ matrix $W^{(h)}$ such that
\[
P_h(s'|s,a) = \textstyle\sum_{i=1}^d U^{(h)}(s', s, i) V^{(h)}(a, i) \quad\text{ and }\quad r_h(s,a) = \textstyle\sum_{i=1}^d W^{(h)}(s, i)V^{(h)}(a, i).
\]
For the Tucker rank $(|S|, d, |A|)$ case, this means that  for each $h \in [H]$, there exists a  $|S| \times |A| \times d$ tensor $V^{(h)}$, an $|S| \times d$ matrix $U^{(h)}$, and an $|A| \times d$ matrix $W^{(h)}$ such that
\[
P_h(s'|s,a) = \textstyle\sum_{i=1}^d U^{(h)}(s, i)V^{(h)}(s', a, i)\quad\text{ and }\quad r_h(s,a) = \textstyle\sum_{i=1}^d U^{(h)}(s, i)W^{(h)}(a, i).
\]
\end{assumption}


Assumption~\ref{asm:lrtk} is our strongest low-rank structural assumption as it implies that the $Q_h^{\pi}$ functions associated with \emph{any} policy $\pi$ are low rank, which subsequently implies both Assumptions \ref{asm:lrqe} and \ref{asm:lrqt}. In fact, Assumption~\ref{asm:lrtk} implies that for any value function estimate $\hat{V}_h$, the matrix $r_h + [P_h \hat{V}_{h+1}]$ is low rank, as stated in the following proposition.
\begin{proposition} \label{prop:lrEst} 
If the transition kernel has Tucker rank $(|S|, |S|, d)$ or $(|S|,d, |A|)$ and the expected reward function has rank $d$ with shared latent factors, i.e., Assumption \ref{asm:lrtk} holds, then the matrix $r_h + [P_h \hat{V}_{h+1}]$ has rank at most $d$ for any $\hat{V}_{h+1} \in \R^{|S|}$.
\end{proposition}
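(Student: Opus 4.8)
The plan is to handle the two Tucker-rank cases of Assumption~\ref{asm:lrtk} separately and, in each, exhibit an explicit rank-$d$ factorization of the matrix $r_h+[P_h\bar V_{h+1}]$, i.e.\ write it as a product of an $|S|\times d$ matrix and a $d\times|A|$ matrix. The guiding observation is that contracting the transition tensor against the fixed vector $\bar V_{h+1}$ along the destination-state mode is a linear operation that collapses the tensor to a matrix without enlarging the rank along the remaining $(s,a)$ modes, and that the \emph{shared} latent factor (on the action side in the first case, on the state side in the second) lets the reward term be absorbed into the same factorization rather than adding a fresh block of $d$ directions.

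Concretely, in the Tucker rank $(|S|,|S|,d)$ case I would start from $[P_h\bar V_{h+1}](s,a)=\sum_{s'}P_h(s'|s,a)\bar V_{h+1}(s')$, substitute $P_h(s'|s,a)=\sum_{i=1}^d U^{(h)}_{s',s,i}V^{(h)}_{a,i}$, interchange the two finite sums, and set $\widetilde U^{(h)}_{s,i}:=\sum_{s'}U^{(h)}_{s',s,i}\bar V_{h+1}(s')$, so that $[P_h\bar V_{h+1}]=\widetilde U^{(h)}(V^{(h)})^\top$. Since Assumption~\ref{asm:lrtk} gives $r_h=W^{(h)}(V^{(h)})^\top$ with the same right factor $V^{(h)}$, adding the two yields $r_h+[P_h\bar V_{h+1}]=\bigl(W^{(h)}+\widetilde U^{(h)}\bigr)(V^{(h)})^\top$, which has rank at most $d$. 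The Tucker rank $(|S|,d,|A|)$ case is entirely symmetric: substituting $P_h(s'|s,a)=\sum_{i=1}^d U^{(h)}_{s,i}V^{(h)}_{s',a,i}$ and contracting gives $[P_h\bar V_{h+1}]=U^{(h)}(\widetilde V^{(h)})^\top$ with $\widetilde V^{(h)}_{a,i}:=\sum_{s'}V^{(h)}_{s',a,i}\bar V_{h+1}(s')$, and combining with $r_h=U^{(h)}(W^{(h)})^\top$ gives $r_h+[P_h\bar V_{h+1}]=U^{(h)}\bigl(W^{(h)}+\widetilde V^{(h)}\bigr)^\top$, again of rank at most $d$.

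I do not anticipate a genuine obstacle here; the argument is a short index computation. The only point requiring care is the bookkeeping when interchanging the sum over $s'$ with the sum over the latent index $i$, and, more importantly, actually using the shared-factor hypothesis: without it one would only conclude $\mathrm{rank}\le 2d$, whereas aligning $r_h$ and $[P_h\bar V_{h+1}]$ on the common factor keeps the bound at $d$. This is also exactly the property exploited later to run low-rank matrix estimation on the Bellman update, so it is worth stating the factorization explicitly rather than just the rank bound.
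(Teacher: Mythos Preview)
Your proposal is correct and is essentially identical to the paper's proof: the paper also substitutes the factorizations from Assumption~\ref{asm:lrtk}, pulls out the shared factor $V^{(h)}$ (resp.\ $U^{(h)}$), and writes $r_h+[P_h\bar V_{h+1}]$ as $\sum_{i=1}^d V^{(h)}_{a,i}\bigl(W^{(h)}_{s,i}+\sum_{s'}\bar V_{h+1}(s')U^{(h)}_{s',s,i}\bigr)$, which is exactly your $(W^{(h)}+\widetilde U^{(h)})(V^{(h)})^\top$. Your write-up is arguably a bit cleaner since you state the matrix factorization explicitly and carry out the second Tucker-rank case rather than deferring to ``a similar argument.''
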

See Appendix~\ref{sec:proof_lrEst} for the proof. Proposition \ref{prop:lrEst} results from the fact that for any fixed $h$, the matrices corresponding to $r_h$ and $P_h(s'|\cdot, \cdot)$ for all $s'$ share either the same column or row space, which is critically used in the analysis of our Low Rank Empirical Value Iteration algorithm.

Next we present several definitions used to characterize the error guarantees of the matrix estimation algorithm. It is commonly understood in the matrix estimation literature that other properties of the matrix beyond low rank, such as its incoherence or condition number, govern how efficiently a matrix can be estimated. Consider a trivial rank-1 MDP where $H=1$ and the reward is a sparse matrix with only $d$ nonzero entries taking value 1. Since the locations of the nonzero entries are unknown, we will likely observe only zeros upon sampling any small subset of entries. Estimation using a small number of samples would be possible, however, if an expert were to provide knowledge of a special set of rows and columns, which have been referred to as \emph{anchor} states and actions in \cite{serl}. For some sets $S^\#_h \subseteq S$ and $A^\#_h \subseteq A$, we use $Q_h(S^\#_h, A^\#_h)$ to denote the submatrix obtained by restricting $Q_h$ to state-action pairs from $S^\#_h \times A^\#_h$.

\begin{definition}[$(k, \alpha)$-Anchor States and Actions]\label{asm:anchor}
A set of states $S^\#_h \subset S$ and a set of actions $A^\#_h  \subset A$ are $(k, \alpha)$-anchor states and actions for a rank-$d$ matrix $Q_h$ if $|S^\#_h|, |A^\#_h| \leq k$, the submatrix $Q_h(S^\#_h, A^\#_h)$ has rank  $d$, and $\|Q_h\|_\infty/\sigma_d(Q_h(S^\#_h, A^\#_h))\leq \alpha$.
\end{definition}
Any set of valid anchor states and anchor actions must have at least size $d$ in order for the associated anchor submatrix to be rank $d$. As the full matrix $Q_h$ has rank $d$, this also implies that all rows (resp., columns) of $Q_h$ can be written as a linear combination of the rows associated to states $S_h^{\#}$ (resp., columns associated to actions $A_h^{\#}$). The parameter $\alpha$ depends on the quality of the anchor sets; sub-matrices that are close to being singular result in large $\alpha$.  
We remark that assuming knowledge of a minimal set of anchor states and actions is common in literature, i.e., anchor-based topic modelling \cite{Arora2012LearningTM, chen2017survivalsupervised} and linear feature-based RL \cite{wang2021sampleefficient, yang2019sampleoptimal}. Furthermore, Shah et al. \cite{serl} posit that it suffices empirically to choose states and actions that are far from each other as anchor states and actions. However, in the worst case, finding valid anchor states and actions may require significant a priori knowledge about the unknown matrix.

Alternately, anchor states and actions can be randomly constructed for matrices that satisfy standard regularity conditions such as incoherence, commonly used in matrix estimation ~\citep{candes2009exact}.

\begin{definition}[Incoherence]\label{asm:inco}
Let $Q_h \in \R^{|S|\times |A|}$ be a rank-$d$ matrix with singular value decomposition $Q_h = U\Sigma V^\top$ with $U\in\R^{|S|\times d}$ and $V\in\R^{|A|\times d}$. $Q_h$ is $\mu$-incoherent if $ \max_{i \in [|S|]} \|U_{i}\|_2 \leq \sqrt{\mu d/|S|}$ and $ \max_{j \in [|A|]} \|V_{j}\|_2 \leq \sqrt{\mu d/|A|}$, where $U_i$ denotes the $i$-th row of a matrix $U$.
\end{definition}

A small incoherence parameter $\mu$ ensures that the masses of $U$ and $V$ are not too concentrated in a couple of rows or columns. Consequently, a randomly sampled subset of rows (resp., columns) will span the row (resp., column) space, so these subsets of rows and columns contain sufficient information to reconstruct the entire matrix. Both $\mu$ and $\kappa$, the condition number of $Q_h$, will be used in the analysis to show that the entrywise error amplification from the matrix estimation method scales with $\mu, d, \kappa$ instead of the size of the state or action space, $k$, or $\alpha$.



\medskip \noindent
{\bf Discussion of Assumptions.}
\cyedit{While low rank structure with incoherence is widely accepted in the matrix and tensor estimation literature, we provide a few examples to illustrate how these properties could also naturally arise in MDPs.}
Consider a continuous MDP which is converted to a tabular MDP via discretization, which is a common approach for tackling continuous MDPs. As the size of the discretization is artificial, the true complexity of the MDP is governed by the structure of the continuous MDP, which is independent of the discretization size. As long as the reward function and dynamics are sufficiently smooth with respect to the continuous MDP, the resulting tabular MDP will have approximate low-rank structure as $d$ would be at most logarithmic with respect to $|S|, |A|$, due to a universal low rank property of smooth functions \cite{udell}. \cyedit{Additionally, the incoherence condition intuitively states that there cannot be a disproportionately small set of rows or columns that represent a disproportionately large amount of the signal. For MDPs that are derived from uniform discretizations of continuous MDPs with smoothness properties, incoherence also arises naturally as there will be a constant fraction of the rows or columns representing any fixed length interval of the continuous state space.} Even in inherently discrete settings such as a recommendation system with users and movies, when the population is sufficiently large, one could view the discrete population of states/actions as representing a sample from an underlying continuous population with appropriate smoothness conditions. 
Finally, in many physical systems as relevant to most stochastic control tasks, there exist low dimensional feature representations that capture the ``sufficient statistics'' of the state, which fully govern the dynamics of the system. \cydelete{\tsedit{Another example of low dimensional structure that differs from the ones presented in this work is the ``state space collapse'' phenomenon which is observed in many queueing systems. In particular, it has often been observed that under an optimal control policy, it is sufficient to track a low dimensional representation of the state, even though the full state representation could be high dimensional}.}

\cydelete{\tsedit{One example of MDPs that satisfy our low-rank assumptions are MDPs converted from continuous MDPs via discretization, i.e., discretizing the state and action space and normalizing the transition kernel restricted to the discretized states and actions. These tabular MDP will have approximately low-rank structure as long as the reward function and dynamics of the continuous MDP are sufficiently smooth \cite{udell}. Since the reward function and dynamics are sufficiently smooth, the complexity/rank of the discretized MDP depends on the reward function and dynamics instead of the discretization size.}}

\section{Algorithm} \label{sec:algorithm}

Our algorithm follows from a natural synthesis of matrix estimation with empirical value iteration and Monte Carlo policy iteration. We first describe the vanilla approximate dynamic programming algorithms for the general tabular MDP settings. Empirical value iteration  simply replaces the expectation in the Bellman update in Equation~\eqref{eq:Bellman} with empirical samples \citep{evi}. Specifically, to estimate $Q^*_h(s,a)$, one collects $N$ samples of one step transitions, which entails sampling a reward and next state from $R_h(s,a)$ and $P_h(\cdot | s,a)$. Let $\hat{r}_h(s,a)$ denote the empirical average reward of the $N$ samples from $R_h(s,a)$. Let $\hat{P}_h(\cdot | s,a)$ denote the empirical distribution over $N$ next states sampled from $P_h(\cdot | s,a)$. Given an estimate $\hat{V}_{h+1}$ for the optimal value function at step $h+1$,  the empirical Bellman update equation is
\begin{align}
\hat{Q}_{h}(s,a) = \hat{r}_h(s,a) + \E_{s' \sim \hat{P}_h(\cdot|s,a)}[\hat{V}_{h+1}(s')], ~\text{ and }~
\hat{V}_h(s) = \max_{a \in A} \hat{Q}_{h}(s,a). \label{eq:Q_hat_EVI}
\end{align}
Evaluating $\hat{Q}_h$ and $\hat{V}_h$ requires collecting $N$ samples for each of the $|S| |A|$ state action pairs $(s,a)$.

Monte Carlo policy iteration for tabular MDPs approximates $Q^{\pi}_h(s,a)$ for a policy $\pi$ by replacing the expectation in the definition \eqref{eq:Q_fn_def} of $Q^{\pi}$ with empirical trajectory samples, which is similar to first-visit Monte Carlo policy evaluation except we use the generative model to start at a specified state-action pair and time step \citep{Sutton1998}. This involves sampling $N$ independent trajectories starting from state-action pair $(s,a)$ at step $h$ and following a given policy $\pi$ until the end of the horizon $H$. For a fixed policy $\pi$ and state action pair $(s,a)$, let the sequence of rewards along the $i$-th sampled trajectory be denoted $(r_h^i, r_{h+1}^i, \dots r_H^i)$. We will use $\hat{r}^{\text{cum}}_h(s,a)$ to denote the empirical average cumulative reward across the $N$ trajectories, given by
\begin{align}\label{eq:r_cum}
\hat{r}^{\text{cum}}_h(s,a) := \tfrac{1}{N} \textstyle\sum_{i=1}^N \textstyle\sum_{t = h}^H r_t^i.
\end{align}
Given an estimate of the optimal policy for steps greater than $h$, denoted by $(\hat{\pi}_{h+1}, \hat{\pi}_{h+2}, \dots \hat{\pi}_H)$, the Monte Carlo estimate for the optimal action-value function and optimal policy at step $h$ are
\begin{align}
\hat{Q}_h(s,a) = \hat{r}^{\text{cum}}_h(s,a), ~\text{ and }~
\hat{\pi}_h(s) = \delta_{a} ~\text{ for } a = \argmax_{a' \in A} \hat{Q}_h(s,a'), \label{eq:Q_hat_MCPI}
\end{align}
where the trajectories used to compute $\hat{r}^{\text{cum}}_h(s,a)$ are sampled by following the policy $(\hat{\pi}_{h+1}, \hat{\pi}_{h+2}, \dots \hat{\pi}_H)$, and recall $\delta_{a}$ denotes the distribution that puts probability 1 on action $a$. Computing $\hat{Q}_h$ and $\hat{\pi}_h$ involves sampling $|S| |A| N$ trajectories, which are each of length $H - h$, which results in a sample complexity of $|S| |A| N (H-h)$ individual transitions from the MDP.

The dependence on $|S| |A|$ in the sample complexity for both of the classical algorithms described above is due to using empirical samples to evaluate $\hat{Q}_h$ for every state-action pair $(s,a) \in S \times A$. The assumption that $Q^*_h$ is at most rank $d$ imposes constraints on the relationship between $Q^*_h(s,a)$ at different state-action pairs, such that by approximating $Q^*_h$ using empirical samples at only $O(d|S| + d|A|)$ locations, we should intuitively be able to use the low rank constraint to predict the remaining entries. Let $\Omega_h \subset S \times A$ denote the subset of entries $(s,a)$ for which we use empirical samples to approximate $\hat{Q}_h(s,a)$, computed via either \eqref{eq:Q_hat_EVI} or \eqref{eq:Q_hat_MCPI}. Given estimates of $\hat{Q}_h(s,a)$ at $(s,a) \in \Omega_h$, we can then use a low-rank matrix estimation subroutine to estimate the $Q$ function for $(s,a) \not\in \Omega$. This is the core concept of our algorithm, which we then combine with the two classical approaches of empirical value iteration and Monte Carlo policy iteration.

\subsection{Formal Algorithm Statement} \label{sec:algoStatement}
We present two Low Rank RL algorithms, which take as input any \tsedit{matrix estimation algorithm, \texttt{ME}$(\cdot)$, that takes in a subset of entries of the matrix and returns an estimate of the whole matrix, the sets $\{\Omega_h\}_{h\in[H]}$ that indicate the state action pairs for which data should be collected by querying the MDP generative model, and $\{N_{s, a, h}\}_{(s, a, h) \in S\times A\times H}$, which denotes how many samples to query at state-action pair $(s,a)$ at timestep $h$.} We use ``Low Rank Empirical Value Iteration'' (LR-EVI) to refer to the algorithm which uses option (a) for Step 1 below, and we use ``Low Rank Monte Carlo Policy Iteration'' (LR-MCPI) to refer to the algorithm which uses option (b) for Step 1. \tsdelete{For ease of notation, we define
\[
N_{s, a, h} \coloneqq \begin{cases} N_h^{\#} \text{ if } (s, a) \in \Omega_h^{\#} = S^{\#}_h \times A^{\#}_h\\
N_h \text{ otherwise.}
\end{cases}
\]}

\medskip \noindent
{\bf Hyperparameters:} $\{\Omega_h\}_{h\in [H]}, \{N_{s, a, h}\}_{(s, a, h) \in S\times A\times H}$, and \texttt{ME}$(\cdot)$

\medskip \noindent
{\bf Initialize:} Set $\hat{V}_{H+1}(s) = 0$ for all $s$, and let $\hat{\pi}^{H+1}$ be any arbitrary policy.

\medskip\noindent
For each $h \in \{H, H-1, H-2, \dots 1\}$ in descending order,
\begin{itemize}

\item {\bf Step 1:} For each $(s,a) \in \Omega_h$, compute $\hat{Q}_h(s,a)$ using empirical estimates according to either (a) empirical value iteration or (b) Monte Carlo policy evaluation.
\begin{itemize}
\item[(a)] {\bf Empirical Value Iteration:} Collect $N_{s, a, h}$
  samples of a single transition starting from state $s$ and action $a$ at step $h$. 
Use the samples to estimate $\hat{Q}_h(s,a)$ according to
\begin{align*}
\hat{Q}_{h}(s,a) &= \hat{r}_h(s,a) + \E_{s' \sim \hat{P}_h(\cdot|s,a)}[\hat{V}_{h+1}(s')],
\end{align*}
where $\hat{r}_h(s,a)$ denotes the empirical average reward of the $N_{s, a, h}$ samples from $R_h(s,a)$, and $\hat{P}_h(\cdot | s,a)$ denotes the empirical distribution over the $N_{s, a, h}$ states sampled from $P_h(\cdot | s,a)$.
\item[(b)] {\bf Monte Carlo Policy Evaluation:} Collect $N_{s, a, h}$ independent full trajectories starting from state $s$ and action $a$ at step $h$ until the end of the horizon $H$, where actions are chosen according to the estimated policy $(\hat{\pi}_{h+1}, \hat{\pi}_{h+2}, \dots \hat{\pi}_H)$. Let $\hat{Q}_h(s,a) = \hat{r}^{\text{cum}}_h(s,a)$, where $\hat{r}^{\text{cum}}_h(s,a)$ denotes the empirical average cumulative reward across the $N_{\cyedit{s,a,}h}$ trajectories starting from $(s,a)$ at step $h$. If $(r_h^i, r_{h+1}^i, \dots r_H^i)$ denotes the sequence of rewards along the $i$-th sampled trajectory from $(s,a)$ at step $h$, then 
\begin{align*}
\hat{Q}_h(s,a) = \hat{r}^{\text{cum}}_h(s,a) := \tfrac{1}{N_{s, a, h}} \textstyle\sum_{i=1}^{N_{s, a, h}} \textstyle\sum_{t = h}^H r_t^i.
\end{align*} 
\end{itemize}
\item {\bf Step 2:} Predict the action-value function for all $(s,a) \in S \times A$ according to $\texttt{ME}(\cdot)$:
\begin{align*}
\bar{Q}_h = \texttt{ME}\left(\{\hat{Q}_h(s,a)\}_{(s,a) \in \Omega_h}\right).
\end{align*}
\item {\bf Step 3:} Compute the estimates of the value function and the optimal policy according to
\begin{align*}
\hat{V}_h(s) = \max_{a \in A} \bar{Q}_{h}(s,a) ~\text{ and }~
\hat{\pi}_h(s) = \delta_{\argmax \bar{Q}_h(s,a)}.
\end{align*}
\end{itemize}

    The tabular MDP variant of the algorithm proposed in \cite{serl} is equivalent to LR-EVI where anchor states $S_h^\#$ and actions $A_h^\#$ are given and $\Omega_h = (S_h^{\#} \times A) \cup (S \times A_h^{\#})$. Furthermore, LR-EVI is equivalent to a modification of the algorithm in \cite{Yang2020Harnessing} with a different choice for the matrix estimation algorithm used in Step 2 and the corresponding sample set $\Omega_h$ constructed in Step 1. \tsdelete{See Appendix \ref{sec:pseudo} for the pseudo code of LR-MCPI and LR-EVI with the matrix estimation subroutine defined below.}

\subsection{Matrix Estimation Subroutine} \label{sec:MES}

A critical piece to specify for the algorithm above is how to choose the subset $\Omega_h$, and what matrix estimation subroutine \texttt{ME}$(\cdot)$ to use to predict the full $Q_h$ function, where $Q_h$ is $Q^*_h$, $Q^\pi_h$, or $Q'_h = r_h + P_h\hat{V}_{h+1}$ depending on the low-rank setting, given $\hat{Q}_h(s,a)$ for $(s,a) \in \Omega_h$. The performance of any matrix estimation algorithm will depend both on the selected subset $\Omega_h$, as well as the entrywise noise distribution on $\hat{Q}_h(s,a)$ relative to the ``ground truth'' low-rank matrix. As a result, the subset $\Omega_h$ should be determined jointly with the choice of matrix estimation algorithm.

A limitation of a majority of results in the classical matrix estimation literature is that they do not admit entrywise bounds on the estimation error, and the analyses may be sensitive to the distribution of the observation error \tsedit{, i.e., require mean-zero sub-Gaussian noise. When estimating $Q^*_h$, the observations are biased unless one has learned the optimal policy at time steps $h +1$ to $H$. Since $Q_h$ is low-rank under our assumptions, our estimates of the observations for the matrix estimation method are unbiased with bounded noise, therefore enabling us to relax the small discount factor requirement. 
}

Many standard analyses of RL algorithms rely upon the construction of entrywise confidence sets for the estimates of the $Q$ function. Our results and analyses rely on entrywise error bounds for the matrix estimation step that balance the worst case entrywise error amplification with the size of the observation set. As such, similar theoretical guarantees can be obtained for our algorithm under any matrix estimation method that admits suitable entrywise error bounds.

The majority of our theoretical results will be shown for the variant of the algorithm that uses a matrix estimation algorithm from \cite{serl}, which is incidentally equivalent to exploiting a skeleton decomposition of a low rank matrix \cite{pseudo}.
Their algorithm uses a specific sampling pattern, in which $\Omega_h$ is constructed according to $\Omega_h = (S^{\#} \times A) \cup (S \times A^{\#})$, where $S_h^{\#}$ and $A_h^{\#}$ are assumed to be valid anchor states and actions for the matrix $Q_h$ (cf.\ Definition~\ref{asm:anchor}). Given estimates $\hat{Q}_h(s,a)$ for all $(s,a) \in \Omega_h$, their algorithm estimates the $Q$ function at all state action pairs according to
\begin{align} \label{eq:Qbar_estimate}
\bar{Q}_h(s,a) = \hat{Q}_h(s, A^{\#})\left[ \hat{Q}_h(S^{\#},A^{\#})\right]^{\dagger} \hat{Q}_h(S^{\#}, a),
\end{align}
where $M^{\dagger}$ denotes the pseudoinverse of $M$, and $\bar{Q}$ is the output of the matrix estimation algorithm. 
The simple explicit formula for the estimates enables direct entrywise error bounds. Instead of ensuring a uniform error bound over each state-action pair in $(S^{\#} \times A) \cup (S \times A^{\#})$, we show that additional sampling of the anchor submatrix $\Omega_h^{\#} = S^{\#} \times A^{\#}$ yields a smaller error amplification compared to the method propsosed in \cite{serl}.
In addition, we show that if $Q_h$ is $\mu$-incoherent, introduced in Definition~\ref{asm:inco}, $\Tilde{O}(\mu d, \kappa)$-anchor states and actions  can be constructed randomly by including each state in $S^{\#}$ independently with probability $p_1 = \Theta(\mu d \log(|S|)/|S|)$ and including each action in $A^{\#}$ independently with probability $p_2 = \Theta(\mu d \log(|S|)/|S|)$. 
As a result, a priori knowledge of the anchor states and actions is not required under these regularity conditions.

In Section \ref{sec:CME}, we show that our theoretical results also extend to the variation of our algorithm that uses soft nuclear norm minimization for matrix estimation alongside uniform Bernoulli sampling, utilizing entrywise guarantees shown in \cite{chen2019noisy}. \tsedit{One matrix estimation algorithm that solves the soft nuclear norm minimization problem is \texttt{Soft-Impute} \cite{softImpute}.
\texttt{Soft-Impute} proceeds by iteratively filling in the missing values by using a soft-thresholded singular value decomposition on the matrix of observed entries. In contrast to the sampling pattern used in the matrix estimation method given in Equation \ref{eqn:me}, the sampling pattern needed to ensure the entrywise guarantees from \cite{chen2019noisy} assumes that each state-action pair is observed with probability $p_{SI}$. 

We use LR-EVI (resp., LR-MCPI) $+$ \texttt{SI} to refer to the algorithm that uses option (a) (resp., option (b)) for Step 1 and \texttt{Soft-Impute} as the matrix estimation method.
In Section \ref{sec:experiments}, we empirically compare LR-EVI and LR-MCPI for both variations of matrix estimation algorithms.}

\section{Main Results}\label{sec:results}

In this section, we present the sample complexity, i.e., an upper bound on the number of observed samples of the reward and next state, guarantees for LR-MCPI and LR-EVI with the matrix estimation method presented in Section \ref{sec:MES} under different low-rank assumptions, from the weakest to the strongest. \cyedit{For $(s,a) \notin \Omega_h^{\#} = S^{\#}_h \times A^{\#}_h$, we denote $N_{s,a,h} = N_h$, For $(s,a) \in \Omega_h^{\#} = S^{\#}_h \times A^{\#}_h$, we denote $N_{s,a,h} = N_h^{\#} = \alpha^2 k^2 N_h$, such that entries in the anchor submatrix get a factor of $\alpha^2 k^2$ more samples.}\cydelete{\tsedit{In the following theorems, $N_{s, a, h}$ is the number of samples the algorithm observes at each $(s,a) \in \Omega_h$, and recall $N_{s, a, h}$ is defined as,
\[
N_{s, a, h} \coloneqq \begin{cases} N_h^{\#} \text{ if } (s, a) \in \Omega_h^{\#} = S^{\#}_h \times A^{\#}_h\\
N_h \text{ otherwise.}
\end{cases}
\]
}}
\begin{theorem} \label{thm:gap} 
Assume that $Q^*_h$ is rank $d$ and has suboptimality gap $\Delta_{\min}$ (Assumptions \ref{asm:lrqt} and \ref{asm:sog}), and $S^\#_h, A^\#_h$ are $(k, \alpha)$-anchor states and actions for  $Q^*_h$ for all $h \in [H]$. Let $N_{h} = \Tilde{O}\left( (H-h+1)^2 \alpha^2 k^2/\Delta_{\min}^2 \right)$ and $N_{h}^{\#}  = \alpha^2k^2 N_h$. LR-MCPI returns an optimal policy with probability at least $1-\delta $ with a sample complexity of $
\Tilde{O}\left((|S|+|A|)\alpha^2 k^3 H^4/\Delta_{min}^2 + 
\alpha^4k^6 H^4/\Delta_{\min}^2\right)$.
\end{theorem}

\tsedit{The dependence on the rank $d$ is not explicitly shown in the sample complexities stated in these theorems as it is captured by $k$, which we bound with Lemma \ref{lem:anchor_sampling} (presented later in this section). }
In the tabular setting, there always exists a $\Delta_{\min} > 0$. This sample complexity improves upon $|S||A|$ when  $\Delta_{\min}$ is greater than $|S|^{-1/2} \wedge |A|^{-1/2}$. When $\Delta_{\min}$ is small, if stronger low-rank assumptions also hold, then the results in Theorems \ref{thm:qnolr} and \ref{thm:tklr} below may provide stronger bounds. 

Under the assumption that the $Q_h^{\pi}$ function is low rank for all $\eps$-optimal policies, Theorem \ref{thm:qnolr} states that LR-MCPI learns an $\eps$-optimal policy with a sample complexity independent of $\Delta_{\min}$.


\begin{theorem}\label{thm:qnolr} 
Assume that for all $\eps$-optimal policies $\pi$, $Q^{\pi}_h$ is rank $d$ (Assumption \ref{asm:lrqe}), and  $S^\#_h, A^\#_h$ are $(k, \alpha)$-anchor states and actions for $Q^{\hat{\pi}}_h$, where $\hat{\pi}$ is the learned policy from LR-MCPI for all $h \in [H]$. Let $N_{h} = \Tilde{O}\left((H-h+1)^2\alpha^2 k^2H^2/\eps^2\right)$ and $ N^{\#}_h =  \alpha^2k^2 N_h$. Then, LR-MCPI returns an $\eps$-optimal policy and action-value function 
with probability at least $1-\delta$ with a sample complexity of $
\Tilde{O}\left((|S|+|A|)\alpha^2 k^3 H^6 /\eps^2 
+\alpha^4k^6H^6/\eps^2 \right)$.
\end{theorem}


The strongest assumption that the transition kernel has low Tucker rank and the reward function is low rank, implies that $Q^{\pi}_h$ for all policies $\pi$ is low rank. As such, the result in Theorem~\ref{thm:qnolr} also implies an efficient sample complexity guarantee for LR-MCPI under Assumption~\ref{asm:lrtk}. We can further remove a factor of $H$ by using LR-EVI instead. Empirical value iteration (see Step 1(a)) reduces the sample complexity by a factor of $H$ since it does not require sampling a full rollout of the policy to the end of the horizon, as required for the Monte Carlo estimates (see Step 1(b)).

\begin{theorem} \label{thm:tklr} 
Assume that for any $\eps$-optimal value function $V_{h+1}$, the matrix corresponding to $Q'_h = [r_h + [P_h V_{h+1}]]$ is rank $d$ (a consequence of Assumption \ref{asm:lrtk}), and  $S^\#_h, A^\#_h$ are $(k, \alpha)$-anchor states and actions for $\hat{Q}'_h = [r_h + [P_h \hat{V}_{h+1}]]$, where $\hat{V}_{h+1}$ is the learned value function from LR-EVI for all $h \in [H]$. Let $N_{h} = \Tilde{O}\left((H-h+1)^2 \alpha^2 k^2 H^2 / \eps^2 \right)$ and $N_h^{\#} = \alpha^2k^2 N_h$. Then, LR-EVI returns an $\eps$-optimal $Q$ function and policy 
with probability at least $1-\delta$ with a sample complexity of  $
\Tilde{O}\left((|S|+ |A|)\alpha^2k^3 H^5 /\eps^2 + \alpha^4k^6H^5/\eps^2 \right)$.
\end{theorem}

From Proposition \ref{prop:lrEst}, under Assumption \ref{asm:lrtk} (low-rank transition kernel and expected rewards), the matrix corresponding to $[r_h + [P_h\hat{V}_{h+1}]]$ has rank $d$ for any value function estimate $\hat{V}_{h+1}$. This is critical to the analysis of LR-EVI as it guarantees that the expectation of the matrix $\bar{Q}_h$ constructed from Empirical Value Iteration in Step 1(a) is low rank. This property is not satisfied by Assumptions \ref{asm:lrqe} and \ref{asm:lrqt}, and as such the analysis for Theorem \ref{thm:tklr} does not extend to these weaker settings. Additionally, this property eliminates the need for constructing estimates with rollouts, which removes a factor of $H$ in the sample complexity compared to LR-MCPI under Assumption \ref{asm:lrqe}. 

Our sample complexity bounds depend on $k, \alpha$, presuming that the algorithm uses some given set of $(k, \alpha)$-anchor states and actions. When there may not be a domain expert to suggest anchor states and actions, we show in the next lemma that one can construct $(k, \alpha)$-anchor states and actions with high probability by random sampling, where $k$ and $\alpha$ scale with the incoherence and the bounded condition number of the target matrix. 
\begin{lemma} \label{lem:anchor_sampling}
Let $Q_h$ be a rank $d$, $\mu$-incoherent matrix with condition number $\kappa$. Let $S^{\#}$ and $A^{\#}$ be constructed randomly such that each state $s$ is included in $S^{\#}$ with probability $p_1 = \Theta(d\mu/\log(|S|))$, and each action $a$ is included in $A^{\#}$ with probability $p_2 = \Theta(d\mu/\log(|A|))$. With probability $1 - O\left(H(|S|\wedge |A|)^{-10}\right)$, $S^{\#}$ and $A^{\#}$ are $(k,\alpha)$ anchor states and actions for $Q_h$ for $k = \tilde{O}(\mu d)$ and $\alpha = O(\kappa)$. 
\end{lemma}
Lemma \ref{lem:anchor_sampling} asserts that without a priori knowledge, one can find a set of $\Tilde{O}(\mu d, \kappa)$-anchor states and actions using the sampling subroutine defined in Section \ref{sec:MES}, given that the corresponding matrix is $\mu$-incoherent with condition number $\kappa$. 

\medskip \noindent
{\bf Comparison to Impossibility Result in Theorem~\ref{thm:explb}. }
Recall that Theorem~\ref{thm:explb} establishes an exponential $4^H$ lower bound for learning a near-optimal policy in MDPs with low-rank $Q^*$. While the constructed MDP has a constant suboptimality gap, the lower bound does not contradict Theorem~\ref{thm:gap} which achieves a poly$(H)$ sample complexity for LR-MCPI under a \cyedit{stronger} generative model, i.e. after estimating the optimal action at step $h$, LR-MCPI can subsequently sample full trajectories from the estimate of the optimal policy, which would then include entry $(2,2)$, which was prohibited in the setup of Theorem~\ref{thm:explb}. In contrast, LR-EVI does not admit an efficient sample complexity for the MDP constructed in Section \ref{sec:lowerbound}, and one can show that it exhibits exponential blowup in the estimation error due to an amplification of the estimation error in the terminal step when propagating the estimates backwards via value iteration. The MDP does not have a low rank transition kernel violating Assumption \ref{asm:lrtk}, as needed for Theorem \ref{thm:tklr}.

\subsection{Discussion of Optimality} \label{sec:disc}

Theorems \ref{thm:gap}, \ref{thm:qnolr} and \ref{thm:tklr} show that under our various low rank assumptions, LR-MCPI and LR-EVI learn near-optimal polices in a sample efficient manner, decreasing the dependence of sample complexity on $S$ and $A$ from $|S||A|$ to $|S|+|A|$. Furthermore in Lemma \ref{lem:lrlb} we establish a $d(|S|+|A|)H^3/\eps^2$ sample complexity lower bound for MDPs with low rank reward and transition kernel in the sense of Assumption~\ref{asm:lrtk} via minor modifications of existing lower bounds for tabular MDPs. Since Assumption~\ref{asm:lrtk} implies the optimal $Q^*$ function is low rank, the same lower bound holds for the latter setting. Comparing our results to the lower bound, it follows that the dependence on $|S|,|A|$, and $\eps$ in our sample complexity upper bound is minimax optimal.

\begin{lemma}\label{lem:lrlb}
For any algorithm, there exists an MDP $M = (S, A, P, R, H)$ with rank $d$ reward $R_h$ and transition kernel $P_h$ for all $h \in [H]$ such that $\Omega \left( d(|S|+|A|)H^3/\eps^2 \right)$ samples are needed to learn an $\eps$-optimal policy with high probability. 
\end{lemma}

\begin{proof}
Existing lower bounds from \cite{sidford2019nearoptimal} prove the necessity of $\Omega(d|S|H^3/\eps^2)$ samples to learn an $\eps$-optimal policy with high probability for a time-homogeneous MDP with $|S|$ states and $d$ actions. Replicating each action $|A|/d$ times results in an MDP $|A|$ actions and rank $d$ reward functions and  transition kernels, and this MDP is at least as hard as the original MDP. Repeating this construction with an MDP with $d$ states and $|A|$ actions proves an $\Omega(d|A|H^3/\eps^2)$ sample complexity lower bound. Combining these two lower bounds proves the lemma.
\end{proof}

As an aside we also point out that previously shown lower bounds for linearly-realizable MDPs \cite{wang2021exponential, weisz2021exponential} are not directly applicable to our setting, as the constructed instances therein need not have low-rank $Q^*$ or transition kernels, and the size of their state space scales exponentially in $d$.

Our sample complexity bounds depend on $k$ and $\alpha$, the size and quality of the $(k,\alpha)$-anchor sets. As stated in Lemma \ref{lem:anchor_sampling}, we can construct a set of $\Tilde{O}(\mu d, \kappa)$-anchor states and actions for any $\mu$-incoherent matrix with condition number $\kappa$ simply by randomly sampling a subset of state and action. 
The results presented in the table in Section \ref{sec:introduction} are obtained by substituting $k = \Tilde{O}(d\mu)$ and $\alpha = O(\kappa)$ into the sample complexity bounds in Theorems \ref{thm:gap}, \ref{thm:qnolr} and \ref{thm:tklr} and treating $\mu$ and $\kappa$ as constants, as is standard in the matrix estimation literature, e.g., \cite{abbe2020entrywise}. 

In the event that there is a domain expert who provides a set of $(k,\alpha)$-anchor states and actions, then the sample complexity bound may be better by using the given set rather than randomly sampling if $\mu$ and $\kappa$ are large. Note that $k$ must be minimally at least $d$, 
and the quality of a given set of anchor states and actions depends on the smallest singular value associated to the anchor submatrix as reflected in $\alpha$, which for poorly chosen anchor state and actions could scale with $H$.

In Theorems \ref{thm:gap}, \ref{thm:qnolr} and~\ref{thm:tklr}, the cubic dependence on $d$ is likely suboptimal, but this results from the  suboptimal dependence on $d$ in the corresponding entrywise error bounds in the matrix estimation literature \cite{serl, abbe2020entrywise, chen2019noisy}. Without knowledge of good anchor states/actions from a domain expert, the dependence on $\mu$ that arises from randomly sampling anchor states and actions is not surprising, as it also commonly arises in the classical matrix estimation literature under uniform sampling models. Any improvements in the matrix estimation literature on the dependence on $d,\mu$ would directly translate into improved bounds via our results.


\tsedit{Our dependence on the horizon $H$ is fairly standard as it matches the dependence on $H$ for vanilla $Q$-value iteration. There is a gap between the dependence on $H$ in our upper bounds and the $H^3$ lower bound in Lemma~\ref{lem:lrlb}, which is given for homogeneous MDPs. Our upper bound results allow for nonhomogenous rewards and transition kernels, which would likely increase the lower bound to $H^4$. Reducing the upper bounds to $H^4$ would likely require using the total variance technique from \cite{sidford2019nearoptimal}, which requires estimates of the variance of the policy at a given state-action pair. One can show that the variance of the Bellman operator is low rank under the strongest assumption of a low Tucker rank transition kernel, but the corresponding rank of the matrix of variances is $O(d^2)$. Hence, while it may be possible to adapt this variance technique to achieve the optimal dependence on $H$ in our low-rank settings, doing so may incur a significantly worse dependence on $d$, i.e., $d^{6}$. }

\subsection{Proof Sketch}\label{sec:proofSketch}
The analysis of LR-MCPI and LR-EVI are fairly similar, and involves first showing that upon each application of the matrix estimation subroutine stated in \eqref{eq:Qbar_estimate}, the amplification of the entrywise error is bounded, as stated below in Lemma \ref{lem:ME_error_amplification}.
\begin{lemma} \label{lem:ME_error_amplification}
Let $S^{\#}$ and $A^{\#}$ be $(k, \alpha)$-anchor states and actions for matrix $Q_h$. For all $(s,a) \in \Omega_h = (S^{\#} \times A) \cup (S \times A^{\#}) \setminus (S^{\#} \times A^{\#})$, assume that $\hat{Q}_h(s,a)$ satisfies 
$|\hat{Q}_h(s,a) - Q_h(s,a)| \leq \eta$, and for all $(s, a) \in S^{\#} \times A^{\#}$, assume that $\hat{Q}_h(s,a)$ satisfies 
$|\hat{Q}_h(s,a) - Q_h(s,a)| \leq \eta^{\#}$. Then, for all $(s,a) \in S \times A$, the estimates $\bar{Q}_h(s,a)$ computed via \eqref{eq:Qbar_estimate} satisfy 
\[\left|\bar{Q}_h(s,a) - Q_h(s,a)\right| = O(\alpha k \eta + \alpha^2k^2 \eta^{\#}).\]
\end{lemma}
\paragraph{Proof Sketch for Lemma \ref{lem:ME_error_amplification}} \tsedit{As our algorithm constructs $\hat{Q}_h(s,a)$ for $(s,a) \in \Omega_h$ via averaging over samples from the MDP, the condition $|\hat{Q}_h(s,a) - Q_h(s,a)| \leq \eta$ is satisfied with high probability for $\eta = O((H-h)/\sqrt{N_{s, a, h}})$, shown via a simple application of Hoeffding's inequality.} To prove Lemma \ref{lem:ME_error_amplification}, we show that the error is bounded by
\begin{align*}
\left|\bar{Q}_h(s,a) - Q_h(s,a)\right| 
&\lesssim \left \|[\hat{Q}_h(S^\#, A^\#)]^\dagger \right\|_{op} ~~\cdot~~\left \|\hat{Q}_h(S^\#, a)\hat{Q}_h(s, A^\#) - Q_h(S^\#, a)Q_h(s, A^\#)   \right \|_F\\
&+
\left\| \left[ \hat{Q}_h(S^{\#},A^{\#})\right]^{\dagger} - \left[ Q_h(S^{\#},A^{\#})\right]^{\dagger} \right\|_{op} ~~\cdot~~
\left\|Q_h(S^{\#}, a) Q_h(s, A^{\#})\right\|_F \\
&\lesssim \left(\frac{1}{\sigma_d(Q_h(S^{\#},A^{\#})))} \right)~~\cdot~~ k \|Q_h\|_{\infty}(2\eta + \eta^2) \\
&+ \left( \frac{\eta^{\#} k}{(\sigma_d(Q_h(S^{\#},A^{\#})))^2}\right) ~~\cdot~~ \|Q_h\|^2_{\infty} k = O(\alpha k \eta + \alpha^2 k^2 \eta^{\#}).
\end{align*}
\tsedit{The first inequality comes from an application of the triangle inequality and the definition of the operator norm since for any rank $d$ matrix $Q$ with $(k, \alpha)$-anchor states and actions, for all $(s, a) \in S\times A$, $Q(s, a) = Q(s, A^\#) [Q(S^\#, A^\#)]^\dagger Q(S^\#, a) $. The operator norm terms are bounded using Weyl's inequality and a classic result from the perturbation of pseudoinverses. The other two terms are bounded by our assumption on $\hat{Q}_h$ and that the reward functions are bounded by one. }

As $\eta$ is the dominant error term as $\eta^{\#}$ is the error on the small anchor sub-matrix, $\{\bar{Q}_h(s, a)\}_{(s,a) \in S^{\#}\times A^{\#}}$ with size $\tilde{O}(k) \times \tilde{O}(k)$, the critical insight from Lemma \ref{lem:ME_error_amplification} is that the amplification of the error due to matrix estimation is only a factor of $\alpha k$, which is constant for a good choice of anchor states and actions. We set $N_{s, a, h}$ for each $(s,a) \in \Omega_h$ to guarantee $\alpha k \eta$ and $\alpha^2 k^2 \eta^{\#}$ are sufficiently small for a subsequent induction argument that shows the algorithm maintains near optimal estimates of $Q^*$ and $\pi^*$. For each of the Theorems \ref{thm:gap}, \ref{thm:qnolr}, and \ref{thm:tklr}, we will apply Lemma \ref{lem:ME_error_amplification} to different choices of $Q_h$, chosen to guarantee that $\hat{Q}_h(s,a)$ is an unbiased estimate of $Q_h$. For Theorem \ref{thm:gap}, we choose $Q_h = Q^*_h$. For Theorem \ref{thm:qnolr}, we choose $Q_h = Q^{\hat{\pi}}_h$, where $\hat{\pi}$ is an $\eps$-optimal policy. For Theorem \ref{thm:tklr}, we choose $Q_h = r_h + [P_h\hat{V}_{h+1}]$, where $\hat{V}_{h+1}$ is the value function estimate for step $h+1$. 

Choosing $Q_h$ to be potentially distinct from $Q^*_h$  is a simple yet critical distinction between our analysis and \cite{serl}. The analysis in \cite{serl} applies a bound similar to Lemma \ref{lem:ME_error_amplification} with a choice of $Q_h = Q^*_h$. However, as $\hat{Q}_h$ will not be unbiased estimates of $Q^*_h$, the initial error $\eta$ will contain a bias term that is then amplified exponentially in $H$ when combined with an inductive argument for LR-EVI.

\paragraph{Proof Sketch for Lemma \ref{lem:anchor_sampling}} To prove that the random sampling method presented in Section \ref{sec:MES} finds $\tilde{O}(\mu d, \kappa)$-anchor states and actions with high probability, let us denote the singular value decomposition of matrix $Q_h$ with $U \Sigma V^T$. For a randomly sampled set of anchor states and actions $S^{\#}$ and $A^{\#}$, let $\Tilde{U}$ and $\Tilde{V}$ denote the submatrices of $U$ and $V$ limited to $S^{\#}$ and $A^{\#}$, such that the anchor submatrix $Q_h(S^{\#},A^{\#})$ is given by $\Tilde{U}\Sigma\Tilde{V}^T$. By the matrix Bernstein inequality \cite{2011}, when rows and columns are sampled uniformly with probability $p_1 = \Theta(d\mu/\log(|S|)), p_2 = \Theta(d\mu/\log(|A|))$, the columns of $\Tilde{U}$ and $\Tilde{V}$ are nearly orthogonal. In particular, with high probability
\[\|p_1^{-1} \Tilde{U}^T \Tilde{U} - I_{d\times d}\|_{op} \leq \frac12
~~\text{ and }~~ \|p_2^{-1} \Tilde{V}^T \Tilde{V} - I_{d\times d}\|_{op} \leq \frac12,\]
implying that the anchor submatrix is rank $d$. By an application of the singular value version of the Courant-Fischer minimax theorem \cite{horn_johnson_1985}, we can relate $\sigma_d(Q_h(S^{\#},A^{\#}))$ to $\sigma_d(Q_h)$ to show that 
\[\alpha = \max_h \|Q_h\|_\infty/\sigma_d(Q_h(S^\#_h, A^\#_h)) = O(\kappa).\]

\medskip \noindent
{\bf Inductive Argument for Main Theorems.}
The final step is to use the error analysis of each iteration in an inductive argument that argues the estimated policy at each step is near optimal. As the induction argument is similar across all three theorems, we present the inductive argument for Theorem \ref{thm:qnolr}, and refer readers to the Appendix for the full proofs of all the theorems.
For Theorem \ref{thm:qnolr}, the induction step is that if $\hat{\pi}_{H-t+1}$ is $t\eps/H$-optimal, then for time step $H-t$, the policy found with LR-MCPI, $\hat{\pi}_{H-t}$, is $(t+1)\eps/H$-optimal. We then induct backwards across horizon, i.e. $t \in \{1, \dots H\}$.

\cydelete{\tsedit{Recall that we use the following notation for convenience,  \[
N_{s, a, h} \coloneqq \begin{cases} N_h^{\#} \text{ if } (s, a) \in \Omega_h^{\#} = S^{\#}_h \times A^{\#}_h\\
N_h \text{ otherwise.}
\end{cases}
\]}
The base case when $t=1$ is that $\hat{\pi}_H$ is $\epsilon/H$-optimal with high probability. This follows by showing that by Hoeffding's inequality, for 
$N_{H} = \Tilde{O}\left(\alpha^2 k^2H^2/\eps^2\right), N_H^{\#} = \alpha^2 k ^2 N_H$, 
the error of the initial estimates $\hat{Q}_H(s,a)$ for $(s,a) \in \Omega_H$ are small enough, such that an application of Lemma \ref{lem:ME_error_amplification} gives $|\bar{Q}_H(s,a) - Q^*_H(s,a)| \leq \epsilon/2H$. As a result the policy $\hat{\pi}_H$ that results from choosing greedily with respect to the estimates in $\bar{Q}_h$ will be an $\epsilon/H$-optimal policy.}

\tsedit{To show the induction step, first we argue that by Hoeffding's inequality, for $(s,a) \in \Omega_{H-t}$, with high probability $|\hat{Q}_{H-t} - Q^{\hat{\pi}}_{H-t}|= O(\epsilon/\alpha^2 k^2 H)$ for $N_{H-t} =  \Tilde{O}\left((t+1)^2\alpha^2 k^2H^2/\eps^2\right) , N_{H-t}^{\#} = \alpha^2k^2 N_{H-t}$.} It is critical that $\hat{Q}_{H-t}$ are indeed unbiased estimates of $Q^{\hat{\pi}}_{H-t}$ as the estimate is constructed via Monte Carlo rollouts. By Assumption \ref{asm:lrqe} and the inductive hypothesis, $Q^{\hat{\pi}}_{H-t}$ is low rank, such that by an application of Lemma \ref{lem:ME_error_amplification}, it follows that for all $(s,a) \in S \times A$, $|\bar{Q}_{H-t}(s,a) - Q^{\hat{\pi}}_{H-t}(s,a)| \leq \epsilon/2H$ for the appropriate choice of $N_{H-t}$. Finally we argue that assuming the inductive hypothesis, choosing greedily according to $\bar{Q}_{H-t}$ results in a $(t+1)\epsilon/H$-optimal policy. For some state $s$, we denote $a^* = \pi^*_{H-t}(s)$ and $\hat{a} = \hat{\pi}_{H-t}(s) = \max_{a} \bar{Q}_{H+t}(s,a)$. The final induction step is shown via \tsreplace{
\begin{align*}
V^*_{H-t}(s) - V^{\hat{\pi}}_{H-t}(s)
&= Q^*_{H-t}(s,a^*)- \bar{Q}_{H-t}(s,\hat{a})
+ \bar{Q}_{H-t}(s,\hat{a}) - Q^{\hat{\pi}}_{H-t}(s,\hat{a}) \\
&\leq |Q^*_{H-t}(s,a^*)- \bar{Q}_{H-t}(s,a^*)|
+ |\bar{Q}_{H-t}(s,\hat{a}) - Q^{\hat{\pi}}_{H-t}(s,\hat{a})| \\
&\leq |Q^*_{H-t}(s,a^*)-Q^{\hat{\pi}}_{H-t}(s,a^*)| 
+ |Q^{\hat{\pi}}_{H-t}(s,a^*) - \bar{Q}_{H-t}(s,a^*)|
+ \frac{\epsilon}{2H} \\
&\leq \max_{s'} (V^*_{H-t+1}(s')-V^{\hat{\pi}}_{H-t+1}(s'))
+ \frac{\epsilon}{2H}
+ \frac{\epsilon}{2H} \\
&\leq 
\frac{t\epsilon}{H}
+ \frac{\epsilon}{H} = \frac{(t+1)\epsilon}{H}.
\end{align*}}
{
\begin{align*}
V^*_{H-t}(s) - V^{\hat{\pi}}_{H-t}(s)
&= Q^*_{H-t}(s,a^*)- \bar{Q}_{H-t}(s,\hat{a})
+ \bar{Q}_{H-t}(s,\hat{a}) - Q^{\hat{\pi}}_{H-t}(s,\hat{a}) \\
&\leq |Q^*_{H-t}(s,a^*)-Q^{\hat{\pi}}_{H-t}(s,a^*)| 
+ |Q^{\hat{\pi}}_{H-t}(s,a^*) - \bar{Q}_{H-t}(s,a^*)|
+ |\bar{Q}_{H-t}(s,\hat{a}) - Q^{\hat{\pi}}_{H-t}(s,\hat{a})| \\
&\leq \max_{s'} (V^*_{H-t+1}(s')-V^{\hat{\pi}}_{H-t+1}(s'))
+ \frac{\epsilon}{2H}
+ \frac{\epsilon}{2H},
\end{align*}
where $\max_{s'} (V^*_{H-t+1}(s')-V^{\hat{\pi}}_{H-t+1}(s')) \leq t\eps/H$ from the induction hypothesis.}

The proof of Theorem \ref{thm:gap} involves a similar inductive argument except that given the stronger suboptimality gap assumption, we guarantee that $\hat{\pi}_h$ is an exactly optimal policy with high probability. This removes the linear growth in the error across the horizon that arises in Theorem \ref{thm:qnolr}, enabling us to reduce $N_h$ by $H^2$. The proof of Theorem \ref{thm:tklr} also involves a similar inductive argument, but under Assumption \ref{asm:lrtk}, we additionally show that at each time step $Q'_{H-t} = r_{H-t} + [P_{H-t}\hat{V}_{H-t+1}]$, the expected value of $\hat{Q}_{H-t}$ for LR-EVI, is close to not only $Q^*_{H-t}$ but also $Q^{\hat{\pi}}_{H-t}$, which ensures that LR-EVI not only recovers an $\eps$-optimal $Q$ function, but also an $\eps$-optimal policy.

\medskip \noindent
{\bf Sample Complexity Calculation.} The sample complexity of LR-MCPI is given by $\sum_h (H-h) (N_h |\Omega_h| + N_h^{\#}k^2)$, and the sample complexity of LR-EVI is given by $\sum_h (N_h |\Omega_h| + N_h^{\#}k^2)$. The set $|\Omega_h|$ scales as $O(k|S| + k|A|)$, where $k = \tilde{O}(\mu d)$ when the anchor states and actions are sampled randomly. The final sample complexity bounds result from substituting the choices of $N_h$ and $N_h^{\#}$ as specified in the statements of Theorems \ref{thm:gap}, \ref{thm:qnolr}, and \ref{thm:tklr} into the summation.
\subsection{Extension to Approximately Low-Rank MDPs}\label{sec:approx}
Our sample complexity results rely on either $Q^*_h$, $Q^\pi_h$, or $[r_h + P_h\hat{V}_{h+1}]$ having rank $d$, which may only be approximately satisfied. Furthermore, our algorithms require knowledge of the rank of those matrices, which may not be feasible to assume in practice. Hence, we extend our results under the low-rank reward and low Tucker rank transition kernel setting (Assumption \ref{asm:lrtk}) to a $(d,\xi_R, \xi_P)$-approximate low-rank MDP.
\begin{assumption}[$(\cyedit{d},\xi_R, \xi_P)$-Approximate Low-rank MDP]\label{asm:approx}
An MDP specified by $(S, A, P, R, H)$ is a $(d,\xi_R, \xi_P)$-approximate low-rank MDP if for all $h \in [H]$, there exists a rank $d$ matrix $r_{h, d}$ and a low Tucker rank transition kernel $P_{h, d}$ with Tucker rank either $(|S|, d, |A|)$ or $(|S|, |S|, d)$, such that $\forall h$,
\begin{align}
\max_{(s,a) \in S \times A} \left| r_h(s,a) - r_{h,d}(s,a)\right| \leq \xi_R
~~~~\text{ and }~~~~
\sup_{(s,a) \times A} 2d_{\mathrm{TV}}(P_h(\cdot|s, a), P_{h, d}(\cdot| s,a)) \leq \xi_P, \label{eq:approxP}
\end{align}
where $d_{TV}$ is the total variation distance.
\end{assumption}

Assumption \ref{eq:approxP} extends the exact low-rank Assumption \ref{asm:lrtk}, where $\xi_{R}$ is the entrywise low-rank approximation error of the reward function, and $\xi_{P}$ is the low-rank approximation error of the transition kernel in total variation distance. For small values of $\xi_R$ and $\xi_P$, the MDP can be approximated well by a rank $d$ MDP, and subsequently, it follows that for any estimate of the future value function, $r_h + [P_h \hat{V}_{h+1}]$ is close to a corresponding rank $d$ approximation. 
\begin{proposition}\label{prop:lrEstApprox}
Consider a $(d,\xi_R, \xi_P)$-approximate low-rank MDP with $\xi_{R}$, $\xi_{P}, r_{h,d},$ and $P_{h, d}$ as defined in Assumption \ref{eq:approxP}, with respect to the low rank approximation. For all $h \in [H]$ and any $\hat{V}_{h+1}$,
\[
\left|[r_{h,d} +  P_{h, d}\hat{V}_{h+1}] - [r_h +  P_{h}\hat{V}_{h+1}]\right|_\infty \leq \xi_{R} + (H-h) \xi_{P}.
\]
\end{proposition}

Theorem \ref{thm:tklrApprox} shows that LR-EVI with the matrix estimation routine defined in Section \ref{sec:MES} is robust with respect to the low rank approximation error.

\begin{theorem} \label{thm:tklrApprox}
Assume we have a $(d,\xi_R, \xi_P)$-approximate low-rank MDP (Assumption \ref{eq:approxP}) where $r_{h,d}$ and $P_{h,d}$ refer to the corresponding low rank approximations for the reward function and transition kernel. For all $h \in [H]$, let $S^\#_h, A^\#_h$ be $(k, \alpha)$-anchor states and actions for $Q'_{h,d} = [r_{h, d} +  P_{h, d}\hat{V}_{h+1}]$, where $\hat{V}_{h+1}$ is the learned value function from Low Rank Empirical Value iteration. Let $N_{H-t} = \Tilde{O}\left( k^2\alpha^2 H^4/\eps^2\right), N_{H-t}^{\#} = \alpha^2 k^2 N_{H-t}$ for all $t\in \{0,\ldots, H-1\}$. Then LR-EVI returns an $\left(\eps + \Tilde{O}\left( k^2 \alpha^2 \left(\xi_{R}H + \xi_{P} H^2 \right)\right)\right)$-optimal policy with probability at least $1-\delta$  with a sample complexity of $\Tilde{O}\left(k^3\alpha^2(|S|+|A|)H^5 / \eps^2 + k^6\alpha^4H^5/\eps^2 \right)$.
\end{theorem}
The proof of this theorem (see Appendix \ref{app:approx}) follows the same steps as the proof of Theorem \ref{thm:tklr} but additionally accounts for the low rank approximation error using applications of Proposition \ref{prop:lrEstApprox}. Proposition \ref{prop:lrEstApprox} is first used to bound the error between $\hat{Q}_h(s,a)$ and $\cyreplace{[r_{h,d} +  P_{h, d}\hat{V}_{h+1}]}{Q'_{h,d}}(s,a)$ for $(s,a) \in \Omega_h$. Second, the proposition is used to bound the second term in the below inequality which controls the error of our estimate relative to $Q^*_h$ and $Q^{\hat{\pi}}_h$:
\[
|\hat{Q}_h(s,a) - Q^*_h(s,a)| \leq |\hat{Q}_h(s,a) - Q'_{h, d}(s,a)|+ |Q'_{h, d}(s,a) - Q'_{h}(s,a)| + |Q'_{h}(s,a) - Q^*_h(s,a)|
\]
for all $(s,a) \in S \times A$ where $Q'_h = r_h + P_h\hat{V}_{h+1}$ and $Q'_{h, d} = r_{h, d} + P_{h, d}\hat{V}_{h+1}$.

Theorem \ref{thm:tklrApprox} shows that in the approximate rank setting, the error of the policy  our algorithm finds is additive with respect to the approximation error while remaining sample efficient. If $Q'_{h,d}$ is $\mu$-incoherent with condition number $\kappa$, one can use the result of Lemma \ref{lem:anchor_sampling} to find $\Tilde{O}(\mu d, \kappa)$-anchor states and actions without a priori/domain knowledge. 
\section{Experiments} \label{sec:experiments}
\tsedit{We empirically compare the performance of combining low-rank matrix estimation with empirical value iteration and Monte Carlo policy iteration on a tabular version of the Oil Discovery problem \cite{adaptDisc}. 
Our results also join other empirical works that show the benefit of using low-rank variants of RL algorithms on stochastic control problems \cite{Yang2020Harnessing, lrVfa, mlrvfa}.}

\medskip \noindent
\textbf{Experimental Setup:}  We formulate this problem as a finite-horizon MDP, where the state and action spaces are both $\{0,1,\ldots, D\}$ for $D = 399$, and the horizon length $H = 10$. The learner's goal is to locate the oil deposits over a 1 dimensional space, where the target location $l_h = \texttt{round}(400(1- \frac{1}{h}))$ changes with $h$ to make the learning task more difficult. At step $h$ the learner receives a reward $f_h(s)$ that depends on how close the learner is to the oil deposit at $l_h$, perturbed by a zero-mean Gausian noise with variance $\sigma_h^2(s, a)=(0.5 + a/400)^2/10$. The action $a$ chosen indicates what state the learner attempts to move to next, and the learner additionally pays a transportation cost proportional to the distance between $s$ and $a$, denoted by $c(s,a)$. As a result the reward function is
\[
r_h(s, a) = f_h(s) - c(s, a) + \mathcal{N}(0, \sigma_h^2(s,a)),
\]
where we choose $f_h(s)$ and $c(s,a)$ according to
\[
f_h(s) = 1 - \frac{1}{4}\left\lceil \frac{4}{D}\max\left(0, \left |s - l_h \right| - 20 \right) \right\rceil ~~\text{ and }~~ c(s, a) = \cyreplace{\alpha}{0.01} \times \texttt{round}\left(\frac{|s - a|}{100} \right),
\]
where $\texttt{round}(s)$ rounds $s$ to the nearest integer. 
$c(s,a)$ is discretized to take on only 5 distinct values, but the level sets of $c(s,a)$ are diagonal bands, such that $c(s,a)$ is in fact full rank. \tsedit{However, the stable rank of $c(s, a)$, as defined by $\|c(s, a)\|_F^2/ \|c(s, a)\|_*^2$ is only 1.46, which implies that $c(s,a)$ is close to a low-rank matrix \cite{stable}. See Appendix \ref{app:cost} for further discussion about $c(s, a)$. }


The learner's intended movements are perturbed, resulting in the following transition kernel:
\[
\P_h(s'|s, a) = \max\{0, \min \{D, \delta_a + \mathrm{Unif}(-C_h, C_h)\}\}
\]
where $\mathrm{Unif}(-C_h, C_h)$ denotes the discrete uniform distribution over $\{-C_h, -C_h+1,\ldots,C_h\}$ and $C_h = 4(H-h+1)$ \tsedit{\cydelete{is a positive constant that depends only on $h$ and not $s$ or $a$ that} determines the amount of noise in the transitions. Since $\E[ \P_h(s'|s, a) ]$ only depends on the time step $h$ and action $a$, it follows that the rows of $\E[ \P_h(s'|:, :) ]$ are the same and the rank of  $\E[ \P_h(s'|:, :) ]$ is one. Hence, the transition kernel has Tucker rank $(|S|, 1, |A|)$.
}  


\cycomment{Update this discussion if $r$ is not low rank.}
\tsedit{Because the reward function is approximately low-rank and the transition kernel has low Tucker rank, it follows that this MDP is approximately low rank (satisfying Assumption \ref{asm:approx}). See Appendix \ref{app:qStar} for a visualization of $Q^*_1$ and more discussion on the rank of $Q^*_h$.}

\medskip \noindent
\cyedit{\textbf{Algorithms:}} We compare LR-EVI, LR-EVI $+$ \texttt{SI}, LR-MCPI, and LR-MCPI $+$ \texttt{SI} with empirical value iteration (EVI) and Monte Carlo policy iteration (MCPI). Recall that  LR-EVI $+$ \texttt{SI} is essentially the same as LR-EVI 
but uses \texttt{Soft-Impute} from the \texttt{fancyimpute} package \cite{fancyimpute} for the matrix estimation method, whereas LR-EVI uses the matrix estimation algorithm presented in section \ref{sec:MES}.
\cyedit{The observation set, i.e. $\Omega_h$, that is used for \texttt{Soft-Impute} is a Bernoulli sampled subset of entries where the probability of including each entry is denoted $p_{SI}$.} Equivalently LR-MCPI $+$ \texttt{SI} is the same as LR-MCPI except that it uses \texttt{Soft-Impute} with Bernoulli sampled $|\Omega_h|$. The vanilla EVI (resp., MCPI) refers to our algorithm using option (a) (resp., option (b)) for Step 1 without the matrix estimation step, setting $\Omega_h = S \times A$ for all $h \in [H]$ and change Step 2 to be $\bar{Q}_h = \hat{Q}_h$.

To empirically validate the performance of the algorithms, for a fixed sample budget $\overline{N}$, we compare the max entrywise error of $\bar{Q}_1$ of all the algorithms. We test five different allocation schemes on how to distribute the $\overline{N}$ samples across the time steps to determine $N_{s, a, h}$ and use the best one for each algorithm. We ensure that an equal number of samples are allocated to each state-action pair. As $\overline{N}$ may not be divisible by $D^2$, the true samples used is within $D^2$ of $\overline{N}$ due to rounding. We show that LR-EVI and LR-MCPI are robust to $p = p_S = p_A$ as both algorithms perform well for a range of values of $p$, and it suffices to choose $p$ to be small. We perform a grid search to determine $p_{SI}$ for each value of $\overline{N}$, choosing the best performing parameter for each. See Appendix \ref{app:hyp} for the details on how we chose and set the hyperparameters of the algorithms.

\medskip \noindent
\cyedit{\textbf{Results:}} \tsedit{
For each value of $\overline{N} \in [10^6, 10^7, 10^8, 10^9]$, we run each of the above algorithms 10 times.
Figure~\ref{fig:barPlots} shows the average $\ell_\infty$ error of $\bar{Q}_1$ across the 10 simulations, along with error bars whose height indicates one standard deviation above and below the mean. Note that for vanilla EVI to produce an estimate, it requires one sample per $(s,a,h)$, which already requires $1.6 \times 10^6$ samples. For vanilla MCPI to produce an estimate, it requires one trajectory per $(s,a,h)$ of length $H-h+1$, which requires $8.8\times 10^6$ one-step samples. As a result, there is no bar depicted for either EVI or MCPI for $\overline{N} = 10^6$, as both algorithms require more than $10^6$ samples to even produce any estimate.

\begin{figure}[H]
    \centering
    \subfloat{{\includegraphics[width=.45\textwidth]{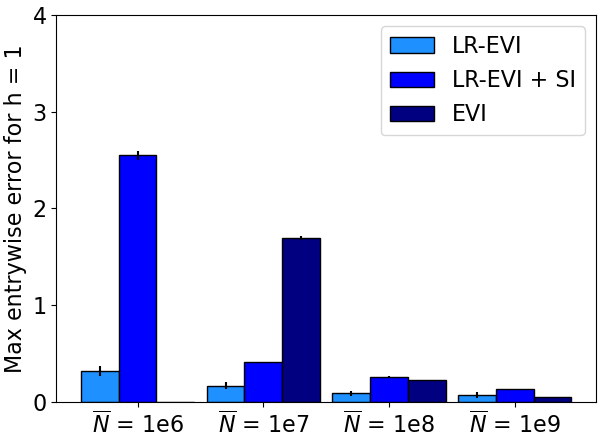} }}%
    \qquad
    \subfloat{{\includegraphics[width=.45\textwidth]{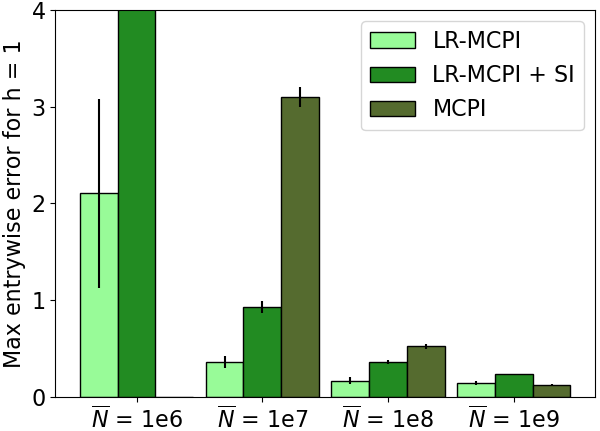} }}%
    \caption{Max entrywise error of $\bar{Q}_1$ vs. sample budget for LR-EVI, LR-EVI $+$ \texttt{Soft Impute}, empirical value iteration and LR-MCPI, LR-MCPI $+$ \texttt{Soft-Impute}, Monte Carlo policy iteration at $h = 1$. Note that the optimal $Q_1^*$ function ranges in value from roughly 8.3 to 9.6, such that 0.8 error would be roughly 10\% error.}
    \label{fig:barPlots}
\end{figure}
\noindent
For $\overline{N} = 10^6$, the error bar for LR-MCPI $+$ \texttt{SI} has a height of $11.5$ but is trimmed to align the $y$-axis in both graphs. 
}
\cyedit{Figure \ref{fig:barPlots} shows that when the sample budget is small ($\overline{N} = 10^6$), the low rank RL algorithms can still produce reasonable estimates even when there are not sufficient samples to even run the vanilla RL algorithms, i.e., less than $8.8\times 10^6$ one-step samples. Our chosen MDP is also not strictly low rank, but only approximately low rank, thus our results validate that our algorithms are not sensitive to the exact rank, as they perform very well on this approximately low rank MDP as well. The Monte Carlo Policy Iteration variants seem to require more samples to achieve the same performance relative to Empirical Value Iteration variants. This is expected as the sample complexity of MCPI is multiplied by $H$ due to sampling entire trajectories rather than one step samples. The MDP in this illustration is well-behaved for LR-EVI as it has a low rank transition kernel, but the practical benefit of LR-MCPI is that it is more robust to MDPs that may not have low rank structure in the transition kernel, as exhibited by the MDP constructed in Section \ref{sec:lowerbound}.}

\tsedit{We also compare the performance of EVI and MCPI and their low-rank variants on the Double Integrator, a stochastic control problem, see Appendix \ref{app:DI} for full details. The results from the Double Integrator simulations also show the benefit of the low-rank methods when the sample budget is small; LR-MPCI produces a reasonable estimate of $Q^*_1$ even when there are not sufficient samples to run MCPI. However, LR-EVI and LR-MPCI are sensitive to the choice of matrix estimation method, so in practice, one should carefully tune the matrix estimation methods' hyperparameters  given computational limits on storage and runtime. When the sample budget is large, the low-rank methods lose their advantage and may even perform worse than tabular variants. }

\tsdelete{Table~\ref{tbl:scompODV} contains the mean, median, and standard deviation of the entrywise error of $\bar{Q}_1$ and sample complexity of LR-EVI, LR-EVI $+$ \texttt{SI}, and EVI for five
trials using the first convergence criterion. Table~\ref{tbl:scompODP} contains the same statistics for LR-MCPI, LR-MCPI $+$ \texttt{SI}, and MCPI. To give context to the error values, ten percent of the smallest entry of $Q^*_1$ is $ 0.786$. 

\begin{table}[H]
    \centering
    \begin{tabular}{c|cccccc}
        & Mean  Error & Median Error & Error SD& Mean S.C. & Median S.C.& S.C. SD \\
         \hline
    LR-EVI &$0.893 $& $0.895$& $0.0636$ &$7.24  \times 10^6$ & $7.06 \times 10^6$ & $6.31 \times 10^5$\\
    LR-EVI $+$ \texttt{SI}& $0.840 $&$0.839$& $ 0.0194$ &$2.87  \times 10^7$&$2.87 \times 10^7$& $4.11 \times 10^6$\\
    EVI & $0.863$ & $0.864$  & $ 0.00857$ & $2.58 \times 10^8$ &$2.59\times 10^8$&$1.38 \times 10^7$
    \end{tabular}
    \caption{The mean, median, and standard deviation of the $\|\bar{Q}_1\|_\infty$ error and sample complexity of LR-EVI, LR-EVI + \texttt{SI}, and EVI.}
    \label{tbl:scompODV}
\end{table}
\begin{table}[H]
    \centering
    \begin{tabular}{c|cccccc}
        & Mean  Error & Median Error& Error SD & Mean S.C. & Median S.C. & S.C. SD \\
         \hline
    LR-MCPI &$0.747 $ & $0.721$ & $0.0860$&$4.00 \times 10^7$ & $4.22 \times 10^7$& $3.21\times 10^7$\\
    LR-MCPI $+$ \texttt{SI}& $0.697 $&$0.716$& $0.0556$& $2.51 \times 10^8$&$2.50 \times 10^8$ & $1.67 \times 10^7$\\
    MCPI & $0.771$ & $0.773$  & $0.0272$& $1.45 \times 10^9$ &$1.44\times 10^9$& $1.46 \times 10^8$
    \end{tabular}
    \caption{The mean, median, and standard deviation of the $\|\bar{Q}_1\|_\infty$ error and sample complexity of  LR-MCPI, LR-MCPI + \texttt{SI}, and MCPI.}
    \label{tbl:scompODP}
\end{table}
}

\tsdelete{
The mean, median, and standard deviation of the $\ell_\infty$-error of $\bar{Q}_1$ and sample complexity for 10 trials for all our algorithms with $p_S = p_A = 0.025$ for LR-EVI and LR-MCPI and $p = 0.2$ for LR-EVI $+$ \texttt{Soft Impute} are presented in Table~\ref{tbl:scompOD} see Appendix \ref{app:chooseP} for discussion on how $p_S, p_A, $ and $p$ were chosen. \cycomment{Yikes, this sentence is hard to read. Also we never introduced $p$.} 
Figure \ref{fig:errvsSc} shows the error of LR-EVI, LR-MCPI, LR-EVI $+$ \texttt{Soft Impute}, and $Q$-value iteration vs. their respective sample complexity at time step $h=1$.
To give context to our results, $.1 \times \min_{(s, a) \in S \times A} Q^*_1(s, a) = 0.786$. It is clear from Table~\ref{tbl:scompOD} and Figure~\ref{fig:errvsSc} that the low-rank algorithms require significantly fewer samples to achieve the same error as $Q$-value iteration.
\begin{table}[H]
    \centering
    \begin{tabular}{c|cccc}
        & Mean  Error & Median Error & Mean S.C. & Median S.C. \\
         \hline
    LR-MCPI &$0.747 \pm 0.0860$ & $0.721$ & $(4.00 \pm 0.321) \times 10^7$ & $4.22 \times 10^7$\\
    LR-EVI &$0.895 \pm 0.0636$ & $0.939$& $(6.38 \pm 1.35) \times 10^6$ & $6.69 \times 10^6$\\
    LR-EVI $+$ \texttt{SI}& $0.840 \pm 0.0194$&$0.839$& $(2.87 \pm 0.411) \times 10^7$&$2.87 \times 10^7$\\
    $Q$-value Iteration & $0.863 \pm 0.00857$ & $0.864$  & $(2.58 \pm 0.138) \times 10^8$ &$2.59\times 10^8$
    \end{tabular}
    \caption{The mean $\pm$ standard deviation and median $\|\bar{Q}_1\|_\infty$ error and sample complexity of  LR-MCPI, LR-EVI, LR-EVI + \texttt{Soft Impute}, and $Q$-value iteration.}
    \label{tbl:scompOD}
\end{table}

\begin{figure}[H]
    \centering
    \includegraphics[width = .65\textwidth]{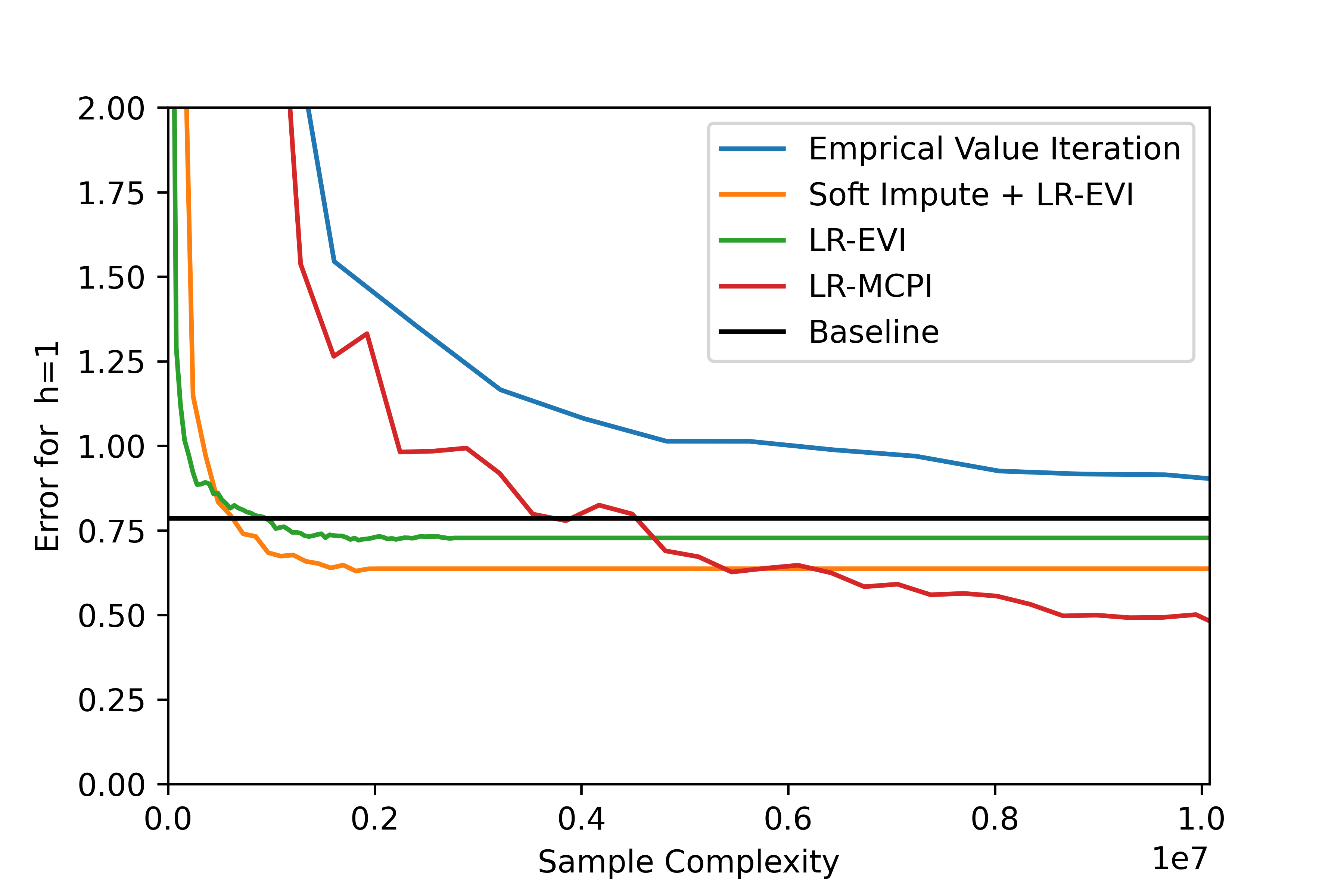}
    \caption{Max entrywise error of $\bar{Q}_1$ vs. number of samples for LR-EVI, LR-MCPI, LR-EVI $+$ \texttt{Soft Impute}, and $Q$-value iteration at $h = 1$. To give context to the error values, Baseline $= .1 \times \min_{(s, a) \in S \times A} Q^*_1(s, a) = 0.786$}
    \label{fig:errvsSc}
\end{figure}}
\section{Conclusion}
In this work, we prove novel sample complexity bounds using matrix estimation methods for MDPs with long time horizons without knowledge of special anchor states and actions, showing that incorporating matrix estimation methods into reinforcement learning algorithms can significantly improve the sample complexity of learning a near-optimal action-value function. Furthermore, we empirically verify the improved efficiency of incorporating the matrix estimation methods. We also provide a lower bound that highlights exploiting low rank structure in RL is significantly more challenging than the static matrix estimation counterpart without dynamics. While we show a gain from $|S||A|$ to $|S|+|A|$, the sample complexity may not be optimal with respect to $d$ and $H$, which may be an interesting topic for future study. For example one could consider how to incorporate advanced techniques in existing tabular reinforcement learning literature that decrease the dependence on the time horizon into our low rank framework. While our results show the value of exploiting low-rank structure in reinforcement learning, the algorithms heavily rely on a generative model assumption, which may not always be realistic. Extensions to online reinforcement learning is an interesting and potentially impactful future direction.




\begin{acks}
Y.\ Chen is partially supported by National Science Foundation grants CCF-1704828 and  CCF-2047910. C.\ Yu is partially supported by National Science Foundation grants CCF-1948256 and CNS-1955997, and by an Intel Rising Stars award and a JPMorgan Faculty Research award.
\end{acks}

\bibliographystyle{ACM-Reference-Format}
\bibliography{sigmetrics/references}

\clearpage
\appendix

\clearpage
\section{Extensions}\label{sec:ext}
We extend our results to the continuous MDP setting and infinite-horizon discounted MDP setting. We also discuss the use of alternative matrix estimation subroutines.
Each of these extensions are fairly minor technically, but we include them to illustrate the wider implications of the low rank framework.

\subsection{Continuous State and Action Spaces}\label{sec:cont}

Our results in Theorems \ref{thm:qnolr} and \ref{thm:tklr} can be extended to the continuous MDP setting where $S$ and $A$ are both continuous spaces. In particular, the action-value function obtained when running LR-EVI and LR-MCPI on a discretized version of the continuous MDP can be used to construct an $\eps$-optimal action-value function for the continuous MDP, similar to the reduction used in \cite{serl}. We assume the same regularity conditions on the continuous MDP as used in \cite{serl}.
\begin{assumption}[MDP Regularity for Continuous MDPs \citep{serl}]\label{asm:regC}
Assume the MDP satisfies
\begin{itemize}
    \item (Compact Domain): $S = [0, 1]^{n}$, $A = [0, 1]^{n}$,
    \item (Lipschitz): $Q^*_h$ is $L$-Lipschitz with respect to the one-product metric:
    \[
    |Q^*_h(s, a) - Q^*_h(s', a')| \leq L(\|s-s'\|_2 + \|a-a'\|_2) ~~~~\forall~~ h \in [h].
    \]
\end{itemize}
\end{assumption}

We follow the same steps as in \cite{serl} to discretize the state and action spaces into $\beta$-nets ($S^\beta$ and $A^\beta$, respectively), i.e. $S^\beta$ is a set such that for all $s \in S$, there exists an $s' \in S^\beta$ where $|s' - s|_2\leq \beta$. We next define the discretized MDP to be $M^\beta = (S^\beta, A^\beta, P^\beta, r, H)$ where $P^\beta_h$ is defined as follows:
\[
P^\beta_h(s' | s,a) = \int_{\{s''\in S: |s'' - s'|_2 \leq \beta\}} P_h(s''| s,a) \mathrm{d}s''.
\]
After discretizing the state and action spaces, LR-MCPI or LR-EVI is run on the discretized MDP. Our approach differs from the one from \cite{serl} because we only discretize the continuous sets once and then run the tabular algorithm while their algorithm changes the discretization error $\beta$ at each iteration. To run LR-MCPI or LR-EVI on the discretized MDP, one needs to be able to sample transitions/rollouts from $P^\beta_h$ instead of $P_h$. See Appendix \ref{app:cont} for details on how we exploit the generative model to obtain transitions/rollouts on $M^\beta$. The following lemma shows how 
the optimal $Q$ function on $M^\beta$ can be used to approximate $Q^*$ of the original MDP with small enough $\beta$. 
\begin{lemma}\label{lem:bnet}
Let MDP $M^\beta = (S^\beta, A^\beta, P^\beta, R, H)$ be the discretized approximation to MDP $M = (S, A, P, R, H)$ where $S^\beta$ and $A^\beta$ are $\beta$-nets of $S$ and $A$, respectively. Let $Q^*$ and $Q^\beta$ be the optimal $Q$ functions of $M$ and $M^\beta$, respectively.  For any $s \in S, a \in A$ and $s' \in S^\beta, a' \in A^\beta$ such that $\|s - s'\|_2 \leq \beta, \|a - a'\|_2 \leq \beta$  and for all $h \in [H]$, 
\[
|Q^*_h(s,a) - Q^\beta_h(s', a')| \leq 2L(H-h+1)\beta, \quad |V^*_h(s,a) - V^\beta_h(s', a')| \leq 2L(H-h+1)\beta. 
\]
\end{lemma}


If the transition kernels and reward functions of $M^{\beta}$ are low rank, satisfying Assumption \ref{asm:lrtk}, then LR-EVI finds an $\eps$-optimal $Q_h$ function with an efficient number of samples. If $M^{\beta}$ only satisfies Assumption \ref{asm:lrqe}, then LR-MCPI finds an $\eps$-optimal $Q_h$ function with an efficient number of samples. For sake of brevity, we present only the sample complexity bound of LR-EVI under Assumption \ref{asm:lrtk}. See Appendix \ref{app:cont} for the analogous result with LR-MCPI.

\begin{theorem}\label{thm:conTKLR}
Let $Q_h^\beta = [r_h + [P_h \hat{V}_{h+1}]]_{(s,a) \in S^\beta \times A^\beta}$ where $\hat{V}_{h+1}$ is the value function learned when running LR-EVI on the discretized MDP $M^\beta$. Let Assumption \ref{asm:lrtk} hold on $M^\beta$, and $S^\#_h, A^\#_h$ be $(k, \alpha)$-anchor states and actions for $Q_h^\beta$ for all $h \in [H]$. Then, the learned $\bar{Q}_h$ from LR-EVI can be used to construct an $\eps$-optimal $Q$ function with probability at least $1-\delta$ when  $\beta = \eps/4LH$ and $N_{H-t} = \Tilde{O}\left((t+1)^2k^2\alpha^2 H^2/\eps^2\right), N_{H-t}^{\#} = \Tilde{O}\left((t+1)^2k^4\alpha^4 H^2/\eps^2\right)$
 for all $t\in \{0, \ldots, H-1\}$ with a sample complexity of $\Tilde{O}\left(k^3\alpha^2H^{n+5} / \eps^{n+2}Vol(B)\right )$, where $B$ is the unit norm ball in $\R^n$.
\end{theorem}


Theorem \ref{thm:conTKLR} shows that if the low-rank and matrix estimation assumptions hold on the discretized MDP, then one can use the learned $Q_h$ estimate from LR-EVI to construct an $\eps$-optimal estimate of $Q$ function. Both algorithms are sample efficient (with respect to the dimension of the state and action spaces) as the bounds have a $1/\eps^{n+2}$ dependence instead of $1/\eps^{2n+2}$, which is minimax optimal without the low-rank assumption. Furthermore, if $Q^\beta_{h}$ is $\mu$-incoherent with condition number $\kappa$, one can use the result of Lemma \ref{lem:anchor_sampling} to find $\Tilde{O}(\mu d, \kappa)$-anchor states and actions without a priori/domain knowledge. Using the finite-horizon version of Corollary 2 from \cite{policyErrorAmp}, we can construct an $O(\eps H)$-optimal policy by defining a policy greedily with respect to $\bar{Q}$. 

The proof of Theorem \ref{thm:conTKLR} follows from combining Theorem \ref{thm:tklr} with a covering number lemma to upper bound the size of the $\beta$-nets. $\beta$ is chosen carefully to account for the error amplification with respect to $H$ from Lemma \ref{lem:bnet} while ensuring that the algorithms use an efficient number of samples.

\subsection{Infinite-Horizon Discounted MDPs}\label{sec:IH}
We consider the standard setup for infinite-horizon tabular MDPs, $(S,A,P,R,\gamma)$, where $S$ and $A$ denote the finite state and action spaces. $R: S \times A \to \Delta([0,1])$ denotes the reward distribution, and use $r_h(s,a) = \E_{r\sim R(s,a)}[r]$ to denote the expected reward. $P$ denotes the transition kernel, and $0 < \gamma < 1 $ denotes the discount factor. The value and action-value function of following the policy $\pi$ are defined as:
\[
V^\pi(s) \coloneqq \E\left[\left. \sum_{t = 0}^\infty \gamma^t R_t \right|  s_0 = s \right], \quad 
Q^\pi(s, a) \coloneqq \E\left[\left. \sum_{t = 0}^\infty \gamma^t R_t \right |  s_0 = s, a_0 = a \right],
\]
for $R_t \sim R(s_t, a_t), a_t \sim \pi(s_t),$ and $s_t \sim P(\cdot|s_{t-1}, a_{t-1})$.
We define the optimal value function as $V^*(s) = \sup_{\pi} V^\pi(s)$ for all $s\in S$ and the optimal action-value function as $Q^*(s,a) = r(s,a) + \gamma \E_{s' \sim P(\cdot|s,a)}[V^*(s')]$. Since the reward function is bounded, for any policy $\pi$, $ Q^\pi(s,a), V^\pi(s) \leq \frac{1}{1-\gamma}$ for all $(s,a) \in S\times A$. To use matrix estimation methods, we require the transition kernel to have low Tucker rank and the reward function to have shared latent factors, which is our strongest low-rank assumption (Assumption \ref{asm:lrtk}). 
\begin{assumption}[Low-rank Transition Kernels and Reward Functions (Infinite-horizon)] \label{asm:lrtkI}
The expected reward function has rank $d$, and the transition kernel $P$ has Tucker rank $(|S|,|S|,d)$ or $(|S|, d, |A|)$, with shared latent factors. For the Tucker rank $(|S|, |S|, d)$ case, this means that there exists a  $|S| \times |S| \times d$ tensor $U$, an $|A| \times d$ matrix $V$, and an $|S| \times d$ matrix $W$ such that
\[
P(s'|s,a) = \textstyle\sum_{i=1}^d U(s', s, i) V(a, i) \quad\text{ and }\quad r(s,a) = \textstyle\sum_{i=1}^d W(s, i)V(a, i).
\]
For the Tucker rank $(|S|, d, |A|)$ case, this means that there exists a  $|S| \times |A| \times d$ tensor $V$, an $|S| \times d$ matrix $U$, and an $|A| \times d$ matrix $W$ such that
\[
P(s'|s,a) = \textstyle\sum_{i=1}^d U(s, i)V(s', a, i)\quad\text{ and }\quad r(s,a) = \textstyle\sum_{i=1}^d U(s, i)W(a, i).
\]
\end{assumption}
\noindent Similar to the finite-horizon setting, this assumption implies that $r + \gamma[P\bar{V}]$ has low rank for any value function estimate.
 
\begin{proposition} \label{prop:lrEstI} For any MDP that satisfies Assumption \ref{asm:lrtkI}, for any estimate of the value function, the rank of $r + \gamma [P\bar{V}]$ is upperbounded by $d$.
\end{proposition}

\noindent The algorithm we consider that admits an efficient sample complexity is LR-EVI with the same matrix estimation method adapted for the infinite-horizon discounted setting, i.e., including the discount factor in the estimates and running Step 1, Step 2, and Step 3 for $T$ iterations instead of recursing backwards through the horizon. We overload notation and let $\hat{Q}_i$ refer to the $Q$ function estimated in the $i$-th iteration of the algorithm. The correctness result and sample complexity bound in this setting is as follows.

\begin{theorem}[Correctness and Sample Complexity of LR-EVI under Assumption \ref{asm:lrtkI}] \label{thm:tklrI}
Assume that for any $\eps$-optimal value function $\bar{V}$, the matrix corresponding to $Q'_t = [r + [P \hat{V}_t]]$ has rank $d$ (a consequence of Assumption \ref{asm:lrtkI}) for all $t \in [T]$, and  $S^\#_t, A^\#_t$ are $(k, \alpha)$-anchor states and actions for $\hat{Q}'_t = [r + [P \hat{V}_t]]$, where $\hat{V}_t$ is the learned value function from LR-EVI at iteration $t$ for all $t \in [T]$. Let $N_{t} = \Tilde{O}\left( \alpha^2 k^2 / \eps^2 (1-\gamma)^4 \right)$ and $N_t^{\#} = O(\alpha^2k^2 N_t)$. Then, LR-EVI returns an $\eps$-optimal $Q$ function with probability at least $1-\delta$ with a sample complexity of  $
\Tilde{O}\left(\frac{(|S|+ |A|)\alpha^2k^3}{\eps^2(1-\gamma)^4} + \frac{\alpha^4k^6}{\eps^2(1-\gamma)^4} \right)$.
\end{theorem}
Theorem \ref{thm:tklr} states that if the transition kernel has low Tucker rank, one can learn an $\eps$-optimal $Q$ function with sample complexity that scales with the sum of the sizes of the state and action space instead of the product. Furthermore, if $Q'_t$ is $\mu$-incoherent, then one can use Lemma \ref{lem:anchor_sampling} to find $\Tilde{O}(\mu d, \kappa)$-anchor states and actions without domain knowledge, where $\kappa$ is the condition number of $Q'$. To prove the correctness result in Theorem \ref{thm:tklr}, we show that at each iteration the error of the $Q$ function decreases with the following lemma.
\begin{lemma}\label{lem:tklrI}
Let $ S^\#_t, A^\#_t \text{ and } N_t$ be as defined as in Theorem \ref{thm:tklrI}, and let the estimate of the value function at step $t$ satisfy $|\bar{V}_t- V^*|_\infty \leq B_t$. Ater one iteration of the algorithm, the resulting estimates of the value function and action-value function satisfy
\[
|\bar{Q}_{t+1}-Q^*|_\infty\leq \frac{(1+\gamma)B_t}{2} , \quad |\bar{V}_{t+1}-V^*|_\infty \leq\frac{(1+\gamma)B_t}{2}
\]
with probability at least $1- \frac{\delta}{T}$ for each $t \in [T]$.
\end{lemma}
Running the algorithm for a logarithmic number of times returns an $\eps$-optimal $Q$ function, which gives the sample complexity shown in Theorem \ref{thm:tklr}.

\subsection{Matrix Completion via Nuclear Norm Regularization}\label{sec:CME}

While all of the above results are stated for the variants of LR-MCPI and LR-EVI that use the matrix estimation algorithm as stated in Section \ref{sec:MES}, our results are not limited only to this specific choice of the matrix estimation algorithm. As briefly mentioned in Section \ref{sec:MES}, the analysis relies on using entrywise error bounds for the outputs of the matrix estimation algorithm. While the algorithm stated in Section \ref{sec:MES} lends itself to explicit entrywise error bounds given its explicit form, it requires the non-standard sampling pattern associated to a set of anchor states and actions. 

We show next that similar results can be derived for a different variation of LR-MCPI or LR-EVI that performs matrix estimation by solving the convex relaxation of the low-rank matrix completion problem.
We utilize Theorem 1 from \cite{chen2019noisy} to obtain entry-wise bounds on the matrix estimator. Chen et al. \cite{chen2019noisy} state that it is straightforward to extend their results to the rectangular matrix setting, but for ease of notation, they only consider square matrices. To directly apply the theorem, we let $|S| = |A| = n$ but note that it is easy to extend our results to $|S| \neq |A|$. 
 Their analysis assumes data is gathered via a Bernoulli sampling model; i.e. each state-action pair is added to $\Omega_h$ with probability $p$, i.e., $\Omega_h = \{(s,a) | X_{(s,a)} = 1\}$ where $X_{(s,a)}\sim \text{Bernoulli}(p)$ for $(s,a) \in S \times A$.

The matrix estimator is the minimizer of a least-squares loss function with a nuclear norm regularizer, which is the convex relaxation of the low rank constraint. For the observed matrix $M_{ij} = M^*_{ij} + E_{ij}$, with $M^*$ being the matrix we wish to recover and error matrix $E$, the formulation is 
\begin{equation}\label{eq:obj}
  \min_{Z \in \R^{n\times n}} g(Z) \triangleq \tfrac{1}{2}\textstyle\sum_{(i, j) \in \Omega_h}( Z_{ij} - M_{ij})^2 + \lambda \|Z\|_*,
\end{equation}
with $\Omega_h$ constructed via Bernoulli sampling as mentioned above \cite{chen2019noisy}. 

We next present the primary result that is needed from \cite{chen2019noisy} for the readers' convenience. Assume that  $\Omega$ is constructed with the Bernoulli sampling model and the error matrix $E = [E_{i,j}]$ is composed of i.i.d. zero-mean sub-Gaussian random variables with norm at most $\eta$. 
\begin{theorem}[Theorem 1 in \cite{chen2019noisy}]\label{thm:conRel}
Let $M^*$ have rank$-d$ and be $\mu$-incoherent with condition number $\kappa$, where $d, \kappa \in O(1)$. Let $\lambda = C_\lambda n\sigma \sqrt{np}$ in Equation \ref{eq:obj} for a large enough positive constant $C_\lambda$. Assume that $n^2p \geq C\mu^2n \log^3 n$ and $\sigma \leq c \sqrt{\frac{np}{\mu^3 \log n}}\|M^*\|_\infty$ for some sufficiently large constant $C > 0 $ and small constant $c > 0$. Then with probability $1 - O(n^{-3})$, any minimizer $Z_{cvx}$ of Equation \ref{eq:obj} satisfies
\[
\|Z_{cvx} - M^*\|_\infty \leq C_{cvx}\frac{\sigma}{\sigma_d(M^*)} \sqrt{\frac{\mu n \log n }{ p}} \|M^*\|_\infty
\]
for some constant $C_{cvx} > 0$.
\end{theorem}

Applying Theorem \ref{thm:conRel} into our analyses for LR-MCPI and LR-EVI gives us the necessary error bounds to prove the desired linear $|S|+|A|$ sample complexities for LR-MCPI and LR-EVI with $\Omega_h$ generated according to the Bernoulli sampling model and 
\begin{align*}
 \texttt{ME}\left(\{\hat{Q}_h(s,a)\}_{(s,a) \in \Omega_h}\right) \xleftarrow[]{} \text{CvxSolver} \bigg (\min_{Q \in \R^{|S|\times |A|} } g(Q) &\triangleq \tfrac{1}{2}\textstyle\sum_{(s,a) \in \Omega_h}(Q(s,a) - (\hat{Q}_h(s,a)) )^2 + \lambda \|Q\|_* \bigg).
\end{align*}

We state only the result for LR-EVI under Assumption \ref{asm:lrtk} (low-rank reward function and low Tucker rank transition kernel). The modifications to the theorems and proofs to show the analogous result for LR-MCPI under Assumption \ref{asm:lrqe} are essentially the same. 

\begin{theorem}\label{thm:MECR}
Let $p_h = \mu^3 d^2 \kappa^2 H^4 C_{cvx}^2\log(n) / \eps^2n$. Assume that for any $\eps$-optimal value function $\hat{V}_{h+1}$,  $Q'_h = [r_h + [P_h\hat{V}_{h+1}]]$ has rank $d$ (Assumption \ref{asm:lrtk}), is $\mu$-incoherent, and has condition number $\kappa$ for all $h \in [H]$. Then, the learned policy from the algorithm specified above is $\eps$-optimal with probability at least $1- O\left(Hn^{-3}+\exp \left(-\mu^3d^2\kappa^2H^4 n\log(n) /\eps^2 \right) \right)$. Furthermore, the number of samples used is upper bounded by $\Tilde{O}\left(\mu^3H^5n / \eps^2\right)$ with the same probability. 
\end{theorem}
The proof of Theorem \ref{thm:MECR} follows the same argument as the proof of Theorem \ref{thm:tklr} but uses Theorem \ref{thm:conRel} to control the error amplification from the matrix estimation method. Similar to our main results, using this matrix estimation method as a subroutine reduces the sample complexity's dependence on $|S|$ and $|A|$ from $|S||A|$ to $|S|+|A|$. This theorem provides a potential explanation for the successful experimental results in \cite{Yang2020Harnessing}, and answers an open question posed in \cite{serl}; it guarantees that using existing matrix estimation methods based on convex problems as a subroutine in traditional value iteration has significantly better sample complexity compared to vanilla value iteration when finding $\eps$-optimal action-value functions.

\section{Example Illustrating Assumption \ref{asm:lrqe} (Low Rank Q functions for Near Optimal Policies)}\label{app:qnolrex}

Assumption \ref{asm:lrqe} states that the $\eps$-optimal policies $\pi$ have associated Q functions that are low rank. At first glance, it might be unclear if this Assumption can be satisfied without requiring the stronger conditions in Assumption \ref{asm:lrtk} of low rank rewards and low Tucker rank transition kernels.
In this section, we present an MDP $(S, A, P, R, H)$ in the reward maximization setting with all $\eps$-optimal policies $\pi$ ($\eps$-optimal $\pi$ for all $s \in S$ and $h \in [H]$) having low-rank $Q^\pi$ without the transition kernel having low Tucker rank. Specifically, we upperbound the rank of $Q^\pi$ with a function of $\eps$ and the size of the state/action space.
where $\Pi_\eps$ is the policy class containing all $\eps$-optimal deterministic policies. With the following example, we show that there exists an MDP with a non-trivial relationship between $d_\eps$ and $\eps, |S|, $ and $|A|$.
We now present the $H$-step MDP that exhibits this property. Let $S = A =  0 \cup [m],$ and the reward function be $r_h(s, a) = 0$ for all $(s, a, h) \in S \times A \times [H-1]$ and $r_H(s, a) = 1 - \left(\frac{sa}{(m+1)^2} \right)^{1/2}$ for all $(s, a) \in S \times A$.

For all $h \in [H-1]$, the transition kernel is 
\[
P_h(0|s, a) = \begin{cases}
0, & \text{if } s = a, \\
1, & \mathrm{otherwise,} 
\end{cases}
\qquad
P_h(s'|s, a) = \begin{cases}
1, & \text{if }s'= s = a, \\
0, & \mathrm{otherwise,} 
\end{cases}
\]
for $s' \in \{1, \ldots, m\}$. We note that 
\[
P_h(0|\cdot, \cdot) =  \begin{bmatrix}
\mathbf{0} & 1 & 1 & \ldots & 1 \\
 1 &\mathbf{0} &  1 & \ldots & 1 \\
\vdots & \vdots & \ddots & \vdots & \vdots \\
 1  &  1 & \ldots &\mathbf{0} & 1 \\
 1 &  1 & \ldots & 1 &\mathbf{0}  \\
\end{bmatrix},
\]
and for $s' \in [m]$, $P_h(s'|s,a) = E^{s'}$ is an all-zero matrix with the $s'$-th diagonal entry equal to one, so the transition kernels do not have low Tucker rank. Next, we prove the main result of this section, which upper bounds the rank of the $Q_h$ functions of $\eps$-optimal policies.

We remark that at time step $2$, selecting action $0$ is always optimal, regardless of the state.

\begin{lemma}\label{lem:rank}
Let $\pi$ be an $\eps$-optimal policy, that is $V^*_h(s) - V^\pi_h(s) \leq \eps$ for all $(s, h) \in S \times [H]$. Then, $rank(Q^\pi_H) =  2,$ and $rank(Q^\pi_h) \leq 1 + \lfloor \eps^2(m+1)^2 \rfloor$ for all $h \in [H-1]$.
\end{lemma}
\begin{proof}[Proof of Lemma \ref{lem:rank}]
Let $\pi$ be an $\eps$-optimal policy. We first show that $Q^\pi_H = r_2$ is a matrix with rank $2$. By construction, the first two rows of $Q_H^\pi$ are:
\begin{align*}
(Q^\pi_2)_1 = \left[1, 1, 1,  \ldots, 1 \right], \quad
(Q^\pi_2)_2 = \left[1, 1 - \left(\frac{1}{(m+1)^2}\right)^{1/2}, 1 - \left(\frac{2}{(m+1)^2}\right)^{1/2}, \ldots, 1- \left(\frac{m+1}{(m+1)^2}\right)^{1/2} \right],
\end{align*}
and for any $i \in \{3, \ldots m+1\}$, the $i$-th row of $Q^\pi_2$ is 
\[
(Q^\pi_2)_i = \left[1, 1 - \left(\frac{i}{(m+1)^2}\right)^{1/2}, 1 - \left(\frac{2i}{(m+1)^2}\right)^{1/2}, \ldots, 1- \left(\frac{(m+1)i}{(m+1)^2}\right)^{1/2} \right].
\]
Hence, $(Q^\pi_2)_i = (1 - i^{1/2})(Q^\pi_2)_1 + i^{1/2}(Q^\pi_2)_2$, and $rank(Q^\pi_2) = 2$.

Let $h \in [H]$, to bound the rank of $Q_h^\pi$, we first note that for all $s\in S$, $|V_H^\pi(s) - 1| = \left(\frac{s\pi(s)}{(m+1)^2} \right)^{1/2} \leq \eps$ since $\pi$ is $\eps$-optimal. It follows that $\pi(s) < \frac{\eps^{2}(m+1)^2}{s}$, and if $\frac{\eps^{2}(m+1)^2}{s} < 1$, $\pi(s)$ must equal $0$, which is the optimal action. Hence, there are at most $s \leq \eps^{2}(m+1)^2 $ number of states in which $\pi$ can deviate from the optimal policy. The value function of an $\eps$-optimal policy is 
\[
V^\pi_H(s) = \begin{cases}
1, & \text{if }s = 0, \\
1, & \text{if }s > \eps^2(m+1)^2, \\
1 - \left( \frac{s\pi(s)}{(m+1)^2}\right)^{1/2}, & \mathrm{otherwise.}
\end{cases}
\]
Since $\pi$ is $\eps$-optimal, we have $1 - V^\pi_{h+1}(s) \leq \eps$ for all $s \in S$. Due to the construction of the dynamics, if one starts at state $s$ at time step $h$, one will be at either state $s$ (choosing action $\pi(s) = s$ at each time step) or state $0$ (taking any other sequence of actions).
Thus, $V^\pi_{h+1}(s) = V^{\pi}_H(s)$ or $V^\pi_{h+1}(s) = 1$ depending on the sequence of action. It follows that $V^\pi_{h+1}(s) \geq V^{\pi}_H(s)$. It follows that 
if $s = 0$ or $s > \eps^2(m+1)^2$, $V^\pi_{h+1}(s) = 1$. Otherwise, $V^\pi_{h+1}(s) \geq  \left(\frac{s\pi(s)}{(m+1)^2} \right)^{1/2} $ for $s \leq \lfloor \eps^2(m+1)^2 \rfloor$. 

We next compute the $Q$ function at the time step $h$ to show that  we can upperbound the rank of $Q_h$ by the number of states that $\pi_h$ deviates from the optimal policy. Specifically, for each $s \leq \lfloor \eps^2(m+1)^2 \rfloor$, let $\pi_h(s) = s$ for $h \in [H]$. It follows that
\begin{align*}
    Q_h^\pi(s,a) &= r_h(s,a) + \sum_{s' = 0}^m P_h(s'|s,a)V^\pi_{h+1}(s') \\
    &= 0 + P_h(0|s,a)V^\pi_{h+1}(0) + \sum_{s' = \lfloor \eps^2(m+1)^2 \rfloor + 1}^m P_h(s'|s,a)V^\pi_{h+1}(s')+ \sum_{s' =1}^{\lfloor \eps^2(m+1)^2 \rfloor} P_h(s'|s,a)V^\pi_{h+1}(s')\\
    &= P_h(0|s,a)  + \sum_{s' = \lfloor \eps^2(m+1)^2 \rfloor + 1}^m P_h(s'|s,a) + \sum_{s' =1}^{\lfloor \eps^2(m+1)^2 \rfloor} P_h(s'|s,a)\left( 1 - \left( \frac{s\pi(s)}{(m+1)^2}\right)^{1/2}\right).
\end{align*}
In matrix form, it follows that 
\begin{align*}
    Q_h^\pi &\geq \begin{bmatrix}
\mathbf{0} & 1 & 1 & \ldots & 1 \\
 1 &\mathbf{0} &  1 & \ldots & 1 \\
\vdots & \vdots & \ddots & \vdots & \vdots \\
 1  &  1 & \ldots &\mathbf{0} & 1 \\
 1 &  1 & \ldots & 1 &\mathbf{0}  \\
\end{bmatrix} + \sum_{s' = \lfloor \eps^2(m+1)^2 \rfloor + 1}^m E^{s'} + \sum_{s' =1}^{\lfloor \eps^2(m+1)^2 \rfloor} E^{s'}\left( 1 - \left( \frac{s\pi(s)}{(m+1)^2}\right)^{1/2}\right)\\
    &= J_{m\times m } -  \sum_{s' =1}^{\lfloor \eps^2(m+1)^2 \rfloor} E^{s'}\left( \frac{s\pi(s)}{(m+1)^2}\right)^{1/2}.
\end{align*}
Thus, at most $\lfloor \eps^2(m+1)^2 \rfloor$-rows of $Q_h^\pi$ are different from the all-ones row. It follows that $rank(Q_h^\pi) \leq 1 + \lfloor \eps^2(m+1)^2 \rfloor$, and each state $s \leq \lfloor \eps^2(m+1)^2 \rfloor$ that $\pi_h$ performs optimally at tightens the above upperbound on the rank by one. Since the bound holds for arbitrary $h \in [H-1]$, it follows that it holds for all $h \in [H-1]$.
\end{proof}

\section{Experimental Details for Oil Discovery Problem}\label{app:expDetails}
In this section we discuss the rank of the cost function, the rank of the $Q^*$ function, and details of how we tuned the hyperparameters of our algorithms for the experiments presented in Section \ref{sec:experiments}.

\subsection{Rank of $c(s, a)$}\label{app:cost}
Recall that $c(s, a) = 0.01 \times \text{round} \left( \frac{|s-a|}{100} \right)$. Figure \ref{fig:cost} displays a heat map and the singular values of the cost function.
\begin{figure}[H]
    \centering
    \subfloat{\includegraphics[width=.45\textwidth]{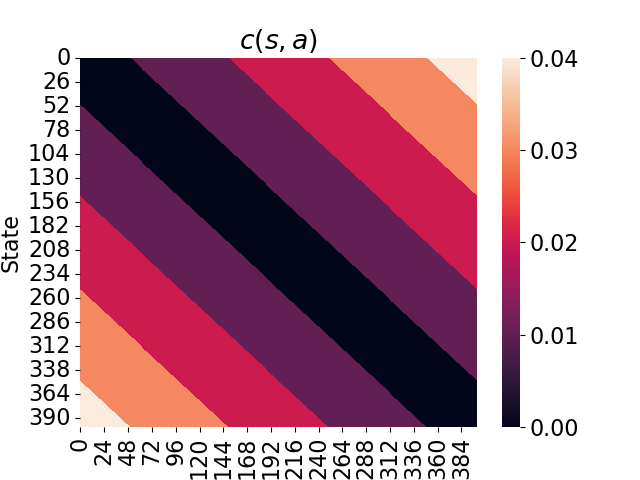}}%
    \qquad
    \subfloat{\includegraphics[width=.45\textwidth]{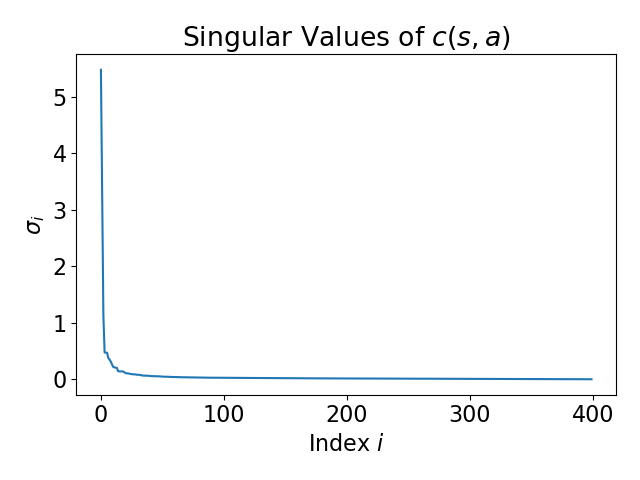} }%
    \caption{Heat map (left) and singular values (right) of $c(s, a)$.}
    \label{fig:cost}
\end{figure}
Even though each entry of $c(s, a)$ can only be one of five values, the rank of $c(s, a)$ is 400 as all of the singular values are greater than zero. However, Figure \ref{fig:cost} shows that the magnitude of the singular values decrease quickly. Furthermore, the stable rank of $c(s, a)$ is small $\|c(s, a)\|_F^2/\|c(s, a)\|^2_* = 1.46$. It follows that $c(s, a)$ is ``approximately'' low-rank. 

\subsection{Rank of $Q^*$} \label{app:qStar}
Figure \ref{fig:qStar} displays a heat map of $Q^*_1$ and a plot of $Q^*_1$ singular values from largest to smallest. 
\begin{figure}[H]
    \centering
    \subfloat{\includegraphics[width=.45\textwidth]{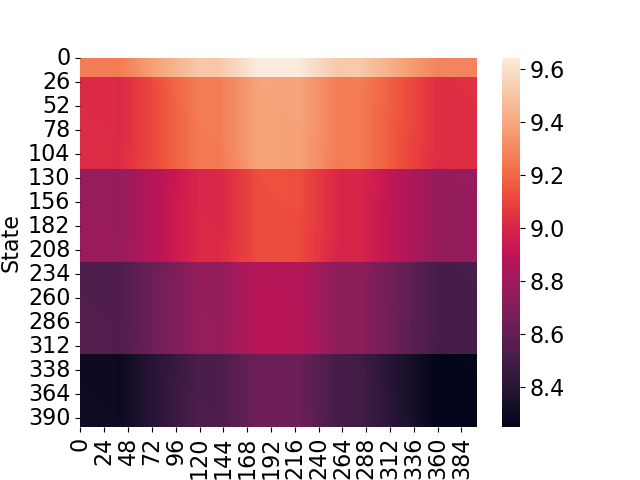}}%
    \qquad
    \subfloat{\includegraphics[width=.45\textwidth]{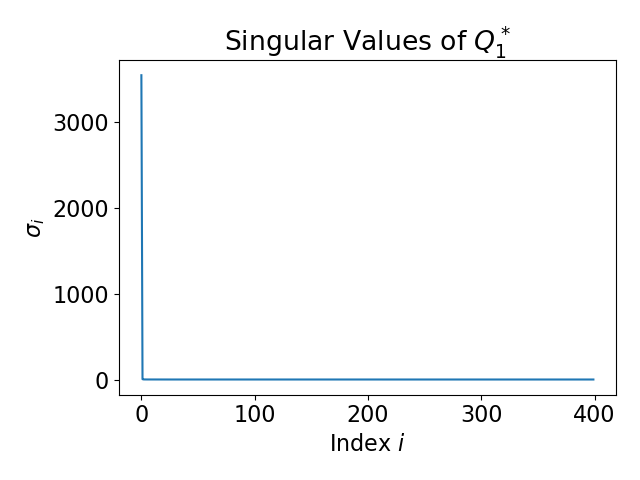} }%
    \caption{Heat map (left) and singular values (right) of $Q^*_1$.}
    \label{fig:qStar}
\end{figure}
While all of the singular values of $Q^*_1$ are greater than zero ($rank(Q^*_1) = 400$), the magnitude of the first singular value is significantly larger than all the other singular values; $\sigma_1/ \sum_{i = 1}^{400}\sigma_i = 0.995$. Table \ref{tbl:stableRank} displays the rank and stable rank of $Q^*_h$ for $h \in [H]$.
\begin{table}[H]
    \centering
    \begin{tabular}{c|cccccccccc}
         & $h = 1$ & $h = 2$& $h = 3$& $h = 4$& $h = 5$& $h = 6$& $h = 7$& $h = 8$& $h = 9$& $h = 10$ \\
         \hline
    Rank & $400$& $400$& $400$& $400$& $400$& $400$& $400$& $400$& $400$& $400$   \\
    Stable Rank& $1.000$ & $1.000$& $1.000$& $1.000$& $1.000$& $1.000$& $1.000$& $1.000$& $1.004$& $1.000$
    \end{tabular}
    \caption{Rank and stable rank of $Q^*_h$ for $h \in [H]$.   }
    \label{tbl:stableRank}
\end{table}
From the results in Table \ref{tbl:stableRank}, it's clear that despite $rank(Q^*_h) = 400$ for all $h \in [H]$, $Q^*_h$ is approximately low rank for all $h \in [H]$ because all the stable ranks are close to one.

\subsection{Hyperparameter Tuning}\label{app:hyp}
In this section, we discuss the hyperparameters of our algorithms and our methodology on how to choose their value. 

\medskip \noindent
\textbf{Allocation Scheme $N_{s, a, h}:$} In the proof of our theorems for LR-EVI and LR-MCPI, the sample allocations $N_{s,a,h}$ are chosen to ensure that at each time step $h$, the algorithm takes enough samples so that that $\|\bar{Q}_h - Q^*_h\|_\infty \leq \eps(H-h+1)/H$.  In practice, the algorithm does not have access to the optimal $Q$ function and cannot use this condition as a criteria to choose $N_{s,a,h}$, so we instead empirically test a few different allocation schemes and choose the best. We choose $N_{s,a,h}$ to be uniform for all $s,a$, not distinguishing between $(s,a)$ in the anchor submatrix or not.

To determine how to allocate samples across the ten time steps for each algorithm, we run a set of experiments benchmarking the algorithm's performance on a set of different allocation schedules. For an allocation scheme $\tau = \{\tau^i\}_{h \in H}$, where $\tau^i \geq 0, \sum_{h \in [H]} \tau^i = 1$, and total sample budget of $\overline{N}$, we will allocate roughly $\tau^i \overline{N}$ samples to the estimation of $\bar{Q}_h$. Essentially $\tau$ specifies the proportion of samples that are allocated to the estimates at each step, where there is some rounding involved as the number of samples must be integral. For some sequence of nonnegative numbers $\{a^i_{1}, a^i_{2}, \dots a^i_{H}\}$, the corresponding allocation scheme $\tau^i$ follows by simply normalizing according to $\tau^i_{h} = a^i_{h}/\sum_{h' \in [H]}a^i_{h'}$.
The five different allocation schemes we consider are, $\tau^i = \{\tau^i_{h}\}_{h \in [H]}$, corresponding to
\begin{equation*}
  \begin{split}
    a^1_{h} &= H - h + 1,\\
    a^2_{h} &= (H - h + 1)^2, \\
    a^3_{h} &= (H - h + 1)^3
  \end{split} \quad
  \begin{split}
    a^4_{h} &= \lfloor(h+1)/2\rfloor, \\
    a^5_{h} &= 1. \\
    &
  \end{split}
\end{equation*}
$\tau^5$ is simply a constant allocation schedule, which evenly allocates samples across the steps. When there is an insufficient sample budget to implement other allocation schemes, we let the Empirical Value Iteration algorithms default to $\tau^5$, evenly allocating one-step samples across steps $h$.

$\tau^1$ is a linearly decreasing allocation schedule. Note that as Monte Carlo Policy Iteration requires samples of length $(H-h+1)$ trajectories to estimate $Q_h$, the allocation that would uniformly allocate trajectories across $h$ for MCPI corresponds to $\tau^1$. When there is an insufficient sample budget to implement other allocation schemes, we let the Monte Carlo Policy Iteration algorithms default to $\tau^1$, evenly allocating trajectories across steps $h$.

$\tau^4$ is also a linearly decreasing allocation schedule, but simply at a slower rate. $\tau^2$ is a quadratically decreasing allocation schedule, which matches the allocation schedule chosen in our Theorem \ref{thm:tklr} for LR-EVI, as indicated by the number of one-step samples $N_h$ scaling as $(H-h+1)^2$ in its dependence on $h$. $\tau^3$ is a cubically decreasing allocation schedule, which matches the allocation schedule chosen in our Theorems \ref{thm:gap} and \ref{thm:qnolr} for LR-MCPI. In particular, the number of trajectories $N_h$ scales as $(H-h+1)^2$ in its dependence on $h$, but the samples used need to be multiplied by the trajectory length $(H-h+1)$, resulting in a cubic relationship.

Table~\ref{tbl:alloc} displays the average entrywise error of $\bar{Q}_1$ of all the algorithms over ten trials for each of these allocation schemes, where we set the sample budget $\overline{N} = 10^8$, $p_S, p_A = 0.025$, and $p_{SI} = 0.2$. ``---'' refers to the algorithm not having enough samples to take even one sample for each state-action pair at one time step according to the specified allocation scheme.

\begin{table}[H]
    \centering
    \begin{tabular}{c|ccccc}
         & $\tau^1$ & $\tau^2$ & $\tau^3$ & $\tau^4$ & $\tau^5$ \\
         \hline
         LR-EVI &$0.0803$
&$0.0736$
&$0.0948$
&$0.0828$
&$0.0825$
\\
         LR-MCPI 
&$0.1742$
&$0.1419$
&$0.1532$
&$0.1795$
&$0.2031$ \\
         LR-EVI $+$ \texttt{SI} &$0.5638$
&$0.4512$
&$1.0065$
&$0.5541$
&$0.5358$\\
         LR-MCPI $+$ \texttt{SI} 
&$0.6126$
&$0.6535$
&$0.6868$
&$0.6285$
&$0.6406$\\
         EVI &$0.2927$
&$1.5558$
& ---
&$0.255$
&$0.2264$
\\
         MCPI &$0.52$
&$0.5437$
& ---
&$0.5498$
&$0.6127$
    \end{tabular}
    \caption{Mean $\ell_\infty$ error of $\bar{Q}_1$ of LR-EVI, LR-MCPI, LR-EVI $+$ \texttt{SI}, LR-MCPI $+$ \texttt{SI}, EVI, and MCPI. }
    \label{tbl:alloc}
\end{table}
\noindent
To calibrate these results, recall that entries in $Q^*_1$ take values from roughly 8.3 to 9.6. 
While the performances are generally pretty similar, from the results in Table~\ref{tbl:alloc} the best allocation schemes for the algorithms are $\tau^2$ for LR-EVI, $\tau^2$ for LR-MCPI, $\tau^2$ for LR-EVI $+$ \texttt{SI}, $\tau^1$ for LR-MCPI $+$ \texttt{SI}, $\tau^5$ for EVI, and $\tau^1$ for MCPI. We use these allocation schemes for the experiments in Section \ref{sec:experiments}.

Note that for our algorithms to run, they require minimally one sample for the value iteration algorithms or one trajectory for the policy iteration algorithms for each $(s,a) \in \Omega_h$. Hence, for smaller values of $\overline{N}$, e.g., $\overline{N} = 10^6$, there may be state-action pairs in $\Omega_h$ that do not receive even one sample/trajectory following the best allocation scheme chosen from the data in Table~\ref{tbl:alloc}. Hence, if that problem occurs, we default to allocation scheme $\tau^5$ for the value iteration algorithms and we default to $\tau^1$ for the policy iteration algorithm. These allocations uniformly spread the samples/trajectories to ensure that the algorithm still produces an estimate when $\overline{N}$ may be small.

Finally we describe the details of the rounding that we implement to ensure that $N_{s,a,h}$ are integral, yet are distributed as close as possible to the desired allocation schedule $\tau = \{\tau^h\}$ with the sample budget of $\overline{N}$. For Empirical Value Iteration algorithms, $N_{s,a,h}$ denotes one-step samples at $(s,a)$ used to construct the estimate for $Q_h$. Thus we compute initial values by $N_{s, a, h} =  \lfloor \tau^h \overline{N}/|\Omega_h| \rfloor$, where the floor function is applied as the number of samples must be integral. Subsequently, we compute the number of excess samples, given by $N_{\Delta} = \overline{N} - \sum_{h \in [H]} \sum_{(s,a) \in \Omega_h} N_{s, a, h}$. Then, recursing forwards through the horizon, we add one sample to each state action pair in $\Omega_h$, i.e., $N_{s, a, h} = N_{s, a, h} + 1$, provided that there is sufficient samples $N_{\Delta} \geq |\Omega_h|$. Then we recompute the number of extra samples, i.e., $N_{\Delta} = N_{\Delta} - |\Omega_h|$ and repeat continuing at $h+1$. With this rounding scheme, the final number of samples used by our algorithm will be within $[\overline{N} - D^2,  \overline{N}]$, where $D^2 = 1.6\times 10^5$. 

For Monte Carlo Policy Iteration algorithms, $N_{s,a,h}$ denotes number of trajectories sampled starting from $(s,a)$, that are used to construct the estimate for $Q_h$. Thus we compute initial values by $N_{s, a, h} =  \lfloor \tau^h \overline{N}/|\Omega_h|(H-h+1) \rfloor$, where the floor function is applied as the number of trajectories must be integral. Subsequently, we compute the number of excess samples, given by $N_{\Delta} = \overline{N} - \sum_{h \in [H]} \sum_{(s,a) \in \Omega_h} N_{s, a, h} (H-h+1)$. Then, recursing forwards through the horizon, we add one trajectory to each state action pair in $\Omega_h$, i.e., $N_{s, a, h} = N_{s, a, h} + 1$, provided that there is sufficient samples $N_{\Delta} \geq |\Omega_h| (H-h+1)$. Then we recompute the number of extra samples, i.e., $N_{\Delta} = N_{\Delta} - |\Omega_h|(H-h+1)$ and repeat continuing at $h+1$. With this rounding scheme, the final number of samples used by our algorithm will be again within $[\overline{N} - D^2,  \overline{N}]$.

\medskip \noindent
{\bf Choosing $p = p_S = p_A$ for LR-EVI and LR-MCPI:} While our theorems use knowledge of the rank and incoherence to choose $p_S$, $p_A$, and $N_h$, one cannot assume knowledge of these quantities in practice. However, many other matrix estimation methods face similar challenges. For example, in \cite{Yang2020Harnessing}, to compute the $Q$ function of the optimal policy, they need to choose $p_{SI}$, which depends on the rank of $Q^*$,  for their algorithm, which combines value iteration and \texttt{Soft-Impute}. They show their algorithm is robust to the choice of $p_{SI}$ by showing their algorithm performed similarly for multiple $p_{SI}$ values. Similarly, we show LR-EVI and LR-MCPI are robust to the choice of $p = p_S = p_A$ (the only parameters in LR-EVI and LR-MPCI that depend on the rank and incoherence for a fixed allocation scheme and $\overline{N}$) in a similar manner. To show that LR-EVI and LR-MCPI are robust to $p = p_S = p_A$, we ran both LR-EVI and LR-MPCI with allocation scheme $\tau^2$ and $\overline{N} = 10^8$ for each $p \in [0.025, 0.05, 0.075, 0.1]$, repeating each experiment 10 times. Since $Q^*_h$ effectively has a rank of one, $p$ should be minimally greater than or equal to $1/400 = 0.0025$; ideally even larger to ensure that with high probability there are a sufficient number of rows and columns sample. We set the smallest value of $p$ to be $0.025$, which results in an expected number of sampled rows/columns of 10 our of 400, which is already a fairly small number. Table \ref{tbl:chooseP} shows the average $\ell_\infty$ error of $\bar{Q}_1$ at time step $h = 1$ for different values of $p$. 
\begin{table}[H] 
    \centering
    \begin{tabular}{c|ccccc}
        & $p = 0.025$ & $p = 0.05$ &$ p = 0.075 $& $p =  0.1$  \\
         \hline
    LR-EVI &$0.077$
&$0.084$
&$0.09$
&$0.129$ \\
    LR-MCPI 
&$0.152$
&$0.179$
&$0.196$
&$0.216$\\
    \end{tabular}
    \caption{The mean $\|\bar{Q}_1\|_\infty$ error of  LR-MCPI and LR-EVI for different values of $p$.}
    \label{tbl:chooseP}
\end{table}
\noindent

To calibrate these results, recall that entries in $Q^*_1$ take values from roughly 8.3 to 9.6. The results show that for the different values of $p$, LR-EVI performs well and the errors are on the same order. Furthermore, the errors are less for smaller values of $p$. The same results hold for LR-MCPI for the different values of $p$. As a result, for the experiments in Section \ref{sec:experiments}, we set $p = p_S = p_A = 0.025$. As our table suggests, the algorithm has decent performance for different values of $p$, so empirically one could choose $p$ based on given computational and memory constraints. The tradeoff is that small values of $p$ could reduce computation and memory usage, but it does assume the MDP satisfies the desired low rank conditions. By choosing $p$ to be as larger, one may increase some robustness to the low rank conditions, as the guarantees would be able to tolerate MDPs with larger ranks. 

\medskip \noindent
\textbf{Choosing $p_{SI}$:}  In contrast to LR-EVI and LR-MPCI, for larger values of $\overline{N}$, $p_{SI}$ needs to be increased as the gain from decreasing the noise is not as beneficial as observing more samples. For LR-EVI $+$ \texttt{SI} and LR-MCPI $+$ \texttt{SI}, we determine the best value of $p_{SI}$ for the four different values of $\overline{N} \in [10^6, 10^7, 10^8, 10^9]$ used in our experiments in Section \ref{sec:experiments}. We test eight values of $p_{SI} \in [0.2, 0.3, \ldots, 0.9]$ for the different $\overline{N}$.

For LR-EVI  $+$ \texttt{SI} to run, it minimally requires one sample for each $(s, a) \in \Omega_h$, which would mean at least $p_{SI}*1.6*10^6$ total samples in expectation. Thus for the lowest sample budget of $\overline{N}=10^6$, we set $p_{SI} = 0.2$ to ensure that LR-EVI  $+$ \texttt{SI}  has sufficient samples to run successfully for all ten trials.

Similarly, LR-MCPI  $+$ \texttt{SI} requires at least one trajectory for each $(s,a) \in \Omega_h$ to run, which would mean a total of $p_{SI}*8.8*10^6$ one-step samples in expectation. Thus for the lowest sample budget of $\overline{N}=10^6$, we set $p_{SI} = 0.075$ to ensure that LR-MCPI  $+$ \texttt{SI}  has sufficient samples to run successfully for all ten trials.

For the larger values of $\overline{N} \in [10^7, 10^8,  10^9]$, we test eight values of $p_{SI} \in [0.2, 0.3, \ldots, 0.9]$. Figure \ref{fig:pSI} shows the average $\ell_\infty$ error of $\bar{Q}_1$ for LR-EVI $+$ \texttt{SI} and LR-MCPI $+$ \texttt{SI} for the different values of $p_{SI}$ and $\overline{N}$ for LR-EVI  $+$ \texttt{SI} using allocation scheme $\tau^2$ and LR-MCPI  $+$ \texttt{SI} using allocation scheme $\tau^1$, where each experiment is repeated ten times.
\begin{figure}[H]
    \centering
    \subfloat[LR-EVI $+$ \texttt{SI}]{\includegraphics[width=.45\textwidth]{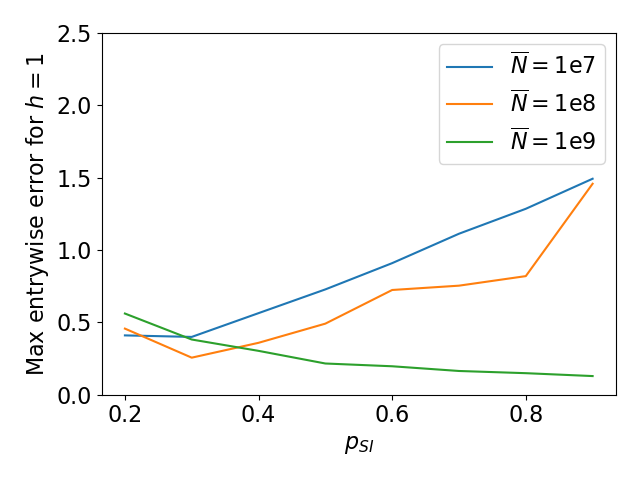} }%
    \qquad
    \subfloat[LR-MCPI $+$ \texttt{SI}]{\includegraphics[width=.45\textwidth]{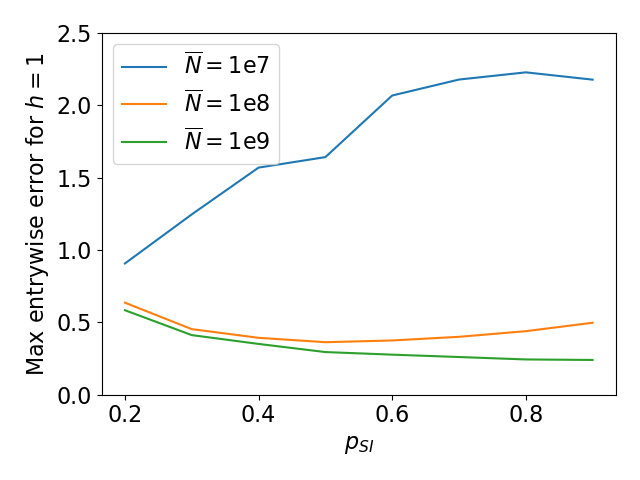} }%
    \caption{Max entrywise error of $\bar{Q}_1$ vs. $p_{SI}$ for four different values of $\overline{N}$ for LR-EVI $+$ \texttt{SI} and LR-MCPI $+$ \texttt{SI}.}
    \label{fig:pSI}
\end{figure}

Figure \ref{fig:pSI} shows that when the sample budget is smaller, i.e. $\overline{N} = 10^7$, smaller values of $p$ perform better; this is expected as there is insufficient sample budget so that increasing $p$ means the number of samples or trajectories allocated to each $(s,a) \in \Omega_h$ will be small, resulting in large noise. For large sample budget, i.e. $\overline{N} = 10^9$, the performance is not very sensitive to the choice of $p_{SI}$, though the larger values of $p_{SI}$ do perform better. This is also expected as there is sufficient samples to both sample more entries while still having $N_{s,a,h}$ large enough that the noise is well controlled. For $\overline{N} = 10^8$ the performance with respect to $p_{SI}$ is quite different in these two plots, and it may be due in part to the different allocation schemes. Allocation scheme $\tau^2$ significantly skews the proportion of samples to the earlier time steps compared to $\tau^1$. Hence for $\overline{N} = 10^8$, with scheme $\tau^2$, increasing $p_{SI}$ results in the error LR-EVI $+$ \texttt{SI} growing perhaps due to high noise in the later time steps. In contrast, with scheme $\tau^1$, for $\overline{N} = 10^8$, the error of LR-MCPI $+$ \texttt{SI} does not increase in $p_{SI}$.

Table \ref{tbl:pSI} displays the value of $p_{SI}$ we use for our experiments in Section \ref{sec:experiments}. As discussed before, the value of $p_{SI}$ is chosen for $\overline{N} = 10^6$ simply to ensure that the observation set is small enough such that the algorithms can produce some estimate for the given sample budget. For $\overline{N} \in [10^7, 10^8, 10^9]$, $p_{SI}$ is chosen according to the value that minimized the error in the results displayed in Figure \ref{fig:pSI}. 
\begin{table}[H]
    \centering
    \begin{tabular}{c|cccc}
         &$\overline{N} = 10^6$  & $\overline{N} = 10^7$  &$\overline{N} = 10^8$ &$\overline{N} = 10^9$ \\
         \hline
     LR-EVI $+$ \texttt{SI}  &0.2  & 0.3 & 0.3 & 0.9\\
     LR-MCPI $+$ \texttt{SI}  & 0.075 & 0.2 & 0.5 & 0.9
    \end{tabular}
    \caption{Values of $p_{SI}$ for each $\overline{N}$ in the experiments in Section \ref{sec:experiments}.}
    \label{tbl:pSI}
\end{table}

\tsdelete{
\subsection{Convergence Schedule Results}\label{app:convSchedule}
Running LR-EVI and LR-MCPI with the first convergence schedule $\tau^h = 0.2 - 0.02(h-1)$ resulted in  $0.885$ and $0.775$ error, respectively. To achieve similar levels of error, we run LR-EVI, LR-EVI $+$ \texttt{SI}, EVI, LR-MCPI $+$ \texttt{Soft-Impute}, and MCPI with convergence schedules $\tau^h = 0.05$ ($c_2$) and $\tau^h = 0.05 - .01( \lfloor(h - 1)/2\rfloor )$ ($c_3$). Table \ref{tbl:convSched} shows the error and sample complexity of the experiments. 
\begin{table}[H]
    \centering
    \begin{tabular}{c|cccccc}
         & $c_1$ Error & $c_1$ S.C. & $c_2$ Error & $c_2$ S.C. & $c_3$ Error & $c_3$ S.C.  \\
        \hline
         LR-EVI + \texttt{SI}& $1.24$ &$ 2.10 \times 10^7$ &  $0.860$&$ 2.83 \times 10^7$ & $0.654$&$ 5.93 \times 10^7$ \\
         LR-MCPI + \texttt{SI}& $0.901$&$ 1.57\times 10^8$ &  $0.775$&$ 3.65 \times 10^8$& $0.804$&$ 3.96\times 10^8$\\
         EVI & $1.53$&$ 1.17 \times 10^8$& $1.07$&$ 1.53 \times 10^8$ & $0.811$&$ 3.28 \times 10^8$ \\
         MCPI & $1.36$&$ 6.57 \times 10^8$& $0.980$&$ 9.68 \times 10^8$ & $0.767 $&$ 1.68 \times 10^9$
    \end{tabular}
    \caption{Error and sample complexity of running LR-EVI $+$ \texttt{SI}, EVI, LR-MCPI $+$ \texttt{SI}, and MCPI with convergence schedules $c_1, c_2,$ and $c_3$.}
    \label{tbl:convSched}
\end{table}
The results in Table \ref{tbl:convSched} show that to achieve similar levels of error, we run LR-EVI with $c_1$, LR-EVI $+$ \texttt{SI} with $c_2$, EVI with $c_3$, LR-MCPI with $c_1$, LR-MCPI $+$ \texttt{Soft-Impute} with $c_2$, and MCPI with $c_3$. }

\section{Additional Experiments for Double Integrator Problem}\label{app:DI}
We empirically evaluate the benefit of including a low-rank subroutine in tabular RL algorithms on the discretized finite-horizon version of the Double Integrator problem, a stochastic control problem seen in \cite{Yang2020Harnessing, serl}. 
\\\\
\noindent \textbf{Experimental Setup:} We formulate the Double Integrator problem as finite-horizon tabular MDP with state space $S = \{(x, \dot{x})\}$ for $x \in \{-2, -1.9,   \ldots, 1.9\}, \dot{x} \in \{-1, -0.9, \ldots, 0.9\}$,  action space $A = \{ -0.5, -0.499,  \ldots, 0.5\}$, and $H = 5$.  With this setup, the size of the state space is $|S| = 40\times 20 = 800$, and the size of the action space is $|A| = 1000$. The learner's goal is to control a unit brick on a frictionless surface and guide it to the origin, state $(0, 0)$. $x$ refers to the brick's position, and $\dot{x}$ denotes the brick's velocity. At each step, the learner is given a noisy reward that penalizes them for the brick's current position, 
\[
r_h((x, \dot{x}), a) = -\frac{x^2 + \dot{x}^2}{2} + \mathcal{N}(0, 1),
\]
for all $h \in [H]$, and $\mathcal{N}(0, 1)$ is a standard normal random variable. The learner chooses an action $a$ to change the velocity of the brick. The dynamics of the system for a given state-action pair $((x, \dot{x}), a)$ for all $h \in [H]$ are 
\[
x' \coloneqq \min(\max (x + \dot{x}, -2), 1.9), \quad \dot{x}' \coloneqq \min( \max(\lfloor \dot{x} + a \rfloor, -1), 0.9),
\]
where $\lfloor \dot{x} \rfloor$ rounds $\dot{x}$ down to the nearest tenth. Since the reward function does not depend on the action, the rank of the reward function is one. Due to the deterministic dynamics, for a given next state $(x', \dot{x}')$, the current state $(x, \dot{x})$ must minimally satisfy $x' = x + \dot{x}$. Thus, there are at most twenty $(x, \dot{x})$ pairs that satisfy $x' = x + \dot{x}$ (the velocity can only take on twenty different values), so there are at most twenty non-zero entries in $P((x', \dot{x}')| \cdot, \cdot)$. Therefore, the Tucker rank of $P((x', \dot{x}')|(x, \dot{x}), a)$ is upperbounded  by $(|S|, 20, |A|)$. Hence, this MDP satisfies Assumption \ref{asm:lrtk}.  Figure \ref{fig:qStarDI} displays a heat map of $Q^*_1$ and a plot of $Q^*_1$ singular values from largest to smallest. 
\begin{figure}[H]
    \centering
    \subfloat{\includegraphics[width=.45\textwidth]{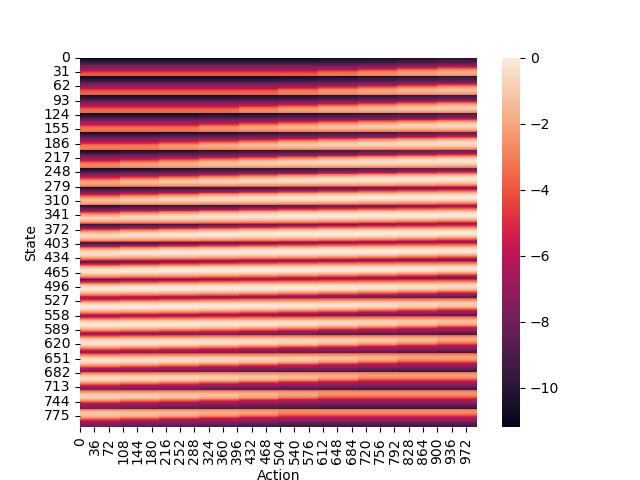}}%
    \qquad
    \subfloat{\includegraphics[width=.45\textwidth]{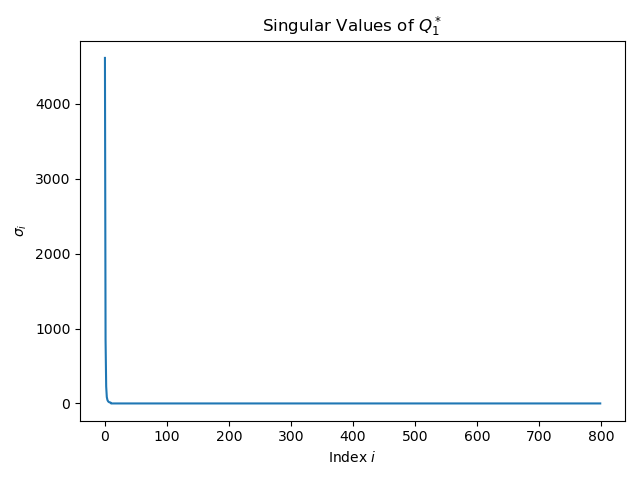} }%
    \caption{Heat map (left) and singular values (right) of $Q^*_1$.}
    \label{fig:qStarDI}
\end{figure}
While the transition kernel has Tucker rank upper bounded by $(|S|, 20, |A|)$, the rank of $Q^*_h$ is ten for $h \in [4]$ while the rank of $Q^*_H = r_H$ is one. Table \ref{tbl:stableRankDI} displays the rank and stable rank of $Q^*_h$ for $h \in [H]$.
\begin{table}[H]
    \centering
    \begin{tabular}{c|ccccc}
         & $h = 1$ & $h = 2$& $h = 3$& $h = 4$& $h = 5$  \\
         \hline
    Rank & $10$& $10$& $10$& $10$& $1$  \\
    Stable Rank& $1.04$ & $1.02$& $1.01$& $1.00$& $1.00$
    \end{tabular}
    \caption{Rank and stable rank of $Q^*_h$ for $h \in [H]$.   }
    \label{tbl:stableRankDI}
\end{table}
From the results in Table \ref{tbl:stableRankDI}, it's clear that the rank of $Q^*_h$ is much smaller than $|S|$ or $|A|$. 
\\\\
\noindent \textbf{Algorithms:} We compare the same algorithms used in the oil discovery experiments. Since $|S| \neq |A|$, we allow for $p_S$ and $p_A$ to be different. Instead of using \texttt{Soft-Impute} from the \texttt{fancyimpute} package, we use the implementation from \cite{tonyduan_matrix_completion_github} because it yielded better results and shorter runtimes. 

\subsection{Hyperparameter Tuning:}
In this section, we discuss the results of tuning the allocation schemes, $p_S, p_A,$ and $ p_{SI}$ for our different algorithms. 
\\\\
\noindent \textbf{Allocation Schemes:} To determine how to divide samples across the five time steps, we test our algorithms on the five different allocation schemes introduced in Appendix \ref{app:hyp}. Recall that the allocation scheme $\tau^i$ is $\tau^i_{h} = a^i_{h}/\sum_{h' \in [H]}a^i_{h'}$ for $[a_1, \ldots, a_H]$. The five different allocation schemes we consider are, $\tau^i = \{\tau^i_{h}\}_{h \in [H]}$, corresponding to
\begin{equation*}
  \begin{split}
    a^1_{h} &= H - h + 1,\\
    a^2_{h} &= (H - h + 1)^2, \\
    a^3_{h} &= (H - h + 1)^3
  \end{split} \quad
  \begin{split}
    a^4_{h} &= \lfloor(h+1)/2\rfloor, \\
    a^5_{h} &= 1. \\
    &
  \end{split}
\end{equation*}
With our implementation, roughly $\tau^i_h \bar{N}$ samples are allocated to estimating $Q^*_h$ ($\bar{N}$ is the sample budget). Table~\ref{tbl:allocDI} displays the mean entrywise error of $\bar{Q}_1$ of all the algorithms over five trials for each allocation schemes. We set the sample budget $\overline{N} = 10^8$, $p_S = 0.1, p_A = 0.08$, and $p_{SI} = 0.4$. 
\begin{table}[H]
    \centering
    \begin{tabular}{c|ccccc}
         & $\tau^1$ & $\tau^2$ & $\tau^3$ & $\tau^4$ & $\tau^5$ \\
         \hline
          LR-EVI  &$0.761 $&$1.20  $&$4.30$&$ 2.15$&$ 0.507$
\\
         LR-MCPI  &$0.550$&$0.416 $&$0.633$&$ 2.07$&$1.16$ \\
                  LR-EVI$+$SI &$0.459$&$ 0.469$&$ 1.343$&$ 0.512 $&$0.456$
\\
         LR-MCPI$+$SI &$0.394 $&$ 0.405 $&$ 0.388$&$  0.436$&$0.441$ \\
         EVI &$1.044$&$ 2.921$&$ 1.343$&$ 3.074 $&$1.469$
\\
         MCPI 
&$0.642 $&$ 0.615 $&$ 1.09$&$  2.225$&$2.526$ 
    \end{tabular}
    \caption{Mean $\ell_\infty$ error of $\bar{Q}_1$ of LR-EVI, LR-MCPI,  LR-EVI$+$SI, LR-MCPI$+$SI, EVI, and MCPI. }
    \label{tbl:allocDI}
\end{table}
While the errors are roughly similar for many of the allocation schemes for each algorithm, we choose the allocation scheme that corresponds to the lowest error. Hence, we use allocation scheme $\tau^5$ for LR-EVI,  $\tau^2$ for LR-MCPI,  $\tau^5$ for LR-EVI$+$SI, $\tau^3$ for LR-MCPI$+$SI, $\tau^1$ for EVI, and $\tau^2$ for MCPI.
\\\\
\noindent\textbf{Choosing $p_S$ and $p_A$:}  Similar to \texttt{Soft-Impute}, varying $p_s$ and $p_a$ as a function of the total number of samples improves the performance of LR-EVI and LR-MPCI. When the sample budget is small ($\bar{N} = 10^7$), one should set $p_s$ and $p_a$ to be smaller, which increases the bias from the matrix estimation method but decreases the noise on the empirical estimates. However, when the sample budget is increased ($\bar{N} = 10^9$), one should increase $p_s$ and $p_a$ to reduce the bias of the matrix estimation method as the estimation error on $\hat{Q}$ is already very small. Thus,  for $\bar{N} \in [10^7, 10^8, 10^9]$, we try the following $(p_s, p_a) \in [(0.025, 0.02), (0.05, 0.04), (0.1, 0.08), (0.2, 0.16)]$ over five trials. Table~\ref{tbl:lrevip} displays the entrywise error of $\bar{Q}_1$ obtained from running LR-EVI with allocation scheme $\tau^5$. Table~\ref{tbl:lrmcpip} displays the entrywise error of $\bar{Q}_1$ obtained from running LR-MCPI with allocation scheme $\tau^2$.
\begin{table}[H]
    \centering
    \begin{tabular}{c|ccc}
         & $\bar{N} = 10^7$ & $\bar{N} = 10^8 $ & $\bar{N} = 10^9 $ \\
         \hline
       $(p_s, p_a) = (0.025, 0.02)$  & $1.72 $&$ 1.09 $&$ 0.650$  \\
       $(p_s, p_a) = (0.05, 0.04)$  & $2.74$&$   0.236$&$  0.190$ \\
       $(p_s, p_a) = (0.1, 0.08)$  & $5.43$& $ 0.488 $& $ 0.0175$ \\
       $(p_s, p_a) = (0.2, 0.16)$  &  $10.9 $& $  1.05 $&$  0.0365$ 
    \end{tabular}
    \caption{Mean $\ell_\infty$ error of $\bar{Q}_1$ of LR-EVI for varying values of $(p_S, p_A)$. }
    \label{tbl:lrevip}
\end{table}
\begin{table}[H]
    \centering
    \begin{tabular}{c|ccc}
         & $\bar{N} = 10^7$ & $\bar{N} = 10^8 $ & $\bar{N} = 10^9 $ \\
         \hline
       $(p_s, p_a) = (0.025, 0.02)$  & $2.26 $&$1.50$&$ 1.34$ \\
       $(p_s, p_a) = (0.05, 0.04)$  & $4.70 $&$0.293 $&$0.143$ \\
       $(p_s, p_a) = (0.1, 0.08)$  & $4.54 $& $ 0.288$& $0.0452$ \\
       $(p_s, p_a) = (0.2, 0.16)$  & $10.84 $&$0.747 $&$0.108$ 
    \end{tabular}
        \caption{Mean $\ell_\infty$ error of $\bar{Q}_1$ of LR-MCPI for varying values of $(p_S, p_A)$. }
    \label{tbl:lrmcpip}
\end{table}
Hence, for LR-EVI, we use $(p_s, p_a) = (0.025, 0.02)$  for  $\bar{N} = 10^7 $, $(p_s, p_a) = (0.05, 0.04)$ for $\bar{N} = 10^8 $, and $(p_s, p_a) = (0.1, 0.08)$ for $\bar{N} = 10^9 $. For LR-MCPI, we use  $(p_s, p_a) = (0.025, 0.02)$ for  $\bar{N} = 10^7 $,  $(p_s, p_a) = (0.1, 0.08)$ for $\bar{N} = 10^8 $, and  $(p_s, p_a) = (0.1, 0.08)$ for $\bar{N} = 10^9 $.
\\\\
\noindent\textbf{Choosing $p_{SI}$:} For LR-EVI $+$ \texttt{SI} and LR-MCPI $+$ \texttt{SI}, we test different values of $p_{SI}$ for  $\overline{N} \in [ 10^7, 10^8, 10^9]$ to determine what to set $p_{SI}$ to in our final experiments in Section \ref{sec:experiments}. We test five values of $p_{SI} \in [0.1, 0.5, \ldots, 0.9]$ for the different $\overline{N}$. Figure \ref{fig:pSIDI} shows the average $\ell_\infty$ error of $\bar{Q}_1$ for LR-EVI $+$ \texttt{SI} and LR-MCPI $+$ \texttt{SI} for the different values of $p_{SI}$ and $\overline{N}$ for LR-EVI  $+$ \texttt{SI} using allocation scheme $\tau^5$ and LR-MCPI  $+$ \texttt{SI} using allocation scheme $\tau^3$, where each experiment is repeated five times.
\begin{figure}[H]
    \centering
    \subfloat{\includegraphics[width=.45\textwidth]{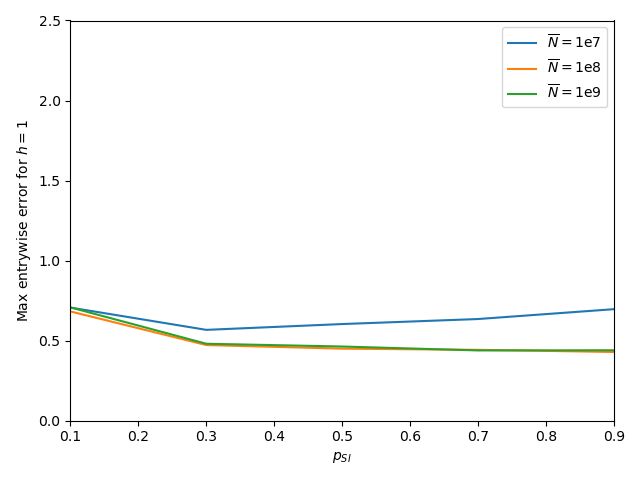}}%
    \qquad
    \subfloat{\includegraphics[width=.45\textwidth]{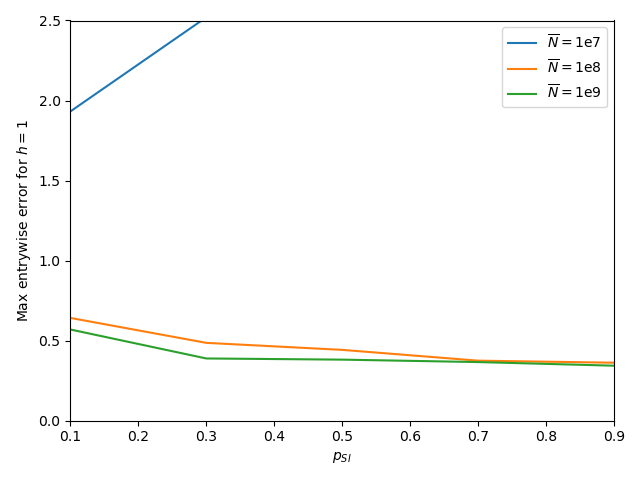} }%
    \caption{Mean $\ell_\infty$ error of $\bar{Q}_1$ of LR-EVI$+$SI(Left) and LR-MCPI$+$SI(Right) for various values of $p_{SI}$ and $\bar{N}$}
    \label{fig:pSIDI}
\end{figure}
From these results, we choose the $p_{SI}$ value that corresponds to the lowest error for our final experiments. Note that for $\bar{N}= 10^7$, the error is strictly increasing as $p_{SI}$ increases for LR-MCPI$+$SI. Hence, for LR-EVI$+$SI, we use $p_{SI} = 0.3$ for $\bar{N} = 10^7 $,  $p_{SI} = 0.9$ for $\bar{N} = 10^8$, and $p_{SI} = 0.9$ for $\bar{N} = 10^9$. For LR-MPCI$+$SI, we use $p_{SI} = 0.1$ for $\bar{N} = 10^7 $,  $p_{SI} = 0.9$ for $\bar{N} = 10^8$, and $p_{SI} = 0.9$ for $\bar{N} = 10^9$.
\\\\
\noindent \textbf{Results:} 
For each value of the sample budget $\bar{N} \in [10^7, 10^8, 10^9]$, we run each of the six algorithms ten times, with the hyperparameters specified above, and compute the average $\ell_\infty$ error of $\bar{Q}_1$.   Figure~\ref{fig:barPlotsDI} displays the mean entrywise error of $\bar{Q}_1$ over ten simulations with the error bars corresponding to the standard deviation. For vanilla MCPI to produce an estimate of $Q^*_1$, it requires at least $\sum_{h=1}^H SAh = 1.2\times 10^7$ one-step samples. Hence, there is no error bar for MCPI with $\bar{N} = 10^7$.  
\begin{figure}[H]
    \centering
    \subfloat{\includegraphics[width=.45\textwidth]{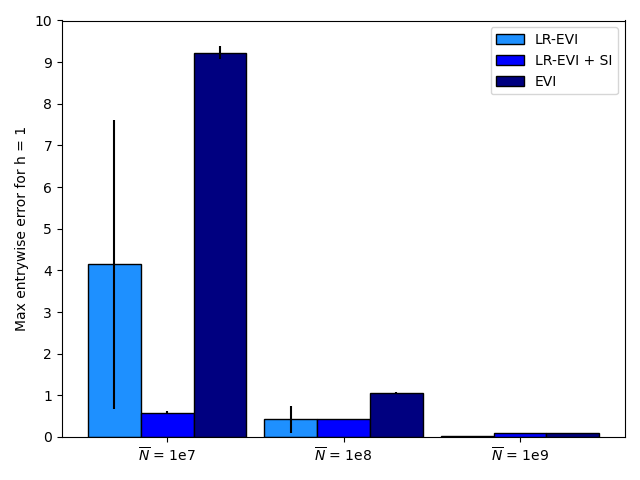}}%
    \qquad
    \subfloat{\includegraphics[width=.45\textwidth]{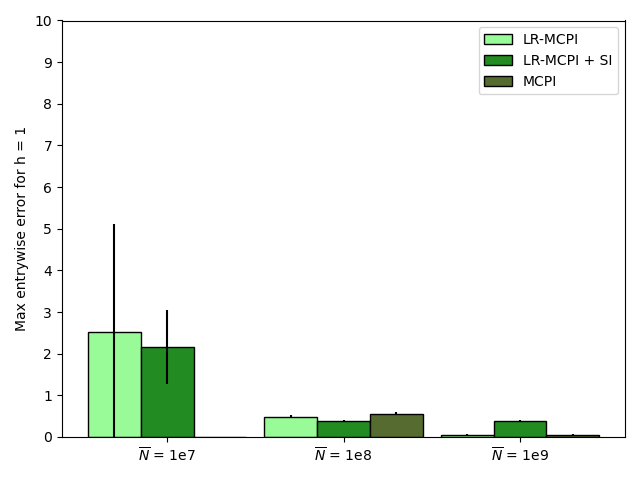} }%
    \caption{Max entrywise error of $\bar{Q}_1$ vs. sample budget for LR-EVI, LR-EVI $+$ \texttt{Soft Impute}, empirical value iteration and LR-MCPI, LR-MCPI $+$ \texttt{Soft-Impute}, Monte Carlo policy iteration at $h = 1$. Note that the optimal $Q_1^*$ function ranges in value from roughly $-11.2$ to $0$, such that $1.12$ error would be roughly 10\% error.}
    \label{fig:barPlotsDI}
\end{figure}
Similarly to the results from the oil discovery problem, Figure~\ref{fig:barPlotsDI} shows that even when there are not enough samples to MCPI, LR-MCPI produces a reasonable estimate. Furthermore, the low-rank algorithms perform better than the tabular versions, EVI and MCPI, when the sample budget is small, i.e., $\bar{N} = 10^7$ or $\bar{N} = 10^8$. When the sample budget is large, i.e., $\bar{N} = 10^9$, the low-rank methods perform similarly to EVI and MCPI, except for LR-MCPI$+$SI. The relatively large error from LR-MCPI$+$SI for $\bar{N} = 10^9$ suggests that the matrix estimation methods are sensitive to the choice of hyperparameters, so in practice, one should carefully tune these given their computational limits, e.g., storage and runtime constraints. In contrast to the oil discovery simulations, the policy iteration algorithms achieve a similar error to the value iteration algorithms with the same sample budget. 
\section{Proof of Lemma \ref{lem:Qlb}}\label{sec:lbProofs}

\begin{proof}[Proof of Lemma \ref{lem:Qlb}]
Consider the MDP defined in Section \ref{sec:lowerbound}. Let $\pi_h(1) = \pi_h(2) = 2$ for all $h \in \{2, \ldots, H-1\}$. We prove that 
\[
Q_{h}^{\pi, \theta}=\left(\begin{array}{cc}
\frac{1}{4} & \frac{1}{2}\\
\frac{1}{2}+2^{H-h}\theta, & 1+2^{H-h+1}\theta
\end{array}\right),\qquad V_{h}^{\pi, \theta}=\left(\begin{array}{c}
\frac{1}{2}\\
1+2^{H-h+1}\theta
\end{array}\right)
\]
with backwards induction on $h$.  Since $V_H^{\pi, \theta} = \left(\begin{array}{c}
\frac{1}{2}\\
1 + 2 \theta
\end{array}\right)$, the base case occurs at step $H-1$. Applying the exact Bellman operator, it follows that 
\[
Q^{\pi, \theta}_{H-1} = \left(\begin{array}{cc}
\frac{1}{4} & \frac{1}{2}\\
\frac{1}{2}+2\theta, & 1+2^{2}\theta
\end{array}\right)
\]
and 
\[
V^{\pi, \theta}_{H-1} = \left(\begin{array}{c}
\frac{1}{2}\\
1+2^{2}\theta
\end{array}\right)
\]
because for both values of $\theta$, $1+4\theta > 0$. Next, assume that the induction hypothesis holds, that is for some $t\in \{2, \ldots, H-1\}$,
\[
Q_{t}^{\pi, \theta}=\left(\begin{array}{cc}
\frac{1}{4} & \frac{1}{2}\\
\frac{1}{2}+2^{H-t}\theta, & 1+2^{H-t+1}\theta
\end{array}\right),\qquad V_{t}^{\pi, \theta}=\left(\begin{array}{c}
\frac{1}{2}\\
1+2^{H-t+1}\theta
\end{array}\right).
\]
Applying the exact Bellman operator, it follows that 
\begin{align*}
Q_{t-1}^{\pi, \theta} &= r_{t-1} + P_{t-1} V_{t}^{\pi, \theta}\\
&= \left(\begin{array}{cc}
-\frac{1}{4} & 0\\
-\frac{1}{2} & 2^{H-t+1}\theta
\end{array}\right) + \left(\begin{array}{cc}
V_{t}^{\pi, \theta}(1) & V_{t}^{\pi, \theta}(1)\\
V_{t}^{\pi, \theta}(2) & (2)
\end{array}\right)\\
&= \left(\begin{array}{cc}
\frac{1}{4} & \frac{1}{2}\\
\frac{1}{2}+2^{H-t+1}\theta, & 1+2^{H-t+2}\theta
\end{array}\right).
\end{align*}
Because $2^H|\theta| = 3/4$, $1 + 2^{H}\theta > 0$, which implies that $Q_{t-1}^{\pi, \theta}(2, 2) > Q_{t-1}^{\pi, \theta}(2, 1)$. Therefore, 
\[
V_{t-1}^{\pi, \theta}=\left(\begin{array}{c}
\frac{1}{2}\\
1+2^{H-t+2}\theta
\end{array}\right)
\]
and the induction hypothesis holds. 
Finally, since one stays in the same state at all steps after $h=1$ by construction, $\pi_h(1) = \pi_h(2) = 2$ for all $h \in \{2, \ldots, H-1\}$ is the unique optimal policy because $2^H|\theta| = 3/4$, which implies that $Q^{\pi, \theta}_{h}(2, 2) = 2Q^{\pi, \theta}_{h}(2, 1) > 0$.
\end{proof}

\section{Proof of Proposition \ref{prop:lrEst}}
\label{sec:proof_lrEst}

Proposition \ref{prop:lrEst} states that if the reward function and transition kernel are low rank, then for any value function estimate $\hat{V}_{h+1}$, $r_h + [P_h\hat{V}_{h+1}]$ has rank upper bounded by $d$.

\begin{proof}[Proof of Proposition \ref{prop:lrEst}]
Let MDP $M = (S,A,P,r, H)$ satisfy Assumption \ref{asm:lrtk} (specifically, $P_h$ has Tucker rank $(|S|, |S|, d)$. Hence, for any value function estimate $\hat{V}_{h+1}$,
\begin{align*}
    r_h(s,a) + P_h \hat{V}_{h+1} &= \sum_{i=1}^d W^{(h)}(s, i)V^{(h)}(a, i) + \sum_{s' \in S}\hat{V}_{h+1}(s')P_h(s'|s,a)\\
    &=\sum_{i=1}^d W^{(h)}(s, i)V^{(h)}(a, i) + \sum_{s' \in S}\hat{V}_{h+1}(s')\sum_{i=1}^d U^{(h)}(s', s, i)V^{(h)}(a, i) \\
    &= \sum_{i=1}^d V^{(h)}(a, i) \left (W^{(h)}(s, i)  + \sum_{s' \in S}\hat{V}_{h+1}(s') U^{(h)}(s', s, i) \right).
\end{align*}
Since $W^{(h)}(:, :)  + \sum_{s' \in S}\hat{V}_{h+1}(s') U^{(h)}(s', :, :)$ is an $|S|\times d$ matrix, $r_h(s,a) + P_h \hat{V}_{h+1}$ has rank at most $d$. The same result holds when $P_h$ has Tucker rank $(|S|, d, |A|)$ from a similar argument.
\end{proof}


\section{Proof of Lemma \ref{lem:anchor_sampling} (Random Sampling of Anchor States and Actions)}

As stated in Lemma \ref{lem:anchor_sampling}, our sampling method is as follows: we sample states and actions using the Bernoulli model. Let 
$\Tilde{U} \in \R^{|S|\times d}, \Tilde{V} \in \R^{|A|\times d}$ such that 
\[
\Tilde{U}_{i} = \begin{cases}
U_{i} \text{ with probability } p_1, \\
0 \text{ otherwise}
\end{cases}, \qquad
\Tilde{V}_{i} = \begin{cases}
V_{i} \text{ with probability } p_2, \\
0 \text{ otherwise}
\end{cases}
\]
Let $\Tilde{Q}_h := \Tilde{U}\Sigma \Tilde{V}^\top \in \R^{|S| \times |A|}$.  The sampled anchor states and actions are the states corresponding to the non-zero rows and columns, respectively. We remark that the Bernoulli model is chosen for convenience and similar results hold if we sample with replacement. To prove Lemma \ref{lem:anchor_sampling}, we present two intermediate lemmas, the first shows $p_1^{-1/2}\Tilde{U}$ and $p_2^{-1/2}\Tilde{V}$ have near orthonormal columns, which implies that  $\Tilde{U}$ and $\Tilde{V}$ have full column rank, with high probability.

\begin{lemma}\label{lem:anchor}
Let $Q_h, U, \Tilde{U}, \Sigma, V,$ and $\Tilde{V}$ be defined as above. Let $Q_h$ be $\mu$-incoherent. Then, with probability at least $1 - 4(|S| \wedge |A|)^{-10}$, we have 
\begin{align*}
    \|p_1^{-1}\Tilde{U}^\top\Tilde{U} - I_{d\times d}\|_{op}
 &\leq \sqrt{ \frac{40\mu d\log(|S|)}{p_1 |S|}} +  \frac{40\mu d \log(|S|)}{p_1 |S|}\\
 \|p_2^{-1}\Tilde{V}^\top \Tilde{V} - I_{d\times d}\|_{op},
 &\leq \sqrt{ \frac{40\mu d\log(|A|)}{p_2 |A|}} + \frac{40\mu d \log(|A|)}{p_2 |A|}.
 \end{align*}
\end{lemma}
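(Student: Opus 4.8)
The plan is to read the left-hand side as the operator-norm deviation of a sum of independent rank-one random matrices from its mean, and to control it with a matrix Bernstein inequality. I will carry out the argument for the $\tilde U$ bound; the $\tilde V$ bound is word-for-word identical with $(|S|,p_1)$ replaced by $(|A|,p_2)$, and the two are combined by a union bound at the end. Write $\tilde U^\top \tilde U = \sum_{i=1}^{|S|} \delta_i\, U_i U_i^\top$, where $\delta_i \sim \Bern(p_1)$ are independent and $U_i \in \R^d$ denotes the $i$-th row of $U$. Since $U$ comes from an SVD of $Q_h$ it has orthonormal columns, so $\sum_i U_i U_i^\top = U^\top U = I_{d\times d}$, hence $\E[\tilde U^\top\tilde U] = p_1 I_{d\times d}$, and it suffices to control $\|\sum_i X_i\|_{op}$ for the independent, symmetric, mean-zero matrices $X_i := (\delta_i - p_1)\, U_i U_i^\top$.

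Next I establish the two quantities that feed matrix Bernstein. For the uniform bound, $\|X_i\|_{op} = |\delta_i - p_1|\,\|U_i\|_2^2 \le \|U_i\|_2^2 \le \mu d/|S|$ by $\mu$-incoherence, so I may take $L := \mu d/|S|$. For the matrix variance, using $(U_iU_i^\top)^2 = \|U_i\|_2^2\, U_iU_i^\top$ and $\E[(\delta_i-p_1)^2] = p_1(1-p_1)$,
\[
\E[X_i^2] = p_1(1-p_1)\,\|U_i\|_2^2\, U_i U_i^\top \;\preceq\; \frac{\mu d}{|S|}\, p_1\, U_i U_i^\top,
\]
so $\nu := \big\|\sum_i \E[X_i^2]\big\|_{op} \le \frac{\mu d}{|S|}\, p_1 \,\big\|\sum_i U_iU_i^\top\big\|_{op} = \frac{p_1\mu d}{|S|}$, again by incoherence and orthonormality.

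Then apply the matrix Bernstein inequality (Tropp): $\P\big[\|\sum_i X_i\|_{op} \ge t\big] \le 2d\,\exp\!\big(\frac{-t^2/2}{\nu + Lt/3}\big)$. Choosing $t$ so that $t^2/2 \ge 11\log(|S|)\,(\nu + Lt/3)$ — which holds for $t = \sqrt{40\,\nu\log|S|} + \tfrac{40}{3}L\log|S|$ up to the stated constants — makes the right-hand side at most $2d\,|S|^{-11} \le 2|S|^{-10}$ since $d \le |S|$. On this event, dividing $\|\sum_i X_i\|_{op} \le t$ by $p_1$ and substituting $\nu = p_1\mu d/|S|$, $L = \mu d/|S|$ yields
\[
\|p_1^{-1}\tilde U^\top \tilde U - I_{d\times d}\|_{op} \;\le\; \sqrt{\tfrac{40\mu d\log|S|}{p_1|S|}} + \tfrac{40\mu d\log|S|}{p_1|S|},
\]
which is the claimed bound. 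The $\tilde V$ statement follows identically with failure probability $\le 2|A|^{-10}$; since $|S|^{-10}\le(|S|\wedge|A|)^{-10}$ and $|A|^{-10}\le(|S|\wedge|A|)^{-10}$, a union bound gives total failure probability $4(|S|\wedge|A|)^{-10}$.

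The computation is essentially routine matrix concentration; the only delicate point is matching the numerical constant $40$ and the exponent $-10$ — one must keep the factor $2d$ from matrix Bernstein (absorbed via $d \le |S|\wedge|A|$ together with the slightly inflated exponent $11\log$) and check that the quadratic-in-$t$ threshold inequality is solved by the two-term expression above. An alternative, equally valid route is to invoke the matrix Chernoff bound for the PSD sum $\sum_i \delta_i U_iU_i^\top$ directly, with maximum summand norm $L = \max_i\|U_i\|_2^2 \le \mu d/|S|$ and expected extreme eigenvalue $p_1$; that yields the same additive deviation after rescaling. I would remark on this but carry out the Bernstein version.
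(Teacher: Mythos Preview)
Your proposal is correct and follows essentially the same approach as the paper: both decompose $\tilde U^\top\tilde U - p_1 I$ as a sum of independent mean-zero rank-one matrices $(\delta_i-p_1)U_iU_i^\top$, bound the summand norm and variance via $\mu$-incoherence, and apply matrix Bernstein. The only cosmetic difference is that the paper uses the looser prefactor $2|S|$ in the Bernstein tail (rather than $2d$ as you do), which is why it does not need your extra step of absorbing $d\le|S|$ into the exponent; either bookkeeping yields the stated constants.
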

\begin{proof}[Proof of Lemma \ref{lem:anchor}]
For each $i \in [|S|]$, let $Z^{(i)}\in \R^{|S| \times d}$ be the matrix obtained from $U$ by zeroing out all but the $i$-th row. Let $\delta_1, \ldots, \delta_{|S|}$ be i.i.d. Bernoulli$(p_1)$ random variables. We can express 
\[
U = \sum_{i\in [|S|]}Z^{(i)} \text{  and  } \Tilde{U} = \sum_{i\in [|S|]}\delta_i Z^{(i)} 
\]
Note that 
\begin{align}
\Tilde{U}^\top \Tilde{U} &= \sum_{i \in [|S|]}\sum_{j \in [|S|]} \delta_i \delta_j Z^{(i)\top} Z^{(j)}\\
&=\sum_{i \in [|S|]} \delta_i^2  Z^{(i)\top} Z^{(i)} \label{eqn:bound}
\end{align}
by construction of $Z^{(i)}$ and $Z^{(j)}$. Hence,
\begin{align}
    \E[\Tilde{U}^\top \Tilde{U}] &= p_1\sum_{i \in [|S|]} Z^{(i)^\top} Z^{(i)} \nonumber\\
    &= p_1\sum_{i \in [|S|]}\sum_{j \in [|S|]} Z^{(i)^\top} Z^{(j)} \nonumber\\
    &= p_1 U^\top U \nonumber\\
    &= p_1I_{d\times d} \label{eqn:boundEx}
\end{align}
where the last equality is due to $U$ having orthonormal columns. For each $i \in [|S|]$, we define the following the mean-zero matrices
\[
X^{(i)} := (\delta_i^2 - \E[\delta_i^2]) Z^{(i)\top} Z^{(i)} = (\delta_i - p_1)Z^{(i)\top} Z^{(i)}.
\]
Since $Q^*_h$ is $\mu-$incoherent, 
\[
\|X^{(i)} \|_{op} \leq |\delta_i - p_1| \|Z^{(i)^\top} Z^{(i)}\|_{op} \leq \|Z^{(i)^\top} Z^{(i)}\|_{op} = \|U_{i-}\|_2^2 \leq \frac{\mu rd}{|S|} \quad \text{ surely}.
\]
Furthermore, 
\begin{align*}
    \sum_{i\in [|S|]}\E[X^{(i)^\top} X^{(i)}] =  \sum_{i\in [|S|]}\E[X^{(i)} X^{(i)^\top}]  &= \sum_{i\in [|S|]}\E[(\delta_i - p)^2]Z^{(i)^\top} Z^{(i)}Z^{(i)^\top} Z^{(i)} \\
    &= p_1(1-p_1) \sum_{i\in [|S|]}\|U_{i-}\|_2^2Z^{(i)^\top} Z^{(i)} \\
    &\preceq p_1 \cdot \frac{d \mu}{|S|} \sum_{i\in [|S|]}Z^{(i)^\top} Z^{(i)} \\
    &=  \frac{d \mu p_1}{|S|} U^TU \\
    &= \frac{d \mu p_1}{|S|}  I_{d\times d}.
\end{align*}
Thus, 
\[
\| \sum_{i\in [|S|]}\E[X^{(i)^\top} X^{(i)}] \|_{op}= \|\sum_{i\in [|S|]}\E[X^{(i)} X^{(i)^\top}] \|_{op}\leq \frac{d \mu p_1}{|S|}
\]
From the matrix Bernstein inequality (Theorem \ref{thm:mb}), we have 
\begin{align*}
\P\left( \| \Tilde{U}^\top \Tilde{U} - p_1 I_{d \times d}\|_{op} \geq t        \right) &= \P\left( \left  \| \sum_{i\in [|S|]}\left( (\delta_i^2  - p_1) Z^{(i)^\top} Z^{(i)} \right) \right \|_{op} \geq t        \right)\\
&= \P \left( \left \| \sum_{i\in [|S|]} X^{(i)} \right \|_{op} \geq t \right )\\
&\leq 2|S| \exp\left (-\frac{t^2/2}{\frac{p_1\mu d}{|S|} + \frac{\mu d}{3|S|}t}\right)\\
&\leq 2|S|\exp\left (-\frac{t^2}{\frac{2p_1\mu d}{|S|} + \frac{2\mu d}{|S|}t}\right)
\end{align*}
where the first equality follows from equations \ref{eqn:bound} and \ref{eqn:boundEx}. For $t = \sqrt{  \frac{40p_1 \mu d \log(|S|)}{|S|}} +   \frac{40\mu d \log(|S|)}{|S|}$, we have 
\[
\left \|\Tilde{U}^\top \Tilde{U}  - p_1 I_{d \times d} \right \|_{op} \leq \sqrt{  \frac{40 p_1 \mu d \log(|S|)}{|S|}} + \frac{40\mu d \log(|S|)}{|S|}
\]
with probability at least $1 - 2|S|^{-10}$. Dividing both sides by $p_1$ yields the first inequality in the lemma. The corresponding bound for $\Tilde{V}$ holds from a similar argument. Taking a union bound over the two events proves the lemma. 
\end{proof}

Now, we present our second lemma that shows that the uniformly sample submatrix ($\Tilde{O}(d)$ by $\Tilde{O}(d)$ in expectation) has rank-$d$ with its smallest non-zero singular value bounded away from zero. 
\begin{lemma}\label{lem:anchorSV}
Let $p_1 = \frac{\mu d \log(|S|)}{320|S|}$ and $p_2 = \frac{\mu d \log(|A|)}{320|A|}$. Under the event in Lemma \ref{lem:anchor}, we have 
\[
\sigma_{d}((p_1 \vee p_2)^{-1} \Tilde{Q}) \geq \frac{1}{2}\sigma_d(Q_h).
\]
\end{lemma}
\begin{proof}[Proof Of Lemma \ref{lem:anchorSV}]
Under the assumption that $p_1 = \frac{\mu d \log(|S|)}{320|S|}$ and $p_2 = \frac{\mu d \log(|A|)}{320|A|}$ and the event in Lemma \ref{lem:anchor}, we have $\| p^{-1}_1\Tilde{U}^\top \Tilde{U}  -  I_{d \times d} \|_{op} \leq \frac{1}{2}$. From Weyl's inequality, we have $\sigma_d(p^{-1}_1\Tilde{U}^\top \Tilde{U}) \geq \frac{1}{2}$, which implies $\sigma_d(p^{-1/2}_1\Tilde{U})\geq \frac{1}{\sqrt{2}}$. From a similar argument,   $\sigma_d(p^{-1/2}_1\Tilde{V})\geq \frac{1}{\sqrt{2}}$. Let $p = p_1 \vee p_2$, from the singular value version of the Courant-Fischer minimax theorem (Theorem 7.3.8 \citep{horn_johnson_1985}), we have 
\begin{align*}
\sigma_d(p^{-1} \Tilde{Q})
     &= \max_{S:dim(S) = d} \min_{x \in S, x \neq 0} \frac{\|p^{-1}\Tilde{U}\Sigma \Tilde{V}^\top x\|_2}{\|x\|_2}\\
    &= \max_{S:dim(S) = d} \min_{x \in S, x \neq 0} \frac{\|(p^{-1/2}\Tilde{U})\Sigma (p^{-1/2}\Tilde{V}^\top) x \|_2}{\|\Sigma(p^{-1/2} \Tilde{V}^\top) x\|_2} \frac{\|\Sigma(p^{-1/2} \Tilde{V}^\top) x\|_2}{\|p^{-1/2} \Tilde{V}^\top x\|_2}\frac{\|p^{-1/2} \Tilde{V}^\top x\|_2}{\|x\|_2}\\
    &\geq \max_{S:dim(S) = d} \min_{x \in S, x \neq 0} \frac{\|(p^{-1/2}\Tilde{U})\Sigma (p^{-1/2}\Tilde{V}^\top) x\|_2}{\|(p^{-1/2} \Tilde{U})^\dagger \|_{op}\|(p^{-1/2}\Tilde{U})\Sigma (p^{-1/2}\Tilde{V}^\top) x\|_2} \\
    &\hspace{90pt} \cdot
    \frac{\|\Sigma(p^{-1/2} \Tilde{V}^\top) x\|_2}{\| \Sigma^{-1}\|_{op}\|\Sigma(p^{-1/2} \Tilde{V}^\top) x\|_2}\frac{\|p^{-1/2} \Tilde{V}^\top x\|_2}{\|x\|_2}\\
    &= \sigma_d(p^{-1/2}\Tilde{U})\cdot \sigma_d(\Sigma)\max_{S:dim(S) = d} \min_{x \in S, x \neq 0}\frac{\|p^{-1/2} \Tilde{V}^\top x\|_2}{\|x\|_2} \\
    &=  \sigma_d(p^{-1/2}\Tilde{U})\cdot \sigma_d(\Sigma) \sigma_d(p^{-1/2} \Tilde{V}^\top )\\
    &\geq \sigma_d(p_1^{-1/2}\Tilde{U})\cdot \sigma_d(\Sigma) \sigma_d(p_2^{-1/2} \Tilde{V}^\top )\\
    &\geq \frac{1}{\sqrt 2} \sigma_d(Q_h) \frac{1}{\sqrt 2}\\
    &= \frac{1}{2}\sigma_d(Q_h)
\end{align*}
where the first inequality comes from properties of the operator norm and inverses/pseudo-inverses and the second inequality comes from replacing $p = p_1 \vee p_2$ with either $p_1$ or $p_2$. 
\end{proof}

Using the two above lemmas, we next prove Lemma \ref{lem:anchor_sampling}.
\begin{proof}[Proof of Lemma \ref{lem:anchor_sampling}]
Let $p_1, p_2$ be defined as in Lemma \ref{lem:anchorSV}. From the previous two lemmas, it follows that with probability at least  
$1 - 4(|S| \wedge |A|)^{-10}$, we have $\sigma_{d}((p_1 \vee p_2)^{-1} \Tilde{Q}) \geq \frac{1}{2}\sigma_d(Q_h)$. Next, we upper bound $\alpha = \frac{\|Q_h\|_\infty}{\sigma_d(Q_h(S^\#, A^\#))} $ assuming that $Q_h$ is $\mu$-incoherent with condition number $\kappa$. Let the singular value decomposition of the rank $d$ matrix $Q_h$ be $Q_h = U \Sigma V^\top$.  For $(s,a) \in S \times A$, 
\begin{align*}
    |Q_h(s,a)| &= |U_s \Sigma V_a|\\
    &\leq \|\Sigma\|_{op}|U_s V_a|\\
    &\leq \sigma_1(Q_h)\|U_s\|_2 |V_a\|_2\\
    &\leq \sigma_1(Q_h) \sqrt{\frac{\mu d}{|S|}} \sqrt{\frac{\mu d}{|A|}}\\
    &= \frac{d\sigma_1(Q_h) \mu }{\sqrt{|S||A|}}
\end{align*}
where the third inequality comes from $Q_h$ being $\mu$ incoherent. Hence, 
\begin{align*}
    \frac{\|Q_h\|_\infty}{\sigma_d(Q_h(S^\#, A^\#))} &\leq  \frac{d\sigma_1(Q_h) \mu }{\sigma_d(Q_h(S^\#, A^\#))\sqrt{|S||A|}}  \\
    &\leq \frac{d\sigma_1(Q_h) \mu }{\sigma_d(Q_h(S^\#, A^\#))(|S|\wedge|A|)} \\
    &= \frac{320\sigma_1(Q_h)  }{\sigma_d((p_1 \vee p_2)^{-1}Q_h(S^\#, A^\#))\log(|S|\wedge |A|)}\\
    &= \frac{640 \sigma_1(Q_h)}{\sigma_d(Q_h)\log(|S|\wedge |A|)}\\
    &= \frac{640 \kappa}{\log(|S|\wedge |A|)}
\end{align*}
where the third line comes from the definition of $p_1$ and $p_2$ and the fourth line comes from Lemma \ref{lem:anchor}. Hence, $\alpha \in O(\kappa)$. Next, we upperbound the size of the anchor sets with high probability. 

From the one-sided Bernstein's inequality, Proposition \ref{prop:oBern}, for $C'' = \frac{25600}{3\mu d}$, 
\begin{align*}
    \P \left ( |S^\#| - \E[|S^\#|] \geq C''p_1|S|       \right) &\leq \exp\left(    -\frac{p_1^2 (C'')^2|S|}{2( p_1 + \frac{ p_1 C''}{3})}  \right)\\
    &\leq \exp\left(    -\frac{\mu d C''}{640( 1 + \frac{ 1}{3})} \log(|S|) \right)\\
    &= |S|^{-10}.
\end{align*}
With a similar argument, 
\[
\P \left ( |A^\#| - \E[|A^\#|] \geq C''p_2|A|       \right) \leq |A|^{-10}.
\]
From our definition of $p_1, p_2$, it follows that $\E \left [|S^\#| \right] = O\left(d\mu\log(|S|)\right)$ and  $\E \left [|A^\#| \right] = O\left(d\mu\log(|A|)\right)$. A union bound on the above two events and the one in Lemma \ref{lem:anchor} asserts that 
\[
|S^\#| \in O\left(d\mu\log(|S|)\right), \quad |A^\#| \leq O\left(d\mu\log(|A|)\right), \quad \text{and } \alpha \in O(\kappa)
\]
with probability at least $1 - 6(|S| \wedge |A|)^{-10}$.

\end{proof}

\section{Proof of Lemma \ref{lem:ME_error_amplification} (Entrywise Bounds for Matrix Estimation)}

Lemma \ref{lem:ME_error_amplification} provides bounds for the entrywise error amplification of the matrix estimation method as a function of on $k$ and $\alpha$, assuming that $S^\#$ and $A^\#$ are ($k, \alpha$)-anchor states and actions for matrix $Q_h$.

\begin{proof}[Proof of Lemma \ref{lem:ME_error_amplification}]
Let $S^\#$ and $A^\#$ be ($k, \alpha$)-anchor states and actions for matrix $Q_h$. For all $(s, a) \in \Omega^{\#} = S^\# \times A^\#$, assume that $\hat{Q}_h(s,a)$ satisfies $|\hat{Q}_h(s,a) - Q_h(s,a)| \leq \eta^{\#},$ and for all $(s, a) \in \Omega \setminus \Omega^{\#}$, assume that $\hat{Q}_h(s, a)$ satisfies $|\hat{Q}_h(s,a) - Q_h(s,a)| \leq \eta$. We follow the same argument as the proof of Proposition 13 in \cite{serl}
except we upperbound equations (22) and (23) with $\|Q_h\|_\infty$ instead of $V_{\max}$. Following the steps in \cite{serl}, i.e., using the triangle inequality and from the definition of the operator norm, for all $(s,a) \in S\times A$, since $S^\#$ and $A^\#$ are ($k, \alpha$)-anchor states and actions, 
\begin{align*}
 |\bar{Q}_{h}(s,a) - Q_{h}(s,a)| \leq \sqrt{2}& \left \|[\hat{Q}_h(S^\#, A^\#)]^\dagger \right\|_{op} \left \|\hat{Q}_h(S^\#, a)\hat{Q}_h(s, A^\#) - Q_h(S^\#, a)Q_h(s, A^\#)   \right \|_F\\
 +& \left\|[\hat{Q}_h(S^\#, A^\#)]^\dagger- [Q_h(S^\#, A^\#)]^\dagger \right\|_{op}\left \| Q_h(S^\#, a)Q_h(s, A^\#)   \right \|_F.
\end{align*}
Following the steps in the proof of Proposition 13,  we upperbound the first operator norm term with Weyl's inequality and our assumption on $\eps$ and the second operator norm term with a classic result from perturbing pseudoinverses,  
\begin{align*}
    \left \|[\hat{Q}_h(S^\#, A^\#)]^\dagger \right\|_{op} & \leq \frac{2}{\sigma_d(Q_h(S^\#, A^\#))}\\
    \left\|[\hat{Q}_h(S^\#, A^\#)]^\dagger- [Q_h(S^\#, A^\#)]^\dagger \right\|_{op}&\leq 2(1+\sqrt{5}) \frac{\eta^{\#} k}{\sigma_d(Q_h(S^\#, A^\#))^2}.
\end{align*}
Since for all $s, s' \in S$ and $a, a' A$, 
\begin{align*}
    \left| \hat{Q}_h(s', a)\hat{Q}_h(s, a') - Q_h(s', a)Q_h(s, a') \right| &\leq   \left|(Q_h(s', a)+\eta)(Q_h(s, a')+\eta) - Q_h(s', a)Q_h(s, a') \right|\\
    &\leq  \eta| Q_h(s', a)|+\eta|Q_h(s, a')| +     \eta^2  \\
    &\leq 2\eta \|Q_h\|_\infty + \eta^2, 
\end{align*}
then, $\left \|\hat{Q}_h(S^\#, a)\hat{Q}_h(s, A^\#) - Q_h(S^\#, a)Q_h(s, A^\#)   \right \|_F \leq (2\eta \|Q_h\|_\infty + \eta^2) k$. Because $|Q_h(s',a)Q_h(s, a')|\leq \|Q_h\|_\infty^2$ for all $s, s' \in S$ and $a, a' A$, clearly $\left \| Q_h(S^\#, a)Q_h(s, A^\#)   \right \|_F \leq \|Q_h\|_\infty^2k$. Using these inequalities gives that for all $(s,a) \in S\times A$, 
\begin{equation}\label{eqn:me}
 |\bar{Q}_{h}(s,a) - Q_{h}(s,a)| \leq \left( 6\sqrt{2}  \alpha k \eta  + 2(1+\sqrt{5})\alpha^2k^2 \eta^{\#} \right)  \in O(\alpha k \eta + \alpha^2k^2\eta^{\#})
\end{equation}
since $\eta \leq \|Q_h\|_\infty$. 
\end{proof}

\section{Inductive Arguments for Theorems \ref{thm:gap}, \ref{thm:qnolr}, and \ref{thm:tklr}}
\label{app:thmProofs}
We next present the missing proofs of our sample complexity bounds in Section \ref{sec:results}. Recall that for ease of notation, 
\[
N_{s, a, h} \coloneqq \begin{cases} N_h^{\#} \text{ if } (s, a) \in \Omega_h^{\#} = S^{\#}_h \times A^{\#}_h\\
N_h \text{ otherwise.}
\end{cases}
\]

\begin{proof}[Proof of Theorem \ref{thm:gap}]
Assume that $Q^*_h$ is rank $d$ and has suboptimality gap $\Delta_{\min}$ (Assumptions \ref{asm:lrqt} and \ref{asm:sog}), and $S^\#_h, A^\#_h$ are $(k, \alpha)$-anchor states and actions for $Q^*_h$ for all $h \in [H]$. Let $N_{H-t} = \frac{2(t+1)^2(c')^2 k^2\alpha^2\log(2H|S||A|/\delta)}{\Delta_{\min}^2}, N_{H-t}^{\#} = \alpha^2k^2 N_{H-t}$, where $c'$ satisfies the inequality in Lemma \ref{lem:ME_error_amplification}, for all $h \in [H]$. We prove the correctness of LR-MCPI with high probability with induction on $t$ that the learned policy $\hat{\pi}_{H-t}$ is an optimal policy with probability at least $1 - \delta(t+1)/H$.

The base case occurs at step $t=0$ in which case our estimates, $\hat{Q}_H(s,a) = \frac{1}{N_{(s, a, H)}}\sum_{i=1}^{N_{(s, a, H)}}r_{H}^i(s,a)$ over $\Omega_H$, are only averages of realizations $r_{H}^i \sim R_H(s,a)$. Since $R_H(s,a)$ has bounded support for all $(s,a) \in S \times A$, from Hoeffding's inequality (Theorem \ref{thm:hoef}) with our choice of $N_{(s, a, H)}$, 
\begin{align*}
|\hat{Q}_{H}(s, a) - Q^*_H(s,a)| &\leq \frac{\Delta_{\min}}{2c'k\alpha}   \qquad \qquad \forall (s, a) \in \Omega_H\\
|\hat{Q}_{H}(s, a) - Q^*_H(s,a)| &\leq \frac{\Delta_{\min}}{2c'k^2\alpha^2}   \qquad \qquad \forall (s, a) \in \Omega_H^{\#}
\end{align*}
with probability at least $1 - \delta/H$ because $|\Omega
_h| \leq |S||A|$. Step 2 of LR-MCPI gives 
\[
|\bar{Q}_{H}(s, a) - Q^*_H(s,a)| \leq \frac{\Delta_{\min}}{2}
\]
for all $(s,a) \in S \times A$ from Lemma \ref{lem:ME_error_amplification}. From Step 3 of LR-MCPI, the identified policy is $\hat{\pi}_{H}(s) = \argmax_{a \in A} \bar{Q}_{H}(s, a)$. Assume for sake of contradiction that there exists an $s \in S$ such that $Q^*_{H}(s, \hat{\pi}_{H}(s)) < Q^*_{H}(s, \pi^*_{H}(s)) $. Let $\hat{\pi}_{H}(s) = a, \pi^*_{H}(s) = a^*$. Hence,
\begin{align*}
     Q^*_{H}(s, a^*) - Q^*_{H}(s, a) &=  Q^*_{H}(s, a^*) - \bar{Q}_{H}(s, a)  + \bar{Q}_{H}(s, a) - Q^*_{H}(s, a) \\
    &\leq Q^*_{H}(s, a^*) - \bar{Q}_{H}(s, a^*) + \frac{\Delta_{\min}}{2} \\
    &\leq \Delta_{\min}
\end{align*}
where the first inequality comes from how $\hat{\pi}_{H}(s)$ is defined and the matrix estimation step. Hence, we reach a contradiction since $Q^*_{H}(s, a^*) - Q^*_{H}(s, a)$ is less than the suboptimality gap. Thus, $\hat{\pi}_{H}(s)$ is an optimal policy. Hence, the base case holds. 

Next, let $x \in \{0, \ldots, H-1\}$. Assume that the inductive hypothesis, the policy $\hat{\pi}_{H-x}$ found in Step 4 of LR-MCPI is an optimal policy with probability at least $1- \delta(x+1)/H$, holds.  

Following Step 1 of LR-MCPI, we have $\hat{Q}_{H-x-1}(s,a) = \hat{r}^{\text{cum}}_{H-x-1}(s,a)$, which is an unbiased estimate of $Q^*_{H-x-1}(s,a)$ and also bounded. Hence, from Hoeffding's inequality (Theorem \ref{thm:hoef}), with the choice of $N_{H-x-1} = \frac{2(x+2)^2(c')^2k^2\alpha^2\log(2H|S||A|/\delta)}{\Delta_{\min}^2}, N_{H-x-1}^{\#} = \alpha^2k^2 N_{H-x-1}$, it follows that
\begin{align*}
 |\hat{Q}_{H-x-1}(s, a) - Q^*_{H-x-1}(s, a)| &\leq \frac{\Delta_{\min}}{2c'k\alpha} \qquad \qquad \forall (s, a) \in \Omega_{H-x-1}   \\
 |\hat{Q}_{H-x-1}(s, a) - Q^*_{H-x-1}(s, a)| &\leq \frac{\Delta_{\min}}{2c'k^2\alpha^2} \qquad \qquad \forall (s, a) \in \Omega_{H-x-1}^{\#}
\end{align*}
with probability $1- \frac{\delta}{H|S||A|}$. Step 2 of LR-MCPI gives
\[
|\bar{Q}_{H-x-1} - Q^*_{H-x-1}|_\infty \leq \frac{\Delta_{\min}}{2}
\]
from Lemma \ref{lem:ME_error_amplification}. From a union bound, it follows that $\hat{\pi}_{H-x}$ is an optimal policy and the above event occur with probability at least $1- \delta(x + 2)/H$. From Step 3 of LR-MCPI, the identified policy is $\hat{\pi}_{H-x-1}(s) = \argmax_{a \in A} \bar{Q}_{H-x-1}(s, a)$. Assume for sake of contradiction that there exists an $s \in S$ such that $Q^*_{H-x-1}(s, \hat{\pi}_{H-x-1}(s)) < Q^*_{H-x-1}(s, \pi^*_{H-x-1}(s)) $. Let $\hat{\pi}_{H-x-1}(s) = a, \pi^*_{H-x-1}(s) = a^*$. Hence,
\begin{align*}
     Q^*_{H-x-1}(s, a^*) &- Q^*_{H-x-1}(s, a) \\
     &=  Q^*_{H-x-1}(s, a^*) - \bar{Q}_{H-x-1}(s, a)  + \bar{Q}_{H-x-1}(s, a) - Q^*_{H-x-1}(s, a) \\
    &\leq Q^*_{H-x-1}(s, a^*) - \bar{Q}_{H-x-1}(s, a^*) + \frac{\Delta_{\min}}{2} \\
    &\leq \Delta_{\min}
\end{align*}
where the first inequality comes from how $\hat{\pi}_{H-x-1}(s)$ is defined and the matrix estimation step. Hence, we reach a contradiction since $Q^*_{H-x-1}(s, a^*) - Q^*_{H-x-1}(s, a)$ is less than the suboptimality gap. Thus, $\hat{\pi}_{H-x-1}(s)$ is an optimal policy, and the inductive step holds for $x+1$. It follows from mathematical induction that the learned policy $\hat{\pi}$ is an optimal policy with probability at least $1 - \delta$.

Next, we bound the number of required samples. The number of samples used is
\[
\sum_{t=0}^{H-1}(k(|A|+|S|))N_{H-t}(t+1)  + k^2N_{H-t}^{\#}(t+1)
\]
where the $t+1$ comes from the length of the rollout. With our choice of $N_{H-t}$, it follows that 
\begin{align*}
\sum_{t=0}^{H-1}(k(|A|&+|S|))N_{H-t}(t+1) \\
&= \sum_{t=0}^{H-1}(k(|A|+|S|))\frac{2(t+1)^3(c')^2 k^2\alpha_{H-t}^2\log(2H|S||A|/\delta)}{\Delta_{\min}^2} + \frac{2(t+1)^3(c')^2 k^4\alpha_{H-t}^4\log(2H|S||A|/\delta)}{\Delta_{\min}^2}  \\
&\leq \left(\frac{2c'^2k^3\alpha^2(|S| + |A|)\log(2H|S||A|/\delta)}{\Delta_{\min}^2} + \frac{k^6 \alpha^4 c'^2\log(2H|S||A|/\delta}{\Delta_{\min}^2}\right)\sum_{t=0}^{H-1}(t+1)^3\\
&\in \Tilde{O}\left(\frac{k^3\alpha^2(|S|+|A|)H^4}{\Delta_{\min}^2} + \frac{k^6 \alpha^4 H^4}{\Delta_{\min}^2} \right).
\end{align*}
\end{proof}

\begin{proof}[Proof of Theorem \ref{thm:qnolr}]
This proof follows the same steps as the previous one.
Assume that for all $\eps$-optimal policies $\pi$, $Q^{\pi}_h$ is rank $d$ (Assumption \ref{asm:lrqe}), and  $S^\#_h, A^\#_h$ are $(k, \alpha)$-anchor states and actions for $Q^{\hat{\pi}}_h$, where $\hat{\pi}$ is the learned policy from Low Rank Monte Carlo Policy Iteration for all $h \in [H]$. Let $N_{H-t} = \frac{2(t+1)^2(c')^2 k^2 \alpha^2H^2\log(2H|S||A|/\delta)}{\eps^2}, N_{H-t}^{\#} = \alpha^2k^2 N_{H-t}$, where $c'$ satisfies the inequality in Lemma \ref{lem:ME_error_amplification}, for all $h \in [H]$. We prove the correctness of LR-MCPI with high probability with induction on $t$ that the learned policy $\hat{\pi}_{H-t}$ is $\eps(t+1)/H$-optimal policy with probability at least $1 - \delta(t+1)/H$.

The base case occurs at step $t=0$ in which case our estimates, $\hat{Q}_H(s,a) = \frac{1}{N_{s, a, H}}\sum_{i=1}^{N_{s, a, H}}r_{H}^i(s,a)$ over $\Omega_H$, are only averages of realizations $r_{H}^i \sim R_H(s,a)$. Since $R_H(s,a)$ has bounded support for all $(s,a) \in S \times A$, from Hoeffding's inequality (Theorem \ref{thm:hoef}) with our choice of $N_{s, a, H} $,
\begin{align*}
    |\hat{Q}_{H}(s, a) - Q^*_H(s,a)| &\leq \frac{\eps}{2c'k\alpha H} \qquad \qquad \forall (s, a) \in \Omega_h\\
    |\hat{Q}_{H}(s, a) - Q^*_H(s,a)| &\leq \frac{\eps}{2c'k^2\alpha^2H} \qquad \qquad \forall (s, a) \in \Omega_h^{\#}
\end{align*}
with probability at least $1 - \delta/H$ because $|\Omega
_h| \leq |S||A|$. Step 2 of LR-MCPI gives 
\[
|\bar{Q}_{H}(s, a) - Q^*_H(s,a)| \leq \frac{\eps}{2H}
\]
for all $(s,a) \in S \times A$ from Lemma \ref{lem:ME_error_amplification}. Assume for sake of contradiction that there exists an $s \in S$ such that $Q^*_{H}(s, \hat{\pi}_{H}(s)) < Q^*_{H}(s, \pi^*_{H}(s)) - \eps/H$. Let $\hat{\pi}_{H}(s) = a, \pi^*_{H}(s) = a^*$. Hence,
\begin{align*}
     Q^*_{H}(s, a^*) - Q^*_{H}(s, a) &=  Q^*_{H}(s, a^*) - \bar{Q}_{H}(s, a)  + \bar{Q}_{H}(s, a) - Q^*_{H}(s, a) \\
    &\leq Q^*_{H}(s, a^*) - \bar{Q}_{H}(s, a^*) + \frac{\eps}{2H} \\
    &\leq \frac{\eps}{H}
\end{align*}
where the first inequality comes from how $\hat{\pi}_{H}(s)$ is defined and the matrix estimation step. Hence, we reach a contradiction since $Q^*_{H}(s, a^*) - Q^*_{H}(s, a)$ is less $\eps/H$. Thus, $\bar{Q}_H$ and $\hat{\pi}_H$ are both $\eps/H$-optimal, and the base case holds.

Next, let $x \in \{0, \ldots, H-1\}$. Assume that the inductive hypothesis, the policy $\hat{\pi}_{H-x}$ and action-value function estimate $\bar{Q}_{H-x}$ found in Step 3 of LR-MCPI are $\eps(x+1)/H$-optimal with probability at least $1- \delta(x+1)/H$, holds. 

Following Step 1 from LR-MCPI, we have $\hat{Q}_{H-x-1}(s,a) = \hat{r}^{\text{cum}}_{H-x-1}(s,a)$, which is bounded and an unbiased estimate of $Q^{\hat{\pi}}(s,a)$ for $\hat{\pi} = \{\hat{\pi}_h\}_{H-x \leq h \leq H}$, which is an $\eps$-optimal policy. Hence, from Hoeffding's inequality (Theorem \ref{thm:hoef}), with the choice of $N_{H-x-1} = \frac{2(x+2)^2(c')^2H^2\alpha^2k^2\log(2H|S||A|/\delta)}{\eps^2}, N_{H-x-1}^{\#} = \alpha^2k^2 N_{H-x-1}$, it follows that
\begin{align*}
|\hat{Q}_{H-x-1}(s, a) - Q^{\hat{\pi}}_{H-x-1}(s, a)| &\leq \frac{\eps}{2c'\alpha k H} \qquad \qquad \forall (s, a) \in \Omega_{H-x-1}\\
|\hat{Q}_{H-x-1}(s, a) - Q^{\hat{\pi}}_{H-x-1}(s, a)| &\leq \frac{\eps}{2c'\alpha^2k^2 H} \qquad \qquad \forall (s, a) \in \Omega_{H-x-1}^{\#}
\end{align*}
with probability $1- \frac{\delta}{H|S||A|}$. Step 2 of LR-MCPI gives
\[
\|\bar{Q}_{H-x-1} - Q^{\hat{\pi}}_{H-x-1}\|_\infty \leq \frac{\eps}{2H}
\]
from Lemma \ref{lem:ME_error_amplification}. The union bound asserts that the above error guarantee and $\hat{\pi}_{H-x}$ and $\bar{Q}^{\hat{\pi}}_{H-x}$ are $(x+1)\eps/H$ holds with probability at least $1- \delta(x+2)/H$. From step 3 of LR-MCPI, the identified policy is $\hat{\pi}_{H-x-1}(s) = \argmax_{a \in A} \bar{Q}_{H-x-1}(s, a)$. For all $(s,a) \in S\times A$,
\begin{align*}
|\bar{Q}_{H-x-1}(s,a) - Q^*_{H-x-1}(s,a)| &\leq |\bar{Q}_{H-x-1}(s,a) - Q^{\hat{\pi}}_{H-x-1}(s,a)| \\
&+ |Q^{\hat{\pi}}_{H-x-1}(s,a) - Q^*_{H-x-1}(s,a)|\\ 
&\leq \frac{\eps}{2H} + \left | \E_{s'\sim P_{H-x-1}(\cdot| s,a)} \left [V^{\hat{\pi}}_{H-x}(s') - V^*_{H-x}(s')\right] \right|\\
&\leq \frac{\eps}{2H} + \left |\E_{s'\sim P_{H-x-1}(\cdot| s,a)}\left [(x+1)\eps/H \right] \right|\\
&= \frac{(2x+3)\eps}{2H} .
\end{align*}
Thus, $\bar{Q}_{H-x-1}$ is $\frac{\eps(x+2)}{H}$-optimal. It follows from the construction of $\hat{\pi}_{H-x-1}(s)$ that 
\[
\bar{Q}_{H-x-1}(s, \hat{\pi}_{H-x-1}(s)) \geq \bar{Q}_{H-x-1}(s, a'),
\]
 where $a' = \arg\max_a Q^*_{H-x-1}(s, a)$.
Hence, for all $s\in S$,
\begin{align*}
    |V^*_{H-x-1}(s) - V^{\hat{\pi}}_{H-x-1}(s) | &\leq  | Q^*_{H-x -1}(s, a') - \bar{Q}_{H-x-1}(s, \hat{\pi}_{H-x-1}(s)) | \\
    &+ | \bar{Q}_{H-x-1}(s, \hat{\pi}_{H-x-1}(s)) - Q^{\hat{\pi}}_{H-x-1}(s, \hat{\pi}_{H-x-1}(s))| \\
    &\leq  \frac{(2x+3)\eps}{2H} + \frac{\eps}{2H}\\
    &= \frac{(x+2)\eps}{H}.
\end{align*}
Thus, $\hat{\pi}_{H-x-1}(s)$ and $\bar{Q}_{H-x-1}$ are  $(x+2)\eps/H$-optimal, and the inductive step holds for $x+1$. It follows from mathematical induction that the learned policy $\hat{\pi}$ and action-value function are $\eps$-optimal with probability at least $1 - \delta$.

Next, we bound the number of required samples. The number of samples used is
\[
\sum_{t=0}^{H-1}(k(|A|+|S|))N_{H-t}(t+1)  + k^2N_{H-t}^{\#}(t+1)
\]
where the $t+1$ comes from the length of the rollout. With our choice of $N_{H-t}$, it follows that 
\begin{align*}
\sum_{t=0}^{H-1}(k(|A|&+|S|))N_{H-t}(t+1) \\
&= 
\sum_{t=0}^{H-1}(k(|A|+|S|))\frac{2(t+1)^3(c')^2 k^2\alpha^2H^2\log(2H|S||A|/\delta)}{\eps^2} +\frac{2(t+1)^3(c')^2 k^6\alpha^4H^2\log(2H|S||A|/\delta)}{\eps^2}   \\
&\in \Tilde{O}\left(\frac{k^3\alpha^2(|S| + |A|)H^6}{\eps^2} + \frac{k^6\alpha^4H^6}{\eps^2} \right).
\end{align*}
\end{proof}

\begin{proof}{Proof of Theorem \ref{thm:tklr}}
This proof follows the same steps as the previous two proofs.
Assume that for any $\eps$-optimal value function $V_{h+1}$, the matrix corresponding to $Q'_h = [r_h + [P_h V_{h+1}]]$ is rank $d$, and  $S^\#_h, A^\#_h$ are $(k, \alpha)$-anchor states and actions for $\hat{Q}'_h = [r_h + [P_h \hat{V}_{h+1}]]$, where $\hat{V}_{h+1}$ is the learned value function from Low Rank Empirical Value Iteration for all $h \in [H]$. Let $N_{H-t} = \frac{2(t+1)^2(c')^2 k^2\alpha^2H^2\log(2H|S||A|/\delta)}{\eps^2}, N_{H-t}^{\#} = \alpha^2k^2 N_{H-t}$, where $c'$ satisfies the inequality in Lemma \ref{lem:ME_error_amplification}, and $Q'_h = [r_h + P_h\hat{V}_{h+1}]$ for all $\eps$-optimal value fucntions $\hat{V}_{h+1}$ for all $h \in [H]$. We prove the correctness of LR-EVI with high probability with induction on $t$ that 
\[
\|\bar{Q}_{H-t}- Q^*_{H-t}\|_\infty\leq \frac{\eps(t+1)}{H}, \qquad \|\bar{Q}_{H-t}- Q^{\hat{\pi}}_{H-t}\|_\infty\leq \frac{\eps(t+1)}{H}
\]
where $\bar{Q}_{H-t}$ and $\hat{\pi}_{H-t}$ are the learned $Q$ function and policy with probability at least $1 - \delta(t+1)/H$. 

The base case occurs at step $t=0$ in which case our estimates, $\hat{Q}_H(s,a) = \frac{1}{N_{s,a, H}}\sum_{i=1}^{N_{s,a, H}}r_{H}^i(s,a)$ over $\Omega_H$, are only averages of realizations $r_{H}^i \sim R_H(s,a)$ since $\hat{V}_{H+1}= \Vec{0}$. Since $R_H(s,a)$ has bounded support for all $(s,a) \in S \times A$, from Hoeffding's inequality (Theorem \ref{thm:hoef}) with our choice of $N_{s,a, H}$, 
\begin{align*}
    |\hat{Q}_{H}(s, a) - Q^*_H(s,a)| &\leq \frac{\eps}{c'k\alpha H} \qquad \qquad \forall (s, a) \in \Omega_h\\
    |\hat{Q}_{H}(s, a) - Q^*_H(s,a)| &\leq \frac{\eps}{c'k^2\alpha^2H} \qquad \qquad \forall (s, a) \in \Omega_h^{\#}
\end{align*}
with probability at least $1 - \delta/H$ because $|\Omega
_h| \leq |S||A|$. Step 2 of LR-EVI gives 
\[
|\bar{Q}_{H}(s, a) - Q^*_H(s,a)| \leq \frac{\eps}{H}
\]
for all $(s,a) \in S \times A$ from Lemma \ref{lem:ME_error_amplification}.  Since $Q^*_H = Q_H^{\hat{\pi}}$, the base case holds. 

Next, let $x \in \{0, \ldots, H-1\}$. Assume that the inductive hypothesis, the action-value function estimates $\bar{Q}_{H-x}$ and learned policy $\hat{\pi}_{H-x}$ satisfy  
\[
\|\bar{Q}_{H-x} - Q^*_{H-x}\|_\infty \leq \frac{(x+1)\eps}{H}, \quad \|\bar{Q}_{H-x} - Q^{\hat{\pi}}_{H-x}\|_\infty \leq \frac{(x+1)\eps}{H}
\]
holds with probability at least $1 - \delta(x+1)/H$. Following Step 1 from LR-EVI, we have 
\[
\hat{Q}_{H-x-1}(s, a) = \hat{r}_{H-x-1}(s,a) + \E_{s' \sim \hat{P}_{H-x-1}(\cdot|s,a)}[\hat{V}_{{H-x}}(s')],
\]
an unbiased estimate of $Q'_{H-x-1}(s,a) = r_{H-x-1}(s,a) + \E_{s' \sim P_{H-x-1}(\cdot|s,a)}[\hat{V}_{{H-x}}(s')]$, which is bounded. Hence, from Hoeffding's inequality (Theorem \ref{thm:hoef}), with the choice of 
$N_{H-x-1} = \frac{(x+2)^2(c')^2k^2H^2\alpha^2\log(2H|S||A|/\delta)}{2\eps^2}$,
$ N_{H-x-1}^{\#} = \alpha^2k^2 N_{H-x-1}$, it follows that
\begin{align*}
|\hat{Q}_{H-x-1}(s, a) - Q'_{H-x-1}(s, a)| &\leq \frac{\eps }{2c'k \alpha H} \qquad \qquad \forall (s, a) \in \Omega_{H-x-1}\\
|\hat{Q}_{H-x-1}(s, a) - Q'_{H-x-1}(s, a)| &\leq \frac{\eps }{2c'k^2 \alpha^2 H } \qquad \quad \forall (s, a) \in \Omega_{H-x-1}^{\#}
\end{align*}
with probability $1- \frac{\delta}{H|S||A|}$. Step 2 of LR-EVI gives
\[
|\bar{Q}_{H-x-1} - Q'_{H-x-1}|_\infty \leq \frac{\eps}{H}
\]
from Lemma \ref{lem:ME_error_amplification}. The union bound asserts that the above error guarantee and $\bar{Q}_{H-x'}$ is close to $Q^*_{H-x'}$ and $Q^{\hat{\pi}}_{H-x'}$ for $x \in [x]$ holds with probability at least $1- \delta(x+2)/H$. Hence, for all $(s,a) \in S \times A$,
\begin{align*}
    |\bar{Q}_{H-x-1}(s,a)   - Q^*_{H-x-1}(s,a)| &\leq |\bar{Q}_{H-x-1}(s,a)   - Q'_{H-x-1}(s,a)| +|Q'_{H-x-1}(s,a)   - Q^*_{H-x-1}(s,a)| \\
    &\leq \frac{\eps}{H} + |\E_{s'\sim P_{H-x-1}(\cdot| s,a)}[\max_{a \in A} \bar{Q}_{H-x}(s', a') - V^*_{H-x}(s')]| \\
    &\leq \frac{\eps}{H}+ |\E_{s'\sim P_{H-x-1}(\cdot| s,a)}[(x+1)\eps/H]| \\
    &= \frac{(x+2)\eps}{H}
\end{align*}
Thus, $\bar{Q}_{H-x-1}$ is $(x+2)\eps/H$-optimal. Next, we note that 
\begin{align*}
    |\bar{Q}_{H-x-1}(s,a)   - Q^{\hat{\pi}}_{H-x-1}(s,a)| &\leq |\bar{Q}_{H-x-1}(s,a)   - Q'_{H-x-1}(s,a)| +|Q'_{H-x-1}(s,a)   - Q^{\hat{\pi}}_{H-x-1}(s,a)| \\
    &\leq \frac{\eps}{H} + |\E_{s'\sim P_{H-x-1}(\cdot| s,a)}[\max_{a \in A} \bar{Q}_{H-x}(s', a') - V^{\hat{\pi}}_{H-x}(s')]| \\
    &\leq \frac{\eps}{H}+ |\E_{s'\sim P_{H-x-1}(\cdot| s,a)}[(x+1)\eps/H]| \\
    &= \frac{(x+2)\eps}{H}
\end{align*}
where the third inequality holds because 
\begin{align*}
    |\max_{a\in A}\bar{Q}_{H-x}(s', a') - V^{\hat{\pi}}_{H-x}(s')| &\leq | \E_{a' \sim \hat{\pi}_{H-x}(s')}[\bar{Q}_{H-x}(s', a')] - \E_{a' \sim \hat{\pi}_{H-x}(s')}[ Q^{\hat{\pi}}_{H-x}(s', a')]\\
    &\leq \|\hat{Q}_{H-x} - Q^{\hat{\pi}}_{H-x}\|_\infty\\
    &\leq \frac{(x+1)\eps}{H}
\end{align*}
from the induction hypothesis, and the inductive step holds for $x+1$. It follows from mathematical induction (and the triangle inequality) that the learned policy $\hat{\pi}$ and action-value function are $2\eps$ and $\eps$-optimal with probability at least $1 - \delta$. Scaling $N_h$ by a factor of four results in learning an $\eps$-optimal policy with probability at least $1-\delta$ without changing the sample complexity's dependence on $|S|, |A|, H,$ or $\eps$.

Next, we bound the number of required samples. The number of samples used is
\[
\sum_{t=0}^{H-1}(k(|A|+|S|))N_{H-t} +  k^2N_{H-t}^{\#}.  
\]
Note that there is no $(t+1)$ term as samples are single transitions instead of rollouts. With our choice of $N_{H-t}$, it follows that 
\begin{align*}
\sum_{t=0}^{H-1}(k(|A|&+|S|))N_{H-t} \\
&= \sum_{t=0}^{H-1}k(|A|+|S|) \frac{8(t+1)^2(c')^2 k^2\alpha^2H^2\log(2H|S||A|/\delta)}{\eps^2} + \frac{8(t+1)^2(c')^2 k^6\alpha^4H^2\log(2H|S||A|/\delta)}{\eps^2} \\
&\in \Tilde{O}\left(\frac{k^3\alpha^2(|S| + |A|)H^5}{\eps^2} + \frac{k^6\alpha^4 H^5}{\eps^2} \right).
\end{align*}

\end{proof}

\section{Proofs for Approximately Low Rank Models} \label{app:approx}
We first present the proof of Proposition \ref{prop:lrEstApprox}, which shows  that if the reward function and transition kernel are low rank, then for any value function estimate $\hat{V}_{h+1}$, $r_h + [P_h\hat{V}_{h+1}]$ has rank upper bounded by $d$.
\begin{proof}[Proof of Proposition \ref{prop:lrEstApprox}]
Let  $\xi_{R}, \xi_{P},  r_{h,d},  [P_{h, d}\hat{V}_{h+1}]$ be defined as in Section \ref{sec:approx}. Then, for all $(s, a, h) \in S \times A\times [H]$,
\begin{align*}
    [r_{h, d} +  P_{h, d}\hat{V}_{h+1}](s,a) &- [r_h +  P_{h}\hat{V}_{h+1}](s,a)| \\
    &\leq |[r_h - r_{h, d}](s,a)| + |[(P_{h, d} - P_{h} ) \hat{V}_{h+1}](s,a)|\\
    &= \xi_{R} + |\sum_{s'\in S}\hat{V}_{h+1}(s')(P_{h, d}(s'|s,a) - P_{h}(s'|s,a))| \\
    &\leq  \xi_{ R} +  (H-h)|\sum_{s'\in S}(P_{h, d}(s'|s,a) - P_{h}(s'|s,a))|\\
    &= \xi_{R} +  (H-h)2d_{\mathrm{TV}}(P_h(\cdot|s, a), P_{h, d}(\cdot| s,a))_{\mathrm{TV}}\\
    &= \xi_{R} + (H-h) \xi_{P}
\end{align*}
since $\hat{V}_{h+1}(s) \in [0, H-h-1]$.
\end{proof}

We next prove that the learned policy's error is additive with respect to the approximation error.

\begin{proof}{Proof of Theorem \ref{thm:tklrApprox}}
This proof follows the same steps as the proof of Theorem \ref{thm:tklr} while accounting for the approximation error. Assume that we have a $(d, \xi_R, \xi_P)$-approximately low-rank MDP.  Let $S^\#_h, A^\#_h$ be $(k, \alpha)$-anchor states and actions,  $c'$ be a constant that satisfies the inequality in Lemma \ref{lem:ME_error_amplification}, $N_{H-t} = \frac{2(t+1)^2(c')^2 k^2\alpha^2H^2\log(2H|S||A|/\delta)}{\eps^2}, N_{H-t}^{\#} =\alpha^2k^2 N_{H-t}$, and $Q'_h = [r_h + P_h\hat{V}_{h+1}]$ for all $\eps$-optimal value functions $\hat{V}_{h+1}$ for all $h \in [H]$. We prove the correctness of LR-EVI with high probability with induction on $t$ that 
\[
\|\bar{Q}_{H-t} - Q^*_{H-t}\|_\infty, \|\bar{Q}_{H-t} - Q^{\hat{\pi}}_{H-t}\|_\infty \leq (t+1)\eps/H + \sum_{i = 0}^{t} (c'k^2 \alpha^2+1)\left(\xi_{R} + i \xi_{ P}\right)
\]
where $\bar{Q}_{H-t}$ and $\hat{\pi}_{H-t}$ are the learned $Q$ function and policy with probability at least $1-\delta(t+1)/H$ for each $t\in \{0,\ldots, H-1\}$.

The base case occurs at step $t=0$ in which case our estimates, $\hat{Q}_H(s,a) = \frac{1}{N_{s, a, H}}\sum_{i=1}^{N_{s, a, H}}r_{H,i}(s,a)$ over $\Omega_H$, are only averages of realizations $r_{H}^i \sim R_H(s,a)$ since $\hat{V}_{H+1}= \Vec{0}$. Since $R_H(s,a)$ has bounded support for all $(s,a) \in S \times A$, from Hoeffding's inequality (Theorem \ref{thm:hoef}) with our choice of $N_{s, a, H}$
\begin{align*}
    |\hat{Q}_{H}(s, a) - Q^*_H(s,a)| &\leq \frac{\eps}{c'k\alpha H} \qquad \qquad \forall (s, a) \in \Omega_h\\
    |\hat{Q}_{H}(s, a) - Q^*_H(s,a)| &\leq \frac{\eps}{c'k^2\alpha^2H} \qquad \qquad \forall (s, a) \in \Omega_h^{\#}
\end{align*}
with probability at least $1 - \delta/H$ because $|\Omega
_h| \leq |S||A|$. Under the event above, it follows that $|\hat{Q}_{H}(s, a) - Q^*_{H, d}(s,a)| \leq \frac{\eps}{c'_H\alpha^2k^2H} + \xi_{ R}$ or $|\hat{Q}_{H}(s, a) - Q^*_{H, d}(s,a)| \leq \frac{\eps}{c'_H\alpha kH} + \xi_{ R}$ for all $(s,a) \in \Omega_H$. Step 2 of LR-EVI gives 
\[
|\bar{Q}_{H}(s, a) - Q^*_{H, d}(s,a)| \leq \frac{\eps}{H} + Ck^2 \alpha^2 \xi_{ R}
\]
for all $(s,a) \in S \times A$ from Lemma \ref{lem:ME_error_amplification} for some positive constant $C$. By definition of the approximation error, 
\[
|\bar{Q}_{H}(s, a) - Q^*_{H}(s,a)| \leq \frac{\eps}{H}+ (C k^2\alpha^2 + 1) \xi_{ R}  \, \qquad \forall (s,a) \in S \times A.
\]
Since $Q^*_H = Q^{\hat{\pi}}_H$, the base case holds.

Next, let $x \in \{0, \ldots, H-1\}$. Assume that the inductive hypothesis, the action-value function estimates $\bar{Q}_{H-x}$ and learned policy $\hat{\pi}_{H-x}$ satisfy  
\[
\|\bar{Q}_{H-x} - Q^*_{H-x}\|_\infty \leq \frac{(x+1)\eps}{H}, \quad \|\bar{Q}_{H-x} - Q^{\hat{\pi}}_{H-x}\|_\infty \leq \frac{(x+1)\eps}{H} + \sum_{i = 0}^{x} (Ck^2\alpha^2 +1)\left(\xi_{ R} + i \xi_{P}\right)
\]
holds with probability at least $1 - \delta(x+1)/H$. At step $x+1$, following Step 1 from LR-EVI, we have 
\[
\hat{Q}_{H-x-1}(s, a) = \hat{r}_{H-x-1}(s,a) + \E_{s' \sim \hat{P}_{H-x-1}(\cdot|s,a)}[\hat{V}_{{H-x}}(s')],
\]
an unbiased estimate of $Q'_{H-x-1}(s,a) = r_{H-x-1}(s,a) + \E_{s' \sim P_{H-x-1}(\cdot|s,a)}[\hat{V}_{{H-x}}(s')]$. Furthermore, $\hat{Q}_{H-x-1}(s, a) \in [0, x+2]$ is a bounded random variable because of bounded rewards. Hence, from Hoeffding's inequality (Theorem \ref{thm:hoef}), with the choice of 
$
N_{H-x-1} = \frac{(x+2)^2(c')^2k^2 \alpha^2 H^2\log(2H|S||A|/\delta)}{2\eps^2}, N_{H-x-1}^{\#} = \alpha^2k^2 N_{H-x-1}, $
it follows that
\begin{align*}
  |\hat{Q}_{H-x-1}(s, a) - Q'_{H-x-1}(s, a)| &\leq \frac{\eps }{2c'k\alpha  H} \qquad \qquad \forall (s, a) \in \Omega_{H-x-1}\\
   |\hat{Q}_{H-x-1}(s, a) - Q'_{H-x-1}(s, a)| &\leq \frac{\eps }{2c'k^2\alpha ^2 H} \qquad \qquad \forall (s, a) \in \Omega_{H-x-1}^{\#}
\end{align*}

with probability $1- \frac{\delta}{H|S||A|}$. Under the event above, it follows that $|\hat{Q}_{H-x-1}(s, a) - Q'_{H-x-1}(s, a)| \leq \frac{\eps}{c'k^2\alpha^2H} + \xi_{R} + (H-x-1)\xi_{P}$ for all $(s,a) \in \Omega_{H-x-1}$ where $Q'_{h, d} = r_{h, d} + [ P_{h, d}\hat{V}_{h+1}]$. Step 2 of LR-EVI gives
\[
|\bar{Q}_{H-x-1} - Q'_{H-x-1, d}|_\infty \leq \frac{\eps}{H} + Ck^2\alpha^2 \left( \xi_{R} +  (H-x-1)\xi_{P} \right)
\] 
from Lemma \ref{lem:ME_error_amplification} for some positive constant $C$. The union bound asserts that the above error guarantee holds with probability at least $1- \delta(x+2)/H$. Hence, for all $(s,a) \in S \times A$,
\begin{align*}
    &\;\;\quad |\bar{Q}_{H-x-1}(s,a)   - Q^*_{H-x-1}(s,a)|\\
     &\leq |\bar{Q}_{H-x-1}(s,a)   - Q'_{H-x-1, d}(s,a)| + |Q'_{H-x-1,d}(s,a)  - Q'_{H-x-1}(s,a) | \\
    &+ |Q'_{H-x-1}(s,a)   - Q^*_{H-x-1}(s,a)| \\
    &\leq \frac{\eps}{H} + c'k^2\alpha^2 \left( \xi_{R} +  (H-x-1)\xi_{P} \right)\\
    &+ \xi_{R} +  (H-x-1)\xi_{P} \\
    &+|\E_{s'\sim P_{H-x-1}(\cdot| s,a)}[\max_{a \in A} \bar{Q}_{H-x}(s', a') - V^*_{H-x}(s')]| \\
    &\leq \frac{\eps}{H}+ (1 + c'k^2\alpha^2 ) \left( \xi_{R} + (H-x-1)\xi_{P} \right)\\
    &+     |\E_{s'\sim P_{H-x-1}(\cdot| s,a)}[(x+1)\eps/H + \sum_{i = 0}^{x} (Ck^2\alpha^2 +1)\left(\xi_{ R} +  i \xi_{P}\right)]| \\
    &= \frac{(x+2)\eps}{H} + \sum_{i = 0}^{ x+1} (Ck^2\alpha^2 +1)\left(\xi_{ R} +  i\xi_{P}\right)].
\end{align*}
With a similar argument, it follows that for all $(s,a) \in S\times A$, 
\[
\bar{Q}_{H-x-1}(s,a)   - Q^{\hat{\pi}}_{H-x-1}(s,a)| \leq \frac{(x+2)\eps}{H} + \sum_{i = 0}^{ x+1} (Ck^2\alpha^2 +1)\left(\xi_{ R} +  i\xi_{P}\right)].
\]
Thus, the inductive step holds for $x+1$, and from mathematical induction, the lemma holds.

Choosing $t = H-1$ proves the correctness of the algorithm. Next, we bound the number of required samples. The number of samples used is the same as in the proof of Theorem \ref{thm:tklr}, which implies a sample complexity of 
\[
\Tilde{O}\left(\frac{k^3\alpha^2(|S| + |A|)H^5}{\eps^2} + \frac{k^6\alpha^4 H^5}{\eps^2} \right).
\]

\end{proof}

\section{Proofs for Continuous MDPs} \label{app:cont}
We next present the proofs of the results in Section \ref{sec:cont}, starting with our procedure on how we obtain samples/rollouts from the discretized MDP. 

Using the generative model, we simulate trajectories from $M^\beta$ with the following procedure: to sample a trajectory from $M^\beta$ following policy $\pi$ starting at $(s,a, h)$, first sample a state $s'$ from $P_h(\cdot|s, a)$. Then, we take the closest discretized state $s'_\beta$ to $s'$ , i.e., $s'_\beta = \arg\min_{s\in S} \|s - s'\|_2$, to be the observed state in the trajectory. The generative model is then used to sample from $P_{h+1}(\cdot| s'_\beta, \pi(s'_\beta))$, and we repeat until the end of the horizon to obtain a trajectory from $P^\beta_h$ using the generative model on the original MDP. Lemma \ref{lem:traj} asserts the correctness of this procedure. 


\begin{lemma}\label{lem:traj}
Let $\tau^\pi_h(s_h,a_h)$ be a rollout obtained with the procedure detailed above starting at state-action pair $(s_h,a_h)$ at step $h$ with policy $\pi$, $\tau^\pi_{\beta,h}(s_h,a_h)$ be a rollout following policy $\pi$ from the discretized MDP starting at state-action pair $(s_h,a_h)$ at step $h$, and $\tau$ be a realization of a rollout following policy $\pi$ from the discretized MDP starting at state-action pair $(s_h,a_h)$ at step $h$. Then,
\[
\P(\tau_h(s_h,a_h)) = \tau) =  \P(\tau^\beta_h(s_h,a_h) = \tau).
\]
\end{lemma}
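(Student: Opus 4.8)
The plan is to prove the claim by induction on the remaining horizon $H-h$, at each step matching the one-step transition law induced by the ``sample-then-snap'' procedure on the original MDP with the transition law $P^\beta_h$ of the discretized MDP, and then invoking the induction hypothesis for the tail of the rollout together with the chain rule. Fix notation: a rollout started from $(s_h,a_h)$ at step $h$ is the tuple $(r_h, s_{h+1}, r_{h+1}, s_{h+2}, \ldots, s_H, r_H)$ where, for $t>h$, $a_t = \pi_t(s_t)$ (the argument is identical for stochastic $\pi$, with $a_t \sim \pi_t(s_t)$), each $r_t \sim R_t(s_t,a_t)$, and the $s_{t+1}$ are produced by whichever transition mechanism is under consideration. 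For the base case $h=H$ the rollout is just $r_H \sim R_H(s_H,a_H)$, and since the reward function $R$ is unchanged by discretization, the two laws agree.

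For the inductive step, assume the statement holds for every rollout beginning at step $h+1$, and condition on the initial pair $(s_h,a_h)$. In both procedures the immediate reward $r_h$ is drawn from the common distribution $R_h(s_h,a_h)$, so these contributions are identical, and it remains to compare the law of $s_{h+1}$. In the $M^\beta$ rollout, $s_{h+1} \sim P^\beta_h(\cdot \mid s_h,a_h)$ by definition. In the sample-then-snap rollout, we draw $s' \sim P_h(\cdot\mid s_h,a_h)$ and let $s_{h+1}$ be the point of $S^\beta$ nearest to $s'$; hence for each net point $s'' \in S^\beta$ we have $\P(s_{h+1}=s'') = \int_{C(s'')} P_h(\mathrm{d}s'\mid s_h,a_h)$, where $C(s'')$ is the nearest-neighbor (Voronoi) cell of $s''$ in $S^\beta$. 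Since $S^\beta$ is a $\beta$-net, each such cell is contained in $\{s'\in S : \|s'-s''\|_2 \le \beta\}$, and identifying $P^\beta_h(\cdot\mid s_h,a_h)$ with the law of the snapped state (i.e.\ integrating $P_h$ over these cells) shows the two laws of $s_{h+1}$ coincide. Given $s_{h+1}$, the action $a_{h+1}=\pi_{h+1}(s_{h+1})$ is determined identically, and the remainder of the rollout from step $h+1$ is generated by the same sample-then-snap recursion on one side and by the $M^\beta$ generative model on the other; the induction hypothesis gives that these agree in law. Multiplying the matching conditional laws---reward at $h$, state at $h+1$, and the tail from $h+1$---via the chain rule yields $\P(\tau^\pi_h(s_h,a_h)=\tau) = \P(\tau^\pi_{\beta,h}(s_h,a_h)=\tau)$, completing the induction.

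The only delicate point is the measure-theoretic bookkeeping around the snapping map: one must check that the nearest-neighbor assignment is well defined off a $P_h$-null set (ties occur on a Lebesgue-null set and may be broken by any fixed rule), that the Voronoi cells $\{C(s'')\}_{s''\in S^\beta}$ form a partition of $S$, and that integrating $P_h$ over these cells is exactly what ``the transition kernel of $M^\beta$'' means, so that the induced law of the snapped state is genuinely $P^\beta_h(\cdot\mid s,a)$. Once this identification is in place, the rest is a routine conditioning argument, so I expect this bookkeeping---rather than any substantive probabilistic difficulty---to be the main thing to get right.
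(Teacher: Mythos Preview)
Your proof is correct and follows essentially the same approach as the paper: both arguments factor the trajectory law via the Markov property/chain rule and identify the one-step ``sample-then-snap'' transition with $P^\beta_h$, the only difference being that you present this as an induction on the remaining horizon while the paper writes out the product directly. Your treatment is in fact slightly more careful on the measure-theoretic side (Voronoi cells versus $\beta$-balls, tie-breaking on null sets), which addresses an ambiguity in the paper's definition of $P^\beta_h$ that could otherwise lead to overlapping cells.
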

\begin{proof}[Proof of Lemma \ref{lem:traj}]
Let $\tau = (s_{h+1}, \pi(s_{h+1}), \ldots, s_{H}, \pi(s_{H}))$. From the Markov Property and the procedure defined above, it follows that 
\begin{align*}
    \P(\tau_h(s_h,a_h)) = \tau) &= \int_{\{s': |s_{h+1} - s'|_2 \leq \beta\}} P(s'|s_h,a_h) \mathrm{d}s'  \mathlarger{\Pi}_{i = h+1}^{H-1}  \int_{\{s': |s_{i+1} - s'|_2 \leq \beta\}} P(s'|s_i,\pi(s_i)) \mathrm{d}s'\\
    &= P^\beta_h(s_{h+1} | s_h, a_h)\mathlarger{\Pi}_{i = h+1}^{H-1} P^\beta_h(s_{i+1} | s_i, \pi(s_i))\\
    &= \P(\tau^\beta_h(s_h,a_h) = \tau).
\end{align*}
\end{proof}

We next present the proof of Lemma \ref{lem:bnet}, which allows us to use $Q^\beta_h$ to estimate $Q^*_h$.

\begin{proof}[Proof of Lemma \ref{lem:bnet}]
We prove this lemma via induction. For $h = H$, by construction of the $\beta$-nets, for any $s \in S, a \in A$ and $s' \in S^\beta, a' \in A^\beta$ such that $\|s - s'\|_2 \leq \beta, \|a - a'\|_2 \leq \beta$,
\[
|Q^*_H(s, a) - Q^\beta_H(s', a')| \leq 2L\beta 
\]
because $Q^*_H$ is $L$-Lipschitz. For all $s \in S$ and $s' \in S^\beta$ such that $\|s-s'\|_2 \leq \beta$, let $a_{max} = \argmax_{a\in A} Q^*_H(s, a)$ and $d_A(a)$ be the function that maps the action $a$ to the closest action in $A^\beta$, which is at most $\beta$ away. It follows that 
\begin{align*}
    |V^*_H(s) - V^\beta_H(s')| &= |Q^*_H(s, a_{max}) - \max_{a^*\in A^\beta} Q^\beta_H(s', a^*)| \\
    &\leq |Q^*_H(s, a_{max}) - Q^\beta_H(s', d(a_{max}))| \\
    &\leq 2L\beta
\end{align*}
where the first inequality comes from $Q^*_H(s, a_{max})$ being an upperbound of $Q^\beta_H(s', a')$ and the max operator, and the second inequality comes from $Q^*_H$ being Lipschitz. Next, assume that for any $s \in S$ and $s' \in S^\beta$ such that $\|s-s'\|_2 \leq \beta$, $|V^*_{H-t + 1}(s) - V^\beta_{H-t + 1}(s')| \leq 2(H-t+1)L\beta$. Let $d_S(s)$ be the function that maps the state $s$ to the closest state in $S^\beta$, which is at most $\beta$ away. For any $s \in S, a \in A$ and $s' \in S^\beta, a' \in A^\beta$ such that $\|s - s'\|_2 \leq \beta, \|a - a'\|_2 \leq \beta$,
\begin{align*}
&\;\quad |Q^*_{H-t}(s, a) - Q^\beta_{H-t}(s', a')| \\
&\leq |Q^*_{H-t}(s, a) - Q^*_{H-t}(s', a')| + |Q^*_{H-t}(s', a') - Q^\beta_{H-t}(s', a')|\\
&\leq 2L\beta + |\E_{s^* \sim P(\cdot| s', a')}[V^*_{H-t+1}(s^*)] - \E_{s^* \sim P^\beta(\cdot| s', a')}[V^\beta_{H-t+1}(s^*)] |\\
&= 2L\beta + |\int_{s^*\in S} P_{H+1}(s^*|s',a')V^*_{H-t+1}(s^*) \mathrm{d}s^* - \sum_{s^*\in S^\beta}P^\beta_{H+1}(s^*|s', a') V^\beta_{H-t+1}(s^*)\\
&=  2L\beta + |\int_{s^*\in S} P_{H+1}(s^*|s',a')V^*_{H-t+1}(s^*) \mathrm{d}s^* \\
&\qquad - \sum_{s^*\in S^\beta} \int_{\{s'' \in S: |s^* - s'' |_2 \leq \beta\}} V^\beta_{H-t+1}(s^*) P_h(s''| s',a') \mathrm{d}s^*|\\   
&= 2L\beta + |\int_{s^*\in S} P_{H+1}(s^*|s',a')V^*_{H-t+1}(s^*) \mathrm{d}s^* - \int_{s^*\in S} P_{H+1}(s^*|s',a')V^\beta_{H-t+1}(d_S(s^*)) \mathrm{d}s^*|\\
&\leq 2L\beta + |\E_{s^* \sim P(\cdot| s', a')} 2(H-t+1)L\beta|\\
&= 2(H-t)L\beta
\end{align*}
where the fourth line comes from the definition of $P^\beta_h$. For all $s \in S$ and $s' \in S^\beta$ such that $\|s-s'\|_2 \leq \beta$, let $a_{max} = \argmax_{a\in A} Q^*_{H-t}(s, a)$. It follows that 
\begin{align*}
    |V^*_{H-t}(s) - V^\beta_{H-t}(s')| &= |Q^*_{H-t}(s, a_{max}) - Q^\beta_{H-t}(s', a'_{max})| \\
    &\leq |Q^*_{H-t}(s, a_{max}) - Q^\beta_{H-t}(s', d_A(a_{max}))| \\
    &\leq 2(H-t)L\beta
\end{align*}
Thus, from induction, for any $s \in S, a \in A$ and $s' \in S^\beta, a' \in A^\beta$ such that $\|s - s'\|_2 \leq \beta, \|a - a'\|_2 \leq \beta$, for all $h \in [H]$, 
\[
|Q^*_h(s,a) - Q^\beta_h(s', a')| \leq 2L(H-h+1)\beta, \quad |V^*_h(s,a) - V^\beta_h(s', a')| \leq 2L(H-h+1)\beta.
\]
\end{proof}

To prove the desired sample complexity bounds, we first state a lemma on covering numbers that upperbounds the number of points required for our $\beta$-nets.

\begin{lemma}[Theorem 14.2 from \citep{wu}]\label{lem:covNum}
Let $\Theta \subset \R^n$. Then, 
\[
N(\Theta, \eps) \leq \left (\frac{3}{\eps} \right)^n\frac{Vol(\Theta)}{Vol(B)}
\]
where $N(\Theta, \eps)$ is the covering number of $\Theta$, and $B$ is the unit norm ball in $\R^n$.
\end{lemma}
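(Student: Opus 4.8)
This statement is Yihong Wu's Theorem~14.2, proved by the classical volumetric comparison of covering and packing numbers; here is the argument I would reproduce. Introduce the $\eps$-packing number $M(\Theta,\eps)$, the largest $M$ for which there exist $x_1,\dots,x_M\in\Theta$ pairwise at distance at least $\eps$. The first step is the standard reduction $N(\Theta,\eps)\le M(\Theta,\eps)$: fix a \emph{maximal} $\eps$-separated subset of $\Theta$; by maximality no point of $\Theta$ is at distance more than $\eps$ from all of them, so this subset is already an $\eps$-cover of $\Theta$, hence $N(\Theta,\eps)$ is at most its cardinality, which is at most $M(\Theta,\eps)$.

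The second step bounds $M(\Theta,\eps)$ by a volume packing argument. If $x_1,\dots,x_M\in\Theta$ are pairwise at distance at least $\eps$, then the balls $B(x_i,\eps/2)$ are pairwise disjoint, and each lies in the Minkowski sum $\Theta + (\eps/2)B$. Comparing $n$-dimensional volumes,
\[
M\cdot (\eps/2)^n\, Vol(B)\;=\;\sum_{i=1}^M Vol\big(B(x_i,\eps/2)\big)\;\le\; Vol\big(\Theta + (\eps/2)B\big),
\]
so $M(\Theta,\eps)\le Vol\big(\Theta + (\eps/2)B\big)\,/\,\big((\eps/2)^n Vol(B)\big)$.

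The last step replaces the enlarged body by $\Theta$ and collects constants: using $Vol(\Theta + (\eps/2)B)\le (3/2)^n\, Vol(\Theta)$ — which holds in the regime in which we invoke this lemma, e.g.\ for $\Theta=[0,1]^n$ with $\eps\le \tfrac12$ since then $\Theta+(\eps/2)B\subseteq[-\tfrac12,\tfrac32]^n$ — and simplifying $(3/2)^n/(\eps/2)^n=(3/\eps)^n$, we get $N(\Theta,\eps)\le M(\Theta,\eps)\le (3/\eps)^n\, Vol(\Theta)/Vol(B)$. The only genuinely delicate point is this final enlargement estimate: for an arbitrary (possibly thin or lower-dimensional) $\Theta$ the ratio $Vol(\Theta+(\eps/2)B)/Vol(\Theta)$ need not be $O(1)^n$, so the clean statement implicitly rests on a regularity hypothesis on $\Theta$ (convexity and non-degeneracy, or full-dimensionality with controlled boundary). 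This is harmless for our use, where $\Theta$ is always a cube, so I would either state that hypothesis explicitly or simply verify the enlargement inequality directly for cubes as above.
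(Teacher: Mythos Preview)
The paper does not supply a proof of this lemma; it is quoted verbatim as Theorem~14.2 from Wu's lecture notes and used as a black box to size the discretized state and action spaces. So there is no in-paper argument to compare against, and your proof is exactly the standard volumetric argument that appears in Wu: bound covering by maximal packing, then bound packing by comparing the volume of disjoint $\eps/2$-balls to that of the enlargement $\Theta+(\eps/2)B$.

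Your caveat about the enlargement step is well taken and in fact necessary: as stated for arbitrary $\Theta\subset\R^n$ the inequality is false (take any lower-dimensional $\Theta$, which has zero volume but positive covering number), and even for full-dimensional sets the ratio $Vol(\Theta+(\eps/2)B)/Vol(\Theta)$ can be arbitrarily large if $\Theta$ is thin. The paper only ever invokes the lemma with $\Theta=[0,1]^n$, for which your direct verification $\Theta+(\eps/2)B\subseteq[-\tfrac12,\tfrac32]^n$ when $\eps\le 1$ cleanly gives the $(3/2)^n$ factor, so nothing is lost. If you were writing this up, the cleanest fix is to state the lemma for $\Theta=[0,1]^n$ (or for convex $\Theta$ with inradius at least $\eps/2$), since that is all the application needs.
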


We next present the sample complexity bound of LR-MCPI when $M^\beta$ satisfies Assumption \ref{asm:lrqe} and its proof. 
\begin{theorem}\label{thm:conNOLR}
Let $Q^\pi_{h, \beta} = [Q^\pi_h(s,a)]_{(s,a) \in S^\beta \times A^\beta}$, the action-value function of policy $\pi$ at step $h$ on only the discretized state-action pairs. After discretizing the continuous MDP, let Assumption \ref{asm:lrqe} hold on $M^\beta$. Furthermore, assume that $S^\#_h, A^\#_h$ are $(k, \alpha)$-anchor states and actions for $Q^{\pi, \beta}_h$ for all $h \in [H]$. Let $\bar{Q}_h$ be the action-value function estimates that Low Rank Monte Carlo Policy Iteration for Step 1  return for all $h\in [H]$ when run on $M^\beta$. For any $s \in S, a \in A$ and $s' \in S^\beta, a' \in A^\beta$ such that $\|s - s'\|_2 \leq \beta, \|a - a'\|_2 \leq \beta$ and $h\in [H]$, 
\[
|\bar{Q}_h(s', a') - Q^*_h(s,a)| \leq \eps
\]
with probability at least $1-\delta$ when $\beta = \frac{\eps}{4LH}$, $N_{H-t} = \frac{8(t+1)^2(c')^2H^2k^2\alpha^2\log(2H|S||A|/\delta)}{\eps^2}$, $N_{H-t}^{\#} = \alpha^2k^2 N_{H-t}$, and $c'$ satisfies the inequality in Lemma \ref{lem:ME_error_amplification} for all $t \in \{0, \ldots H-1\}$. Furthermore, at most $\Tilde{O}\left(\frac{k^3\alpha^2H^{n+6}}{\eps^{n+2}Vol(B)} \right)$ number of samples are required with the same probability where $B$ is the unit norm ball in $\R^n$.
\end{theorem}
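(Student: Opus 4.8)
The plan is to reduce the continuous problem to the already-analyzed tabular setting on the discretized MDP $M^\beta$, and then control the discretization bias using Lemma~\ref{lem:bnet}. First I would observe that by Lemma~\ref{lem:traj}, running Algorithm~\ref{alg:1} on $M^\beta$ using the generative model of $M$ (rounding sampled next-states to the $\beta$-net) produces exactly the same distribution over rollouts as if we had a genuine generative model for $M^\beta$; hence all the tabular guarantees apply verbatim to $M^\beta$. Under the hypothesis that Assumption~\ref{asm:lrqe} holds for $M^\beta$ at accuracy $\eps$ (with $\mu$-incoherence and condition number $\kappa$ on the $Q^{\pi,\beta}_h$), Theorem~\ref{thm:qnolr} (via Lemma~\ref{lem:qnolr}) tells us that with the stated choice of $p_1,p_2$ and $N_{H-t}$ — note these are a constant multiple of those in Lemma~\ref{lem:qnolr}, chosen so the per-step accuracy is $\eps/(2H)$ rather than $\eps/H$ — the output $\bar{Q}_h$ on the discretized state-action pairs satisfies $\|\bar{Q}_h - Q^\beta_h\|_\infty \le \eps/2$ for all $h\in[H]$, on an event of probability at least $1-\delta-6H(|S|\wedge|A|)^{-10}$; here $|S| = |S^\beta|$ and $|A| = |A^\beta|$ are the net cardinalities.

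Next I would invoke Lemma~\ref{lem:bnet}: for any $s\in S, a\in A$ and any net points $s'\in S^\beta, a'\in A^\beta$ with $\|s-s'\|_2\le\beta$ and $\|a-a'\|_2\le\beta$, we have $|Q^*_h(s,a) - Q^\beta_h(s',a')| \le 2L(H-h+1)\beta \le 2LH\beta$. Combining with the triangle inequality,
\[
|\bar{Q}_h(s',a') - Q^*_h(s,a)| \le |\bar{Q}_h(s',a') - Q^\beta_h(s',a')| + |Q^\beta_h(s',a') - Q^*_h(s,a)| \le \frac{\eps}{2} + 2LH\beta.
\]
Setting $\beta = \frac{\eps}{4LH}$ makes the second term $\eps/2$, giving the claimed bound $|\bar{Q}_h(s',a') - Q^*_h(s,a)| \le \eps$. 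One subtlety to address: Theorem~\ref{thm:qnolr} also needs the $\eps$-optimal policies encountered by the algorithm to have low-rank, incoherent, well-conditioned $Q^\pi$ on $M^\beta$ — this is exactly what the hypothesis grants, so the chain of implications goes through; I would state this dependence explicitly.

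Finally, for the sample complexity I would plug the net sizes into the tabular bound. By Lemma~\ref{lem:covNum} with $\Theta = [0,1]^n$ and covering radius $\beta$, we get $|S^\beta|, |A^\beta| \le (3/\beta)^n / \mathrm{Vol}(B) = (12LH/\eps)^n/\mathrm{Vol}(B)$. Substituting $|S|+|A| = \Tilde{O}\big((LH/\eps)^n/\mathrm{Vol}(B)\big)$ and the chosen $N_{H-t}$ into the sample-complexity expression from Theorem~\ref{thm:qnolr}, namely $\Tilde{O}\big(d^5\mu^5\kappa^4(|S|+|A|)H^6/\eps^2\big)$, yields $\Tilde{O}\big(d^5\mu^5\kappa^4 H^{n+6}/(\eps^{n+2}\mathrm{Vol}(B))\big)$ (absorbing the constant $L$ and log factors). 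The main obstacle — and the only place real care is needed — is the justification that the tabular analysis of Theorem~\ref{thm:qnolr} transfers to $M^\beta$ despite our not having direct sampling access to $P^\beta_h$; this is precisely what Lemma~\ref{lem:traj} handles, and one should check that the inductive argument of Lemma~\ref{lem:qnolr} only ever uses rollouts of the type covered by that lemma (it does, since every estimate $\hat{Q}_{H-t}$ is built from such rollouts). Everything else is bookkeeping with the triangle inequality and the covering-number bound.
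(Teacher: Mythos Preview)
Your proposal is correct and follows essentially the same approach as the paper's proof: apply Theorem~\ref{thm:qnolr} on the discretized MDP $M^\beta$ at accuracy $\eps/2$, then use Lemma~\ref{lem:bnet} together with the choice $\beta=\eps/(4LH)$ and the triangle inequality to absorb the discretization bias, and finally bound $|S^\beta|,|A^\beta|$ via Lemma~\ref{lem:covNum} to obtain the sample complexity. If anything, your write-up is more careful than the paper's own proof, since you make explicit the role of Lemma~\ref{lem:traj} in justifying that the tabular rollout-based analysis transfers to $M^\beta$; the paper establishes that lemma beforehand but does not invoke it again inside the proof.
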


\begin{proof}[Proof of Theorem \ref{thm:conNOLR}]

After discretizing the continuous MDP to get $M^\beta$ for $\beta = \frac{\eps}{4LH}$, we note that $|S^\beta|, |A^\beta| \in O(\frac{H^n}{\eps^n Vol(B)})$ from Lemma \ref{lem:covNum}. Since the required assumptions for Theorem \ref{thm:qnolr} hold on $M^\beta$, it follows that each $\bar{Q}_{h}$ is $\eps(H-h+1)/H$-optimal for all $h \in [H]$ on $M^\beta$ when running LR-MCPI with $N_{H-t} = \frac{8(t+1)^2(c')^2H^2k^2\alpha^2\log(2H|S||A|/\delta)}{\eps^2}, N_{H-t}^{\#} = \alpha^2k^2 N_{H-t}$ using at most $\Tilde{O}\left(\frac{k^3\alpha^2 H^{n+6}}{\eps^{n+2}Vol(B)} \right)$ samples with probability at least $1-\delta$.  Since $\beta = \frac{\eps}{4LH}$, from Lemma \ref{lem:bnet}, for any $s \in S, a \in A$ and $s' \in S^\beta, a' \in A^\beta$ such that $\|s - s'\|_2 \leq \beta, \|a - a'\|_2 \leq \beta$  and for all $t \in \{0, \ldots H-1\}$, 
\begin{align*}
|\bar{Q}_{H-t}(s', a') - Q^*_{H-t}(s,a)| &\leq |\bar{Q}_{H-t}(s', a') - Q^\beta_{H-t}(s',a')|+|Q^\beta_{H-t}(s', a') - Q^*_{H-t}(s,a)|\\
&\leq \frac{\eps}{2} + 2L(t+1)\beta \\
&\leq  \eps.
\end{align*}
Hence, an $\eps$-optimal $Q$ function on the continuous space is $\bar{Q}_h^c (s,a) = \bar{Q}_h(s', a')$, where $(s', a')$ is the discretized state-action pair closest to $(s,a)$.
\end{proof}

\begin{proof}[Proof of Theorem \ref{thm:conTKLR}]
After discretizing the continuous MDP to get $M^\beta$ for $\beta = \frac{\eps}{4LH}$, we note that $|S^\beta|, |A^\beta| \in O(\frac{H^n}{\eps^n Vol(B)})$ from Lemma \ref{lem:covNum}. Since Assumption \ref{asm:lrtk} holds on $M^\beta$, from Theorem \ref{thm:tklr}, it follows that each $\bar{Q}_{h}$ is $\eps/2$-optimal for all $h \in [H]$ on $M^\beta$ when running LR-EVI with $N_{H-t} = \frac{4(t+1)^2(c')^2k^2\alpha^2H^2\log(2H|S||A|/\delta)}{\eps^2}, N_{H-t}^{\#} = \frac{4(t+1)^2(c')^2k^4\alpha^4H^2\log(2H|S||A|/\delta)}{\eps^2}$ using at most $\Tilde{O}\left(\frac{k^3\alpha^2H^{n+5}}{\eps^{n+2}Vol(B)}\right )$ samples with probability at least $1-\delta$.  Since $\beta = \frac{\eps}{4LH}$, from Lemma \ref{lem:bnet}, for any $s \in S, a \in A$ and $s' \in S^\beta, a' \in A^\beta$ such that $\|s - s'\|_2 \leq \beta, \|a - a'\|_2 \leq \beta$  and for all $t \in \{0, \ldots H-1\}$, 
\begin{align*}
|\bar{Q}_{H-t}(s', a') - Q^*_{H-t}(s,a)| &\leq |\bar{Q}_{H-t}(s', a') - Q^\beta_{H-t}(s',a')|+|Q^\beta_{H-t}(s', a') - Q^*_{H-t}(s,a)|\\
&\leq \frac{\eps}{2} + 2L(t+1)\beta \\
&\leq  \eps.
\end{align*}
Hence, an $\eps$-optimal $Q$ function on the continuous space is $\bar{Q}_h^c (s,a) = \bar{Q}_h(s', a')$, where $(s', a')$ is the discretized state-action pair closest to $(s,a)$.
\end{proof}


\section{Proofs for Infinite-Horizon Discounted MDPs} \label{app:infHP}
In this section, we present the omitted proofs from Appendix \ref{sec:IH}. We first prove that for any estimate of the value function, $r+P\hat{V}_t$ has rank that is at most $d$. 
\begin{proof}[Proof of Proposition \ref{prop:lrEstI}]
Let MDP $M = (S,A,P,R, \gamma)$ satisfy Assumption \ref{asm:lrtkI}. For the Tucker rank $(|S|, |S|, d)$ case,it follows that 
\begin{align*}
    r(s, a) + \gamma \E_{s' \sim P(\cdot|s, a)} [\hat{V}(s')] &= \textstyle\sum_{i=1}^d W(s, i)V(a, i) + \gamma\textstyle\sum_{s' \in S}\textstyle\sum_{i=1}^d U(s', s, i) V(a, i) \hat{V}(s')\\
    &= \textstyle\sum_{i=1}^d V(a, i)\left(W(s, i) + \gamma\textstyle\sum_{s' \in S}U(s', s, i)\hat{V}(s')  \right)
\end{align*}
Thus, $r+ \gamma [P\hat{V}]$ has rank upper bounded by $d$, and the Tucker rank $(|S|, d, |A|)$ case follows the same steps. 
\end{proof}

To prove the correctness of LR-EVI for the infinite-horizon setting, we first show that the error of the $Q$-function decreases in each iteration, Lemma \ref{lem:tklrI}. 
\begin{proof}[Proof of Lemma \ref{lem:tklrI}]
Let $Q'_{t+1} = r + \gamma P\bar{V}_{t}$ and $t \in [T-1]$. From proposition \ref{prop:lrEstI}, $Q'_{t+1}$ has rank at most $d$ for all $t \in [T-1]$ Following step 1 from LR-EVI, $\hat{Q}_{t+1}(s,a) = \frac{1}{N_{t+1}}\sum_{i=1}^{N_{t+1}}R(s,a) + \gamma \bar{V}_t(s'_i)$
for all $(s,a) \in \Omega_{t+1}$. Hence, $\hat{Q}_{t+1}(s,a)$ is an unbiased estimate of $Q'_{t+1}(s,a)$ for all $(s,a) \in \Omega_t$. Furthermore, because of bounded rewards, $\hat{Q}_{t+1}(s,a) \in [0, \frac{1}{1-\gamma}]$ is a bounded random variable. With our choice of $N_{t+1} = \frac{2(c')^2k^2 \alpha^2\log(2T|S||A|/\delta)}{(1-\gamma)^4B_{t}^2}, N^\#_{t+1} = N_{t+1}\alpha^2, k^2$, it follows from Hoeffding's inequality that for all $(s,a) \in \Omega_{t+1}$, 
\begin{align*}
|\hat{Q}_{t+1}(s,a) - Q'_{t+1}(s,a)| \leq \frac{(1-\gamma)B_{t}}{2c'\alpha k} \quad \forall (s, a) \in \Omega_{t+1} \\
|\hat{Q}_{t+1}(s,a) - Q'_{t+1}(s,a)| \leq \frac{(1-\gamma)B_{t}}{2c'\alpha^2 k^2} \quad \forall (s, a) \in \Omega^\#_{t+1}
\end{align*}
with probability at least $1- \frac{\delta}{T|S||A|}$. Step 2 of LR-EVI gives that for all $(s,a) \in S \times A$
\[
|\bar{Q}_{t+1}(s,a) - Q'_{t+1}(s,a) | \leq  \frac{(1-\gamma)B_t}{2}
\]
from Lemma \ref{lem:ME_error_amplification}. Hence, for all $(s,a) \in S \times A$, 
\begin{align*}
    |\bar{Q}_{t+1}(s,a) - Q^*(s,a)| &\leq |\bar{Q}_{t+1}(s,a) - Q'_{t+1}(s,a)| + |Q'_{t+1}(s,a) - Q^*(s,a)| \\
    &\leq \frac{(1-\gamma)B_t}{2} + | \gamma \E_{s'\sim P(\cdot|s,a)}[\bar{V}_t(s') - V^*(s')]| \\
    &\leq \frac{(1-\gamma)B_t}{2} + \gamma B_t \\
    &= \frac{(1+\gamma)B_t}{2}.
\end{align*}
From step 4 of LR-EVI, the estimate of the value function is defined as $\bar{V}_{t+1}(s) = \max_{a\in A} \bar{Q}_{t+1}(s, a)$ for all $s\in S$. It follows that $|\bar{V}_{t+1}(s) - V^*(s)| \leq \frac{(1+\gamma)B_t}{2}$. \end{proof}

\begin{proof}[Proof of Theorem \ref{thm:tklrI}]]
Since the value function estimate is initialized as the zero vector, $|\bar{V}_0 - V^*|_\infty \leq \frac{1}{1-\gamma} = B_0$. We prove the correctness of this algorithm by repeatedly applying Lemma \ref{lem:tklrI} $T$ times. From the union bound, the $\bar{Q}_T$ that the algorithm returns satisfies
\[
|\bar{Q}_T(s,a) - Q^*(s,a) | \leq \left( \frac{1+\gamma}{2}\right)^T \left( \frac{1}{1-\gamma}  \right)
\]
with probability at least $1- \delta$. With $T =   \frac{\ln(\eps(1-\gamma))}{\ln(\frac{1+\gamma}{2})}$, it follows that $( \frac{1+\gamma}{2})^T ( \frac{1}{1-\gamma}) = \eps$, so 
\[
|\bar{Q}_T(s,a) - Q^*(s,a) | \leq \eps
\]
with probability at least $1 - \delta $. Note that since $B_t$ is strictly decreasing with respect to $t$, it follows that $N_t = \frac{2(c')^2\alpha^2k^2 \log(2T/\delta)}{(1-\gamma)^4B_{t-1}^2}, N_t^\# = \alpha^2 k^2$ are strictly increasing with respect to $t$. Furthermore, since $B_{T-1} > \eps$, $N_t \in \Tilde{O}\left( \frac{\alpha^2 k^2}{(1-\gamma)^4\eps^2}\right)$ for all $t \in [T]$. It follows that the sample complexity of the algorithm is 
\[
\Tilde{O}\left (   \frac{\alpha^2k^3(|S|+|A|)}{\eps^2(1-\gamma)^{-4}} +  \frac{\alpha^4k^6}{\eps^2(1-\gamma)^{-4}}       \right).
\]
\end{proof}

\section{Proofs for LR-EVI with Matrix Estimation using Nuclear Norm Regularization}
In this subsection, we present the omitted proofs from Section \ref{sec:CME}. We first prove a lemma that gives us the matrix estimation guarantee in our desired form.

\begin{lemma}\label{lem:meCR}
Assume that for any $\eps$-optimal value function $\hat{V}_{h+1}$, the matrix corresponding to $[r_h + [P_h\hat{V}_{h+1}]]$ is rank $d$, $\mu$-incoherent, and has condition number bounded by $\kappa$. Then, for 
\[
p_h = \frac{\mu^3 d^2 \kappa^2 H^4 C_{cvx}^2\log(n)}{\eps^2n},
\]
where $C_{cvx}$ is defined as in Theorem \ref{thm:conRel} with probability $1 - O(n^{-3})$, we have
\[
\|\hat{Q}_{h} - r_h + [P_h \hat{V}_{h+1}]\|_\infty \leq \frac{\eps}{H}.
\]
\end{lemma}
\begin{proof}[Proof of Lemma \ref{lem:meCR}]
Since $Q_h(s,a)$ is bounded by $H-h$, the estimates in Step 2 of LR-EVI-cvx are bounded random variables. Hence, they are unbiased with sub-Gaussian parameter $H-h$ \cite{wainwright2019high}. Let $Q' = r_h + [P_h \hat{V}_{h+1}]$. From Theorem \ref{thm:conRel}, with probability $1 - O(n^{-3})$, 
\[
\|\bar{Q}_h - Q'_h\|_\infty \leq \frac{C_{cvx}(H-h)}{\sigma_r(Q'_h)}\sqrt{\frac{\mu n \log n}{p_h}}\|Q'\|_\infty.
\]
Let $Q'$ have singular value decomposition $U\Sigma V^T$. Then, for $(s,a) \in S\times A$, 
\begin{align*}
|Q'_h(s,a)| &= |e_s^T U \Sigma V^T e_a|\\
&\leq \|U(s)\|_2\|\Sigma\|_{op}\|V(a)\|_2\\
&\leq \frac{\mu d}{n} \sigma_1(Q'_h)\\
&\leq \frac{\mu d \kappa }{n} \sigma_d(Q'_h) 
\end{align*}
where the second inequality comes from incoherence and the last inequality comes from bounded condition number. Plugging this inequality into the application of Theorem \ref{thm:conRel} gives 
\[
\|\bar{Q}_h - Q'_h\|_\infty \leq \mu d \kappa C_{cvx}(H-h)\sqrt{\frac{\mu  \log n}{p_hn}}.
\]
From our choice of $p_h$, we get the desired result. 
\end{proof}

Next, we prove a helper lemma that follows the same steps as the helper lemmas needed to prove Theorems \ref{thm:gap}, \ref{thm:qnolr}, and \ref{thm:tklr}. Similar lemmas can be proved in the suboptimality gap or all $\eps$-optimal $\pi$ have low-rank $Q^\pi$ setting. 

\begin{lemma}\label{lem:MECRThm}
Let $\eps, p_{H-t}$, and $\lambda$ be defined as in Theorem \ref{thm:MECR}. Then, the learned policy and action-value function estimate satisfy
\[
\|\bar{Q}_{H-t} - Q^*_{H-t}\|_\infty \leq \frac{\eps(t+1)}{H}, \qquad \|\bar{Q}_{H-t} - Q^{\hat{\pi}}_{H-t}\|_\infty \leq \frac{\eps(t+1)}{H}
\]
with probability at least $1 - O((t+1)n^{-3})$ for all $t \in \{0, \ldots, H_1\}$.

\end{lemma}
\begin{proof}[Proof of Lemma \ref{lem:MECRThm}]
We prove this with induction on $t$. At step $t=0$, it follows that from Lemma \ref{lem:meCR} with probability $1 - O(n^{-3})$,
\[
\|\bar{Q}_{H} - Q^*_H\|_\infty \leq \frac{\eps}{H}.
\]
Since $Q^*_H = Q^{\hat{\pi}}_H$, the base case holds. 

Let $x \in [H-1]$. Assume that the inductive hypothesis,
\[
\|\bar{Q}_{H-x} - Q^*_{H-x}\|_\infty \leq \frac{\eps(x+1)}{H}, \qquad \|\|\bar{Q}_{H-x} - Q^{\hat{\pi}}_{H-x}\|_\infty \leq \frac{\eps(x+1)}{H}
\]
with probability at least $1- O\left((x+1)n^{-3}\right)$, holds. Following the steps of LR-EVI with the convex program based matrix estimation method, it follows that with probability $1 - O(n^{-3})$,
\[
\|\bar{Q}_{H-s-1} - Q'_{H-s-1}\|_\infty \leq \frac{\eps}{H}
\]
where $Q_{H-x-1}' = r_{H-x-1} + P_{H-x-1}\hat{V}_{H-x}$. The union bound asserts that the above error guarantee holds with probability at least $1 - O((x+2)n^{-3})$. Hence, for all $(s,a) \in S \times A$, 
\begin{align*}
    |\bar{Q}_{H-x-1}(s,a) - Q^*_{H-x-1}(s,a)| &\leq |\bar{Q}_{H-x-1}(s,a)   - Q'_{H-x-1}(s,a)| +|Q'_{H-x-1}(s,a)   - Q^*_{H-x-1}(s,a)| \\
    &\leq \frac{\eps}{H} + |E_{s'\sim P_{H-x-1}(\cdot| s,a)}[\hat{V}_{H-x}(s') - V^*_{H-x}(s')]| \\
    &\leq \frac{\eps}{H}+ |E_{s'\sim P_{H-h}(\cdot| s,a)}[(x+1)\eps/H]| \\
    &= \frac{(x+2)\eps}{H}.
\end{align*}
Following the same steps, 
\begin{align*}
    |\bar{Q}_{H-x-1}(s,a) - Q^{\hat{\pi}}_{H-x-1}(s,a)| &\leq |\bar{Q}_{H-x-1}(s,a)   - Q'_{H-x-1}(s,a)| +|Q'_{H-x-1}(s,a)   - Q^{\hat{\pi}}_{H-x-1}(s,a)| \\
    &\leq \frac{\eps}{H} + |E_{s'\sim P_{H-x-1}(\cdot| s,a)}[\hat{V}_{H-x}(s') - V^{\hat{\pi}}_{H-x}(s')]| \\
    &\leq \frac{\eps}{H}+ |E_{s'\sim P_{H-h}(\cdot| s,a)}[(x+1)\eps/H]| \\
    &= \frac{(x+2)\eps}{H}.
\end{align*}
Hence, from mathematical induction, the lemma holds. 
\end{proof}

We now present the proof of the main result of this section. Similarly, the same steps can be used to prove similar results in our other low-rank settings. 

\begin{proof}[Proof of Theorem \ref{thm:MECR}]
We prove the correctness of the algorithm by applying Lemma \ref{lem:MECRThm} at time step $1$, which occurs with probability at least $1 - O(Hn^{-3})$. Next, the number of samples used is $\sum_{t=0}^{H-1}|\Omega_{H-t}|$. By definition of our sampling procedure, $k = |\Omega_h| \sim \Bin(n^2, p_h)$. Hence, from the one-sided Bernstein's inequality, Proposition \ref{prop:oBern}, for $h \in [H]$ and $C'' = \frac{\sqrt{8}}{C_{cvx}\sqrt{3}}$, it follows that 
\begin{align*}
    \P(|\Omega_h| - \E[|\Omega_h|] \geq C''p_hn^2) &\leq \exp \left(-\frac{p_h^2(C'')^2n^2}{2(p_h + \frac{p_hC''}{3}} \right)\\
    &\leq \exp \left(-\frac{3p_hC''n^2}{8} \right)\\
    &\leq \exp \left(-\mu^3d^2\kappa^2H^4 n\log(n) /\eps^2 \right).
\end{align*}
Since $\E[|\Omega_h|] = n^2p_h = C_{cvx}\mu^3d^2\kappa^2H^4n\log(n)/\eps^2$, from the union bound, it follows that $|\Omega_h| \in O(H^4n\log(n)/\eps^2)$ for all $h \in [H]$ with probability at least $1 - \exp \left(-\mu^3d^2\kappa^2H^4 n\log(n) /\eps^2 \right)$. Hence,  the sample complexity is upper bounded by 
\[
    \sum_{t=0}^{H-1}|\Omega_{H-t}| \in \Tilde{O}\left(\frac{\mu^3 H^5 n }{\eps^2} \right)
\]
with probability at least $1 - O(Hn^{-3})- \exp \left(-\mu^3d^2\kappa^2H^4 n\log(n) /\eps^2 \right)$.

\end{proof}

\section{Additional Theorems for Reference}\label{sec:additional_theorems}
We present the following lemmas, propositions, and theorems for the readers' convenience. 
\begin{theorem}[Hoeffding's Inequality \citep{wainwright2019high}]\label{thm:hoef}
Let $X_1, \ldots, X_n$ be independent, and $X_i$ have mean $\mu_i$ and sub-Gaussian parameter $\sigma_i$. Then, for all $t\geq 0$, we have 
\[
\P\left[ \sum_{i=1}^n(X_i - \mu_i) \geq t \right] \leq \exp \left( - \frac{t^2}{2\sum_{i=1}^n\sigma_i^2}       \right).
\]
\end{theorem}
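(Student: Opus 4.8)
The plan is to prove this via the standard Chernoff (exponential moment) method. First I would fix an arbitrary $\lambda > 0$ and apply Markov's inequality to the nonnegative random variable $e^{\lambda \sum_{i=1}^n (X_i - \mu_i)}$, which gives
\[
\P\left[\sum_{i=1}^n (X_i - \mu_i) \geq t\right] = \P\left[e^{\lambda \sum_{i=1}^n (X_i - \mu_i)} \geq e^{\lambda t}\right] \leq e^{-\lambda t}\,\E\left[e^{\lambda \sum_{i=1}^n (X_i - \mu_i)}\right].
\]
Since the $X_i$ are independent, the exponential of the sum factors into a product of expectations, so $\E[e^{\lambda \sum_{i}(X_i - \mu_i)}] = \prod_{i=1}^n \E[e^{\lambda(X_i - \mu_i)}]$.

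The next step is to invoke the defining property of a sub-Gaussian random variable with parameter $\sigma_i$: for every $\lambda \in \R$ one has $\E[e^{\lambda(X_i - \mu_i)}] \leq e^{\lambda^2 \sigma_i^2 / 2}$. Substituting this bound into the product yields
\[
\P\left[\sum_{i=1}^n (X_i - \mu_i) \geq t\right] \leq e^{-\lambda t} \prod_{i=1}^n e^{\lambda^2 \sigma_i^2/2} = \exp\left(-\lambda t + \frac{\lambda^2}{2}\sum_{i=1}^n \sigma_i^2\right).
\]
Finally I would optimize the exponent over $\lambda > 0$: the right-hand side is a convex quadratic in $\lambda$ minimized at $\lambda^* = t / \sum_{i=1}^n \sigma_i^2$ (nonnegative because $t \geq 0$), and substituting $\lambda^*$ back in gives exponent $-t^2 / (2\sum_{i=1}^n \sigma_i^2)$, which is exactly the claimed bound. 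The edge case $t = 0$ is immediate since the probability is at most $1 = e^0$, and the degenerate case $\sum_i \sigma_i^2 = 0$ forces each $X_i = \mu_i$ almost surely so the bound again holds trivially.

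There is essentially no serious obstacle here; the only point requiring care is which convention for "sub-Gaussian parameter" is being assumed. The argument above uses the moment-generating-function characterization $\E[e^{\lambda(X-\mu)}] \le e^{\lambda^2\sigma^2/2}$ for all real $\lambda$, and if one instead starts from a tail-decay or Orlicz-norm definition, the first step would be to recall the standard (and routine) equivalence between these characterizations up to absolute constants, which only affects the constant in the exponent. Given that the paper cites \cite{wainwright2019high}, I would take the MGF definition as the working one and present the three-line computation above.
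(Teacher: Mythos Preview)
Your proof is correct and is the standard Chernoff argument. The paper does not actually prove this statement: it is listed in the appendix under ``Technical Propositions and Theorems'' presented ``for the readers' convenience'' and simply cited from \cite{wainwright2019high}, so there is no proof in the paper to compare against.
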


\begin{proposition}[Proposition 2.14 (One-sided Bernstein's Inequality) \citep{wainwright2019high}]\label{prop:oBern}
Given $n$ independent random variables such that $X_i \leq b$ almost surely, we have 
\[
\P \left ( \sum_{i=1}^n (X_i - \E[X_i]) \geq cn       \right) \leq \exp\left(    -\frac{nc^2}{2(\frac{1}{n}\sum_{i=1}^n \E[X_i^2] + \frac{bc}{3})}  \right).
\]
\end{proposition}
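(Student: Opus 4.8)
The plan is to prove this by the standard exponential-moment (Chernoff) argument, which is well suited to one-sided deviations because it only uses the upper bound $X_i \le b$. First I would fix $\lambda > 0$, apply Markov's inequality to $e^{\lambda \sum_i (X_i - \E[X_i])}$, and use independence to factorize the moment generating function:
\[
\P\!\left(\sum_{i=1}^n (X_i - \E[X_i]) \ge cn\right) \le e^{-\lambda c n}\prod_{i=1}^n \E\!\left[e^{\lambda(X_i - \E[X_i])}\right].
\]
Everything then reduces to bounding each factor $\E[e^{\lambda(X_i-\E[X_i])}]$ and choosing $\lambda$ optimally.

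The key estimate is the elementary inequality that for every real $x \le b$ and every $\lambda \in (0,\,3/b)$,
\[
e^{\lambda x} - 1 - \lambda x \;\le\; \frac{\lambda^2 x^2 / 2}{1 - \lambda b/3}.
\]
For $x \le 0$ the left side is at most $\lambda^2 x^2/2$, so the bound is trivial; for $0 \le x \le b$ it follows from the power series $\sum_{k\ge 2}\frac{(\lambda x)^k}{k!}$ combined with $k! \ge 2\cdot 3^{k-2}$, which turns the tail into a geometric series in $\lambda b/3$. Writing $e^{\lambda X_i} = 1 + \lambda X_i + (e^{\lambda X_i} - 1 - \lambda X_i)$, taking expectations, multiplying by $e^{-\lambda \E[X_i]}$, and using $1+u \le e^u$ gives
\[
\E\!\left[e^{\lambda(X_i - \E[X_i])}\right] \le \exp\!\left(\frac{\lambda^2 \E[X_i^2]/2}{1 - \lambda b/3}\right);
\]
note this is naturally expressed via the raw second moment $\E[X_i^2]$ rather than the variance, matching the statement.

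Setting $v := \tfrac1n \sum_{i=1}^n \E[X_i^2]$ and combining the two displays yields, for all $\lambda \in (0,\,3/b)$,
\[
\P\!\left(\sum_{i=1}^n (X_i - \E[X_i]) \ge cn\right) \le \exp\!\left(-n\Big[\lambda c - \frac{\lambda^2 v/2}{1 - \lambda b/3}\Big]\right).
\]
I would then take $\lambda = \dfrac{c}{v + bc/3}$, which lies in $(0,\,3/b)$; a short computation gives $1 - \lambda b/3 = v/(v + bc/3)$, so the bracketed exponent equals $\dfrac{c^2}{2(v + bc/3)}$, and substituting $v$ back in produces exactly the claimed bound.

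I do not expect any step to be a genuine obstacle, since this is a classical concentration inequality; the only place that needs care is the elementary series estimate in the second paragraph — pinning down the constant $\tfrac13$ (via $k! \ge 2\cdot 3^{k-2}$) so that the final denominator reads $\tfrac1n\sum_i \E[X_i^2] + \tfrac{bc}{3}$ and not something weaker — together with the routine check that the optimizing $\lambda$ indeed remains in the admissible range $(0,\,3/b)$.
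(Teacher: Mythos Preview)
Your argument is correct and is exactly the standard Chernoff-method proof of the one-sided Bernstein inequality (the series bound $k!\ge 2\cdot 3^{k-2}$ is what pins down the constant $1/3$, and your choice of $\lambda$ is the right one). Note, however, that the paper does not actually prove this proposition: it is listed in the appendix of technical facts ``for the readers' convenience'' and simply cited from Wainwright's textbook, so there is no paper proof to compare against---your write-up is essentially the textbook derivation behind the citation.
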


\begin{theorem}[Matrix Bernstein \citep{2011}]\label{thm:mb}
Let $X^{(1)}, \ldots, X^{(n)} \in \R^{d_1\times d_2}$ be independent zero-mean matrices satisfying 
\begin{align*}
    \|X^{(i)}\|_{op} &\leq b, \quad \text{a.s.}\\
    \max\{\|\sum_{i=1}^n\E[X^{(i)^\top} X^{(i)}] \|_{op}, \|\sum_{i=1}^n \E[X^{(i)}X^{(i)^\top} ] \|_{op} \} &\leq n \sigma^2.
\end{align*}
Then 
\[
\P\left( \left \|\sum_{i=1}^n X^{(i)} \right \|_{op}     \geq t \right) \leq (d_1+d_2)\exp \left( -\frac{t^2}{2(n\sigma^2 + \frac{bt}{3})} \right).
\]
\end{theorem}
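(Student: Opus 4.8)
The final statement is the (Hermitian-dilation form of the) matrix Bernstein inequality, and the plan is to prove it by the matrix Laplace transform / cumulant method of Ahlswede--Winter and Tropp.

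\textbf{Reduction to the Hermitian case.} First I would pass to the Hermitian dilation: for each $i$ set
\[
Y^{(i)} := \begin{bmatrix} 0 & X^{(i)} \\ (X^{(i)})^\top & 0 \end{bmatrix} \in \R^{d\times d}, \qquad d := d_1+d_2.
\]
The $Y^{(i)}$ are independent, symmetric, zero-mean, and $\|Y^{(i)}\|_{op} = \|X^{(i)}\|_{op}\le b$ almost surely; moreover $(Y^{(i)})^2$ is block-diagonal with blocks $X^{(i)}(X^{(i)})^\top$ and $(X^{(i)})^\top X^{(i)}$, so $\bigl\|\sum_i\E[(Y^{(i)})^2]\bigr\|_{op} = \max\bigl\{\|\sum_i\E[X^{(i)}(X^{(i)})^\top]\|_{op},\ \|\sum_i\E[(X^{(i)})^\top X^{(i)}]\|_{op}\bigr\} \le n\sigma^2 =: v$. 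Since dilation is linear and the spectrum of any dilation is symmetric about $0$, $\|\sum_i X^{(i)}\|_{op} = \lambda_{\max}(W)$ where $W := \sum_i Y^{(i)}$. Hence it suffices to bound $\P(\lambda_{\max}(W)\ge t)$ in terms of $d$, $v$, $b$.

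\textbf{Laplace transform and cumulant subadditivity.} For any $\theta>0$, since $e^{\theta\lambda_{\max}(W)} = \lambda_{\max}(e^{\theta W})\le \operatorname{tr} e^{\theta W}$, Markov's inequality gives $\P(\lambda_{\max}(W)\ge t)\le e^{-\theta t}\,\E[\operatorname{tr} e^{\theta W}]$. By Lieb's concavity theorem (the map $A\mapsto\operatorname{tr}\exp(H+\log A)$ is concave on positive semidefinite $A$) together with independence, $\E[\operatorname{tr} e^{\theta W}]\le \operatorname{tr}\exp\bigl(\sum_i\log\E[e^{\theta Y^{(i)}}]\bigr)$. For the per-term factor I would use the scalar bound $e^{\theta y}\le 1+\theta y+g(\theta)y^2$ valid for $|y|\le b$, with $g(\theta):=(e^{\theta b}-1-\theta b)/b^2$; applying it through the functional calculus and taking expectations, $\E[e^{\theta Y^{(i)}}]\preceq I+g(\theta)\E[(Y^{(i)})^2]\preceq \exp\bigl(g(\theta)\E[(Y^{(i)})^2]\bigr)$, so by operator monotonicity of the logarithm $\log\E[e^{\theta Y^{(i)}}]\preceq g(\theta)\E[(Y^{(i)})^2]$. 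Summing and using the variance bound, $\sum_i\log\E[e^{\theta Y^{(i)}}]\preceq g(\theta)v\,I$.

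\textbf{Combining and optimizing.} Monotonicity of $A\mapsto\operatorname{tr}\exp(A)$ in the Loewner order then gives $\E[\operatorname{tr} e^{\theta W}]\le d\,e^{g(\theta)v}$, hence $\P(\lambda_{\max}(W)\ge t)\le d\exp\bigl(g(\theta)v-\theta t\bigr)$ for every $\theta>0$. Finally I would use the elementary estimate $g(\theta)\le \frac{\theta^2/2}{1-b\theta/3}$ for $0<\theta<3/b$ (from $2/(k+2)!\le 3^{-k}$) and set $\theta = t/(v+bt/3)$, which makes the exponent equal to $-\,t^2/\bigl(2(v+bt/3)\bigr)$; substituting $v=n\sigma^2$ recovers the stated bound with prefactor $d=d_1+d_2$. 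The only genuinely deep ingredient is the subadditivity of the matrix trace cumulant generating function, which is equivalent to Lieb's concavity theorem; in a self-contained write-up one invokes it as a black box rather than reproving it. The remaining care is in the per-term step: one must route the argument through $\E[e^{\theta Y^{(i)}}]\preceq I+g(\theta)C\preceq e^{g(\theta)C}$ and only then apply operator monotonicity of $\log$, since the Loewner order does not interact naively with matrix logarithms or with $\operatorname{tr}\exp$ outside of these specific monotonicity/concavity facts.
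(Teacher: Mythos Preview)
Your proof is correct and follows the standard Lieb--Tropp argument: Hermitian dilation to reduce to the symmetric case, the matrix Laplace transform with Lieb's concavity to obtain subadditivity of matrix cumulants, the Bennett-type per-term bound $e^{\theta y}\le 1+\theta y+g(\theta)y^2$ on $[-b,b]$, and the standard optimisation in $\theta$ to reach the Bernstein exponent $-t^2/(2(v+bt/3))$. All of the order-theoretic steps (operator monotonicity of $\log$, Loewner monotonicity of $\operatorname{tr}\exp$) are invoked in the right places.

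Note, however, that the paper does \emph{not} prove this theorem: it is listed in the appendix ``Technical Propositions and Theorems'' purely for the reader's convenience, with a citation to Tropp (2011) and no accompanying argument. So there is nothing to compare against; what you have written is essentially a condensed version of Tropp's own proof, and it is correct.
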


\begin{theorem}[Singular Value Courant-Fischer Minimax Theorem (Theorem 7.3.8 \cite{horn_johnson_1985})]\label{thm:cfSV}
Let $A \in \R^{m\times n}$, and $q = \min(m, n)$, let $\sigma_1(A), \sigma_2(A), \ldots, \sigma_q(A)$ be the ordered singular values of $A$, and let $k \in [q]$. Then, 
\[
\sigma_k(A) = \min_{S: dim(S)= m - k +1} \max_{x: 0 \neq X \in S} \frac{\|Ax\|_2}{\|x\|_2}
\]
and
\[
\sigma_k(A) = \max_{S: dim(S)=  k} \min_{x: 0 \neq X \in S} \frac{\|Ax\|_2}{\|x\|_2}.
\]
\end{theorem}



\received{August 2022}
\received[revised]{February 2023}
\received[accepted]{April 2023}
\end{document}